\documentclass[english]{article}
\usepackage{geometry}

\geometry{verbose,tmargin=1in,bmargin=1in,lmargin=1in,rmargin=1in}
\usepackage[T1]{fontenc}
\usepackage[latin9]{inputenc}
\usepackage{bm}
\usepackage{amsmath,mathtools}
\usepackage{amssymb}
\usepackage[unicode=true,
 bookmarks=false,
 breaklinks=false,pdfborder={0 0 1},colorlinks=false]
 {hyperref}
\hypersetup{
 colorlinks,citecolor=blue,filecolor=blue,linkcolor=blue,urlcolor=blue}

\makeatletter
%%%%%%%%%%%%%%%%%%%%%%%%%%%%%% User specified LaTeX commands. 
\usepackage{amsthm}
\usepackage{comment}
\usepackage{natbib}
\usepackage{booktabs}

\usepackage{graphicx}

\usepackage[linesnumbered,ruled,vlined]{algorithm2e}
%\newcommand\mycommfont[1]{\ttfamily\textcolor{blue}{#1}}
%\SetCommentSty{mycommfont}
\usepackage{algorithmic}

\SetCommentSty{mycommfont}

\usepackage{float}
\usepackage{multirow}
 
\usepackage{dsfont}
\usepackage{tcolorbox}

\usepackage{color}
\definecolor{yxc}{RGB}{255,0,0}
\definecolor{yjc}{RGB}{125,0,0}
\definecolor{ytw}{RGB}{255,69,0}
\definecolor{gen}{RGB}{0,0,200}

\allowdisplaybreaks

  % Indicator

\usepackage{makecell}

%%%%

\title{Minimax-Optimal Multi-Agent Robust Reinforcement Learning}

\author{Yuchen Jiao \thanks{School of Information and Electronics, Beijing Institute of Technology, Beijing, China; Email: \texttt{jiaoyc23@bit.edu.cn}.}\and Gen Li \thanks{Department of Statistics, The Chinese University of Hong Kong, Hong Kong; Email: \texttt{genli@cuhk.edu.hk}.}}
\date{\today}

\makeatother

\begin{document}

\theoremstyle{plain} \newtheorem{lemma}{\textbf{Lemma}}\newtheorem{proposition}{\textbf{Proposition}}\newtheorem{theorem}{\textbf{Theorem}}

\theoremstyle{assumption}\newtheorem{assumption}{\textbf{Assumption}}
\theoremstyle{remark}\newtheorem{remark}{\textbf{Remark}}
\theoremstyle{definition}\newtheorem{definition}{\textbf{Definition}}
\theoremstyle{corollary}\newtheorem{corollary}{\textbf{Corollary}}

\maketitle 

\begin{abstract}
Multi-agent robust reinforcement learning, also known as multi-player robust Markov games (RMGs), is a crucial framework for modeling competitive interactions under environmental uncertainties, with wide applications in multi-agent systems.
However, existing results on sample complexity in RMGs suffer from at least one of three obstacles: restrictive range of uncertainty level or accuracy, the curse of multiple agents, and the barrier of long horizons, all of which cause existing results to significantly exceed the information-theoretic lower bound.
To close this gap, we extend the Q-FTRL algorithm \citep{li2022minimax} to the RMGs in finite-horizon setting, assuming access to a generative model.
We prove that the proposed algorithm achieves an $\varepsilon$-robust coarse correlated equilibrium (CCE) with a sample complexity (up to log factors) of $\widetilde{O}\left(H^3S\sum_{i=1}^mA_i\min\left\{H,1/R\right\}/\varepsilon^2\right)$,
where $S$ denotes the number of states, $A_i$ is the number of actions of the $i$-th agent, $H$ is the finite horizon length, and $R$ is uncertainty level.
We also show that this sample compelxity is minimax optimal by combining an information-theoretic lower bound.
Additionally, in the special case of two-player zero-sum RMGs, the algorithm achieves an $\varepsilon$-robust Nash equilibrium (NE) with the same sample complexity.
%To the best of our knowledge, this is the first result to achieve minimax-optimal sample complexity for RMGs. 
\end{abstract}

\section{Introduction}

The rapidly evolving field of multi-agent reinforcement learning (MARL), also referred to as Markov games (MGs) \citep{littman1994markov, shapley1953stochastic}, explores how a group of agents interacts in a shared, dynamic environment to maximize their individual expected cumulative rewards \citep{zhang2020model, lanctot2019openspiel, silver2017mastering, vinyals2019grandmaster}. This area has found wide applications in fields such as ecosystem management \citep{fang2015security}, strategic decision-making in board games \citep{silver2017mastering}, management science \citep{saloner1991modeling}, and autonomous driving \citep{zhou2020smarts}.
However, in real-world applications, environmental uncertainties---stemming from factors such as system noise, model misalignment, and the sim-to-real gap---can significantly alter both the qualitative outcomes of the game and the cumulatiev rewards that agents receive \citep{slumbers2023game}. It has been demonstrated that when solutions learned in a simulated environment are applied, even a small deviation in the deployed environment from the expected model can result in catastrophic performance drops for one or more agents \citep{shi2024sample, balaji2019deepracer, yeh2021sustainbench, zeng2022resilience, zhang2020robust}.

These challenges motivate the study of robust Markov games (RMGs), which assume that each agent aims to maximize its worst-case cumulative reward in an environment where the transition model is constrained by an uncertainty set centered around an unknown nominal model. Given the competitive nature of the game, the objective of RMGs is to reach an equilibrium where no agent has an incentive to unilaterally change its policy to increase its own payoff. A classical type of equilibrium is the robust Nash equilibrium (NE) \citep{nash1950equilibrium}, where each agent's policy is independent, and no agent can improve its worst-case performance by deviating from its current strategy.
Due to the high computational cost of solving robust NEs, especially in games with more than two agents, this concept is often relaxed to the robust coarse correlated equilibrium (CCE), where agents' policies may be correlated \citep{moulin1978strategically}.

In the context of RMGs, achieving equilibrium with minimal samples is of particular interest, as data is often limited in practical applications. While there has been extensive research on the sample complexity of MGs, studies on RMGs remain limited \citep{kardecs2011discounted},
including the asymptotic convergence analysis \citep{zhang2020robust} and the finite-sample analysis for RMGs \citep{blanchet2024double,ma2023decentralized,shi2024curious,shi2024breaking}. 
Specifically, \citet{ma2023decentralized} showed that robust Correlated Equilibrium (CE) can be achieved with $\widetilde{O}\left(H^5S\max_{1\le i\le m}A_i^2/\varepsilon^2\right)$ samples when the uncertainty level $R$ is sufficiently small, i.e., $R\le\varepsilon/(SH^2)$, where $S$ denotes the number of states, $A_i$ denotes the number of actions for the $i$-th agent, and $H$ denotes the finite horizon length.
Additionally, \citet{blanchet2024double} and \citet{shi2024sample} proved that robust NEs (and CCEs) can be achieved for all uncertainty levels within $[0,1]$ using $\widetilde{O}\left(H^4S^3\prod_{i=1}^mA_i^2/\varepsilon^2\right)$ and $\widetilde{O}\left(H^3S\prod_{i=1}^mA_i\min\{H,1/R\}/\varepsilon^2\right)$ samples, respectively, assuming access to offline dataset or a generative model, where $m$ is the number of agents.
Recently, \citet{shi2024breaking} achieved robust CCEs with sample complexity $\widetilde{O}\left(H^6S\sum_{i=1}^mA_i\min\{H,1/R\}/\varepsilon^4\right)$, under a policy-induced uncertainty set definition.
However, all prior results suffer from at least one of three obstacles: restrictive range of uncertainty level $R$ or accuracy $\varepsilon$, the curse of multiple agents, and the barrier of long horizons $H$. 
As a result, they fall short of achieving minimax-optimal sample complexity. 
% These limitations naturally lead to the following question:
% \begin{quote} 
% Can we design an algorithm for RMGs that achieves an 
% $\varepsilon$-robust NE/CCE with minimax-optimal sample complexity?
% \end{quote}

%Introduction to robust MARL, robust MG, NE/CCE.

%interest in sample complexity. related works, motivation

\subsection{Our contributions}

In this paper, we study the problem of RMGs in the finite-horizon setting with an $R$-contamination model \citep{huber1965robust} as uncertainty set.
This model has been widely adopted in previous works on robust single-agent reinforcement learning \citep{ma2023decentralized, wang2021online} due to its computational efficiency.
For the first time, we achieve minimax-optimal sample complexity for RMGs through establishing  matching upper and lower bounds. Specifically, let $S$ and $m$ denote the number of states and agents, respectively, $A_i$ represent the number of actions available to the $i$-th agent, and $H$ denote the horizon length. We focus on analyzing the minimal number of samples required to reach an $\varepsilon$-robust NE or CCE, where no agent has an incentive to unilaterally change its policy to improve its worst-case payoff, given that the transition model is constrained within an $R$-contamination set, with $R$ representing the uncertainty level.
Our contributions are as follows:

\begin{itemize}
\item 
We extend the Q-FTRL algorithm for standard MGs to RMGs and establish its sample complexity. Specifically, the proposed algorithm achieves an $\varepsilon$-robust CCE with the following sample complexity (up to logarithmic factors):
$$
\widetilde{O}\left(\frac{H^3S\sum_{i=1}^mA_i}{\varepsilon^2}\min\left\{H,\frac{1}{R}\right\}\right)
$$
for a full range of $\varepsilon\in(0,H]$ and uncertainty level $R\in[0,1]$.
 In the special case of two-player zero-sum RMGs, the algorithm also achieves an $\varepsilon$-robust NE with the same sample complexity. Compared to previous results \citep{shi2024breaking,shi2024sample, ma2023decentralized}, our approach effectively addresses the challenges posed by multiple agents and long horizon lengths. Furthermore, our result holds for the entire range of $\varepsilon$ and $R$, implying that the algorithm requires no burn-in cost, and allows for sufficient robustness. Detailed comparisons are in Table \ref{tab:related-works}.
\item
For the $R$-contamination set considered in this work, we derive an information-theoretic lower bound on the sample complexity of RMGs with a generative model. Specifically, we show that to achieve an $\varepsilon$-robust CCE, the minimum required number of samples is at least of the order (up to logarithmic factors):
$$
\widetilde{O}\left(\frac{H^3S\max_{1\le i\le m}A_i}{\varepsilon^2}\min\left\{H,\frac{1}{R}\right\}\right).
$$
Combining this lower bound with our algorithm's sample complexity, we conclude that the proposed algorithm achieves minimax-optimal sample complexity when the number of agents is fixed. 
\end{itemize}

\begin{table}
\begin{small}
\centering
\begin{tabular}{c|c|c|c}
\toprule
Algorithm & Equilibrium & $\varepsilon$ and $R$ range & Sample compl.  \\
\hline
\multirow{2}{*}{\citet{ma2023decentralized}} & \multirow{2}{*}{CE} &  $\varepsilon\in(0,H]$, $R\in\left(0,\frac{\varepsilon}{SH^2}\right]$ & $\frac{H^6S\max_{1\le i\le m}A_i^2}{\varepsilon^2}$\\
\cline{3-4}
 &  &  $\varepsilon\in(0,H]$, $R\in\left(\frac{\varepsilon}{SH^2},\frac{p_{\min}}{H}\right]$ & $\frac{H^6S\max_{1\le i\le m}A_i^2}{\varepsilon^2p_{\min}^2}$\\
\hline
\citet{blanchet2024double} & NE & $\varepsilon\in(0,H]$, $R\in[0,1]$ & $\frac{H^4S^3\prod_{i=1}^mA_i^2}{\varepsilon^2}$\\
\hline
\citet{shi2024sample} & NE & \makecell{$\varepsilon\in\left(0,\sqrt{\min\{H,\frac{1}{R}\}}\right]$,\\ $R\in[0,1]$} & $\frac{H^3S\prod_{i=1}^mA_i}{\varepsilon^2}\min\{H,\frac{1}{R}\}$\\
\hline
\citet{shi2024breaking} & CCE & \makecell{$\varepsilon\in\left(0,\sqrt{\min\{H,\frac{1}{R}\}}\right]$,\\ $R\in[0,1]$} & $\frac{H^6S\sum_{i=1}^mA_i}{\varepsilon^4}\min\{H,\frac{1}{R}\}$\\
\hline
\textbf{this work(Theorem \ref{thm:upper})} & CCE & $\varepsilon\in(0,H]$, $R\in[0,1]$ & $\frac{H^3S\sum_{i=1}^mA_i}{\varepsilon^2}\min\{H,\frac{1}{R}\}$\\
\hline
\textbf{this work(Corollary \ref{cor:main})} & \makecell{NE\\ (two-player)} & $\varepsilon\in(0,H]$, $R\in[0,1]$ & $\frac{H^3S\sum_{i=1}^mA_i}{\varepsilon^2}\min\{H,\frac{1}{R}\}$\\
\bottomrule
\end{tabular}
\caption{Our results and comparisons with prior art (up to log factors) for RMG algorithms in terms of equilibrium type, range of $\varepsilon$ and $R$, and the sample complexity.
Here $p_{\min}$ denotes the minimal number of nominal transition probability.}
\label{tab:related-works}
\end{small}
\end{table}

\subsection{Related works}

In this section, we shall briefly review some works related to robust Markov games.

\paragraph{Standard Markov games.}
Early research on Markov games, also known as Multi-agent reinforcement learning (MARL) \citep{shapley1953stochastic, oroojlooy2023review, zhang2021multi}, primarily focused on asymptotic analysis \citep{littman1996generalized, littman2001friend}, but recent theoretical efforts have increasingly concentrated on finite-sample analysis. A significant portion of the literature has focused on the special case of two-player zero-sum Markov games, examples include \citet{bai2020near,bai2020provable,zhong2022pessimistic,chen2022almost,cui2022offline,chen2019distributionally,dou2022gap,jia2019feature,mao2023provably,tian2021online,wei2017online,yan2022model,yang2022t}. Other studies have aimed to develop computationally efficient algorithms for multi-agent general-sum MGs, including \citet{daskalakis2023complexity,jin2021v,liu2021sharp,mao2023provably,song2021can}.
The minimax-optimal sample complexity for finite-horizon settings has been established as $\widetilde{O}\left(\frac{H^4S\sum_{i=1}^mA_i}{\varepsilon^2}\right)$ \citep{li2022minimax}, and has been achieved by \citet{li2022minimax} under the same generative model assumption employed in this work. 
%In contrast, our study demonstrates that learning RMGs is no more challenging than learning standard MGs. Specifically, we show an improvement of a factor of $HR$ in sample complexity when the uncertainty level $R\gtrsim\frac{1}{H}$.

\paragraph{Robust reinforcement learning.}
In the context of single-agent reinforcement learning (RL), the study of model robustness through distributionally robust dynamic programming and robust Markov Decision Processes (MDPs) has garnered significant attention, both empirically and theoretically \citep{badrinath2021robust,goyal2023robust,ho2018fast,ho2021partial,iyengar2005robust,roy2017reinforcement,tamar2014scaling,xu2010distributionally}. A growing body of work has focused on the finite-sample analysis for robust RL, where the agent aims to optimize its worst-case performance under a transition model constrained by an uncertainty set \citep{panaganti2022sample,yang2022toward,zhou2021finite,wang2023finite,wang2024sample,yang2023robust}.
Among these studies, \citet{wang2021online} explored the robust variant of Q-learning algorithm under the R-contamination set and derived a sample complexity of $\widetilde{O}\left(\frac{SA}{(1-\gamma)^5\varepsilon^2}\right)$ in the discounted infinite-horizon setting, where $\gamma$ represents the discount factor. However, this work does not explicitly provide the impact of robustness on the sample complexity. %In contrast, our result demonstrates that when the uncertainty level $R$ is greater than $\frac{1}{H}$, robustness leads to a reduction in sample complexity by a factor of $HR$. Similar results have also been observed for uncertainty sets based on total variation distance \citep{shi2024curious,shi2024sample}.

\paragraph{Robust Markov games.}

Research on RMGs has explored robustness across various dimensions, including states \citep{han2022solution}, environment (probabilistic transition kernels) \citep{shi2024sample,shi2024breaking}, and policies \citep{kannan2023smart}. This work focuses on the robustness in environment.

The most relevant works to this study include \citet{kardecs2011discounted,zhang2020robust,blanchet2024double,ma2023decentralized,shi2024sample,shi2024breaking}. In particular, \citet{kardecs2011discounted} established the existence of NE in RMGs under mild conditions on the uncertainty set. \citet{zhang2020model} demonstrated the asymptotic convergence of a Q-learning type algorithm under certain assumptions.
\citet{ma2023decentralized} proposed an algorithm for RMGs with a sample complexity of $\widetilde{O}\left(H^6S\max_{1\le i\le m}A_i^2/\varepsilon^2\right)$ when the uncertainty level $R\le \varepsilon/(SH^2)$. This result implies that very little robustness can be tolerated when either $H$ or $S$ is large. In cases of $\varepsilon/(SH^2)<R\le p_{\min}/H$, where $p_{\min}$ represents the minimal nominal transition probability, the sample complexity is proved to be $\widetilde{O}\left(H^6S\max_{1\le i\le m}A_i^2/(\varepsilon^2p_{\min}^2)\right)$.
The algorithm proposed in \citet{blanchet2024double} can reach robust NE with sample complexity $\widetilde{O}\left(H^4S^3\prod_{i=1}^mA_i^2/\varepsilon^2\right)$.
Later, \citet{shi2024sample} achieved a better sample complexity of $\widetilde{O}\left(H^3S\prod_{i=1}^mA_i\min\left\{H,1/R\right\}/\varepsilon^2\right)$ for reaching a robust NE when $\varepsilon \in \left(0,\sqrt{\min\left\{H,1/R\right\}}\right]$, which implies a $\widetilde{O}(H^3S\prod_{i=1}^mA_i)$ burn-in cost. 
Recent work \citet{shi2024breaking} achieved a sample complexity of $\widetilde{O}(H^6S\sum_{i=1}^mA_i\min\{H,1/R\}/\varepsilon^4)$ for any $\varepsilon \in \left(0,\sqrt{\min\left\{H,1/R\right\}}\right]$, which implies a $\widetilde{O}(H^5S\sum_{i=1}^mA_i\max\{1,HR\})$ burn-in cost.
% Compared to previous works, our result improves sample complexity by addressing both the curse of multiagency and the barrier of long horizon.
% As a result, our result achieves the minimax optimality for the first time.
% Additionally, our approach eliminates the burn-in cost and accommodates a full range of uncertainty levels.

\paragraph{Notations and organization.}
The notation $x = [x(s,a_i)]_{(s,a_i)\in\mathcal{S}\times \mathcal{A}_i} \in \mathbb{R}^{SA_i}$ (resp. $x = [x(s)]_{s\in\mathcal{S}} \in \mathbb{R}^S$ ) denotes a vector
assigning values to each state-action pair for the $i$-th agent (resp. each state).
The Hadamard product of two vectors $x$ and $y$ is denoted as $x\circ y = [x(s) \cdot y(s)]_{s\in\mathcal{S}}$ or as $x\circ y = [x(s,a_i) \cdot y(s,a_i)]_{(s,a_i)\in\mathcal{S}\times\mathcal{A}_i}$.
Notation $\mathsf{Var}_{P}(V) = P(V\circ V) - (PV)\circ (PV)$ denotes the variance of $V$ following the distribution of $P$.
Notation $\Delta(\mathcal{S})$ represents the probability simplex over the set $\mathcal{S}$.
Moreover, with $\mathcal{X} := \{S,\{A_i\}_{i=1}^m, H,R, 1/\delta, 1/\varepsilon\}$. 
we also use $f(\mathcal{X}) = O(g(\mathcal{X}))$ or $f(\mathcal{X}) \lesssim g(\mathcal{X})$ to indicate that there exists a universal constant $C > 0$ such that $f(\mathcal{X}) \le Cg(\mathcal{X})$.
Similarly $f(\mathcal{X}) \gtrsim g(\mathcal{X})$ indicates $g(\mathcal{X}) = O(f(\mathcal{X}))$. Additionally, the notation $\widetilde{O}(\cdot)$ is defined in the same way as $O(\cdot)$ except that it hides logarithmic factors.

The remainder of this paper is organized as follows. Section \ref{sec:background} reviews some basic concepts about the standard and robust MGs in finite-horizon setting.
In Section \ref{sec:algandtheory}, we propose an algorithm for RMGs with a generative model.
Moreover, we present our main results, including the smaple complexity analysis of the proposed algorithm, and an information-theoretic lower bound for RMGs. 
Section \ref{sec:analysis} outlines the key steps of our theoretical analysis, and the other proof details are deferred to Appendix.
Finally, Section~\ref{sec:discussion} concludes the main paper with additional discussions.

\section{Background}
\label{sec:background}

In this section, we first introduce some basics for standard MGs in a finite-horizon setting, which facilitates the understanding of RMGs.
Then we formulate the RMGs and describe our goal.

\subsection{Standard Markov games}
\label{subsec:background-MG}

A finite-horizon, non-stationary multi-player general-sum Markov game (MG) with $m$ players competing against each other is represented by $\mathcal{MG} =\{\mathcal{S}, \{\mathcal{A}_i\}_{1\le i\le m},H,P,r\}$, 
where $\mathcal{S} = \{1, \cdots, S\}$ represents the state space of the shared environment, $\mathcal{A}_i = \{1,\cdots,A_i\}$ is the action space for the $i$-th player, $H$ denotes the horizon of the Markov game, $P = \{P_h\}_{1\le h\le H}$ with $P_h : \mathcal{S} \times \mathcal{A} \to \Delta(\mathcal{S})$ describes the transition probability kernel for $\mathcal{MG}$, and $r = \{r_{i,h}\}_{1\le h\le H,1\le i\le m}$ with $r_{i,h} : \mathcal{S} \times \mathcal{A} \to [0, 1]$ denotes the reward received by the $i$-th player.
Here, $\mathcal{A}$ is defined as the Cartesian product $\mathcal{A} := \mathcal{A}_1 \times \cdots \times \mathcal{A}_m$.
Furthermore, for each $i\in[m]$, we denote $\mathcal{A}_{-i} :=\prod_{j\neq i}\mathcal{A}_j$, which represents the action space of all players except the $i$-th one.
We use notation ${\bm a}\in \mathcal{A}$ and ${\bm a}_{-i} \in \mathcal{A}_{-i}$ to denote a joint action of all players (or a joint action excluding the $i$-th player's action).
For any $(s, {\bm a}, h, s') \in \mathcal{S}\times \mathcal{A} \times [H] \times \mathcal{S}$, we let $P_h(s'|s, {\bm a})$ represent the transition probability from state $s$ to state $s'$ at step $h$, given that the joint action ${\bm a}$ is taken by the players. Additionally, $r_{i,h}(s, {\bm a})\in[0,1]$ indicates the (deterministic) immediate normalized reward that the $i$-th player receives at state $s$ and step $h$ after taking the joint action ${\bm a}$.

% As an important special case, a two-player zero-sum Markov game—denoted byMG =
% 
% S, {A1,A2},H, P, r
	
% — satisfies r2,h = −r1,h for all h ∈ [H]. Following the convention, we assume that r1,h ≥ 0 for all h ∈ [H],1
% and refer to the first (resp. second) player as the max-player (resp. the min-player).

\paragraph{Markov policies.} This paper focuses on Markov policies, which select actions based solely on the current state $s$ and step $h$, independent of previous trajectories.
%where the action-selection strategies depend solely on the current state $s$ and the step number $h$, without considering previously visited states. 
Specifically, denote policy of the $i$-th player as $\pi_i = \{\pi_{i,h}\}_{1\le h\le H}$, where $\pi_{i,h}(\cdot | s) \in \Delta(\mathcal{A}_i)$ denotes the probability distribution of the $i$-th player's action over space $\mathcal{A}_i$ for state  $s$ at step $h$. The joint Markov policy of all players is defined as $\pi = (\pi_1, \cdots,\pi_m) =\{\pi_h\}_{1\le h\le H}:
\mathcal{S} \times[H] \to \Delta(\mathcal{A})$, 
where $\pi_h(\cdot | s)= (\pi_{1,h},\cdots, \pi_{m,h})(\cdot | s) \in\Delta(\mathcal{A})$
specifies the joint action distribution for all players at state $s$ and step $h$.
For any joint policy $\pi$, we use $\pi_{-i}$ to denote the policies of all players except the $i$-th player, and use $\pi_{-i,h}$ to represent the policies of all but the $i$-th player at step $h$.

%Additionally, a joint policy π is said to be a product policy if π1,..., πm are executed in a statistically independent fashion (namely, under policy π the players take actions independently), and we shall adopt the notation π = π1 × · · · × πm to indicate that π is a product policy.

\paragraph{Value functions and Q functions.}  
%Consider a Markovian trajectory of states and actions $\{(s_h, {\bm a}_h)\}_{1 \leq h \leq H}$, where states $s_{h+1} \in \mathcal{S}$ transits according to the transition kernel when joint action ${\bm a}_h \in \mathcal{A}$ is taken at step $h$ and state $s_h$. 
For a given joint policy $\pi$ and transition kernel $P$, the long-term cumulative reward of agent $i$ in MG is quantified through the value function $V^{\pi, P}_{i,h}: \mathcal{S} \to \mathbb{R}$ and Q-function $Q^{\pi, P}_{i,h}: \mathcal{S}\times\mathcal{A} \to \mathbb{R}$ for the $i$-th agent, defined as follows: 
\begin{align*}
V^{\pi,P}_{i,h}(s) &:= \mathbb{E}_{\pi,P}\left[ \sum_{t=h}^{H} r_{i,t}(s_t, {\bm a}_t) \mid s_h = s \right], \quad \forall (i,h, s) \in [m]\times [H] \times \mathcal{S},\nonumber\\
Q^{\pi,P}_{i,h}(s,{\bm a}) &:= \mathbb{E}_{\pi,P}\left[ \sum_{t=h}^{H} r_{i,t}(s_t, {\bm a}_t) \mid s_h = s,{\bm a}_h = {\bm a} \right], \quad \forall (i,h, s,{\bm a}) \in [m]\times [H] \times \mathcal{S}\times\mathcal{A}.
\end{align*}
where the expectation is taken over the distribution ${\bm a}_t\sim\pi_t(\cdot \mid s_t), s_{t+1}\sim P_t(\cdot \mid s_t, {\bm a}_t),t\ge h$ for $V^{\pi,P}_{i,h}(s)$, and ${\bm a}_{t+1}\sim\pi_{t+1}(\cdot \mid s_{t+1}), s_{t+1}\sim P_t(\cdot \mid s_t, {\bm a}_t),t\ge h$ for $Q^{\pi,P}_{i,h}(s,{\bm a})$.
%Markovian trajectory $\{(s_h, {\bm a}_h)\}$ under the policy $\pi$; that is, conditional on $s_h$, the joint action ${\bm a}_h$ follows the distribution $\pi_h(\cdot \mid s_h)$, and then $s_{h+1}$ is sampled from $s_{h+1}\sim P_h(\cdot \mid s_h, {\bm a}_h)$.

\subsection{Robust Markov games}

A multi-agent general-sum robust Markov game (RMG) in the finite-horizon setting is represented by
$$
\mathcal{MG}_{\mathsf{rob}} = \left\{ \mathcal{S}, \{\mathcal{A}_i\}_{1 \leq i \leq m}, \mathcal{U}^{R}(P^0), r, H \right\},
$$
where $\mathcal{S}$,  $\{\mathcal{A}_i\}$,  $r$,  $H$ and policies are defined as in standard MGs (cf. Section \ref{subsec:background-MG}). 
The key distinction from standard MGs is that: instead of transitioning according to a fixed probability kernel, players in an RMG aim to maximize their worst-case cumulative reward, assuming the probability kernel resides within an uncertainty set $\mathcal{U}^{R}(P^0)$ centered around the nominal probability kernel $P^0$.
This modification leads to differences in the definitions of the value and Q-functions, which will be detailed later.
%A key distinction in this setting from standard MGs is that, instead of relying on a fixed transition kernel, the $i$-th agent assumes that the transition kernel can be selected from a given uncertainty set $\mathcal{U}^{R}(P^0)$. This uncertainty set is based on a nominal kernel $P^0 = \{P_h^0\}_{h=1}^H$ with ,

In the case of two-player zero-sum RMGs, we have $m=2$ and $r_{2,h} = -r_{1,h}$.
All other definitions remain consistent with those in the multi-agent general-sum RMGs.

\paragraph{Uncertainty set with agent-wise $(s, a)$-rectangularity.}
Now we intend to introduce the uncertainty sets  $\mathcal{U}^R(P^0)$  for RMGs. 
In this work, the uncertainty set is constructed using the R-contamination model \citep{huber1965robust}, which has been extensively used in the context of robust statistics \citep{huber1973use,huber2011robust}, economics \citep{nishimura2004search,nishimura2006axiomatic}, statistical learning \citep{duchi2019distributionally}, and robust reinforcement learning \citep{wang2021online}. 
Following \citet{shi2024sample}, we consider a multi-agent variant of rectangularity in RMGs --- agent-wise $(s, a)$-rectangularity, which enables the robust counterpart of Bellman equations and computational tractability of the problems \citep{iyengar2005robust,shi2024curious,wiesemann2013robust,zhou2021finite}.

Specifically, denote nominal probability kernel $P^0=\{P_h^0\}_{h=1}^H$ with $P_h^0: \mathcal{S} \times \mathcal{A} \to \Delta(\mathcal{S})$.
The uncertainty set  $\mathcal{U}^R(P^0)$, based on R-contamination model and satisfying agent-wise $(s, a)$-rectangularity, is mathematically defined as:
\begin{align}\label{eq:def-Rset}
%\label{eq:def-rect}
\mathcal{U}^R(P^0) = \otimes \mathcal{U}^{R}(P_{h,s,{\bm a}}^0), \quad {\rm with}\quad
 \mathcal{U}^R(P_{h,s,{\bm a}}^0) = \left\{ (1-R)P^0_{h,s,{\bm a}} + RP'|P' \in \Delta(\mathcal{S})\right\},
\end{align}
where  $\otimes$  represents the Cartesian product, and $P_{h,s,{\bm a}}^0$ denotes the transition probability at state-action pair $(s, {\bm a}) \in \mathcal{S} \times \mathcal{A}$, that is
$$
P_{h,s,{\bm a}}^0 := P^0_h(\cdot \mid s,{\bm a}) \in \mathbb{R}^{S}.
$$
Here $R\in[0,1]$ denotes the size of the uncertainty set and is referred to as the uncertainty level.

\paragraph{Robust value functions and robust Bellman equations.} 
For any given joint policy $\pi$, the worst-case performance of the $i$-th agent at time step $h$, over all possible transition kernels within uncertainty set $\mathcal{U}^R(P^0)$, is quantified by the robust value function $ V^{\pi, R}_{i,h} $ and the robust Q-function $ Q^{\pi, R}_{i,h} $, which are defined as follows:
\begin{align*}
V_{i,h}^{\pi,R}(s) &= \inf_{P \in \mathcal{U}^R(P^0)} V^{\pi, P}_{i,h}(s) = \inf_{P\in\mathcal{U}^R(P^0)}\mathbb{E}_{\pi,P}\left[\sum_{t=h}^Hr_{i,t}(s_t,{\bm a}_t)|s_h= s\right],\nonumber\\
Q_{i,h}^{\pi,R}(s,{\bm a}) &= \inf_{P \in \mathcal{U}^R(P^0)} Q^{\pi, P}_{i,h}(s,{\bm a}) = \inf_{P\in\mathcal{U}^R(P^0)}\mathbb{E}_{\pi,P}\left[\sum_{t=h}^Hr_{i,t}(s_t,{\bm a}_t)|s_h= s,{\bm a}_h = {\bm a}\right], 
\end{align*}
for all $ (i, h, s, {\bm a}) \in [m] \times [H] \times \mathcal{S} \times \mathcal{A}$.

In the context of RMGs, each agent aims to compete with other agents and maximize its worst-case performance.
To characterize this, given the joint policy $\pi_{-i}$ of all players except the $i$-th one, we use $V^{\star, \pi_{-i}, R}_{i,h}$ to represent the optimal robust value function of the $i$-th agent by playing an independent policy $ \pi'_i $, which is defined as: 
%The corresponding value function under the joint policy $ \pi'_i \times \pi_{-i} $ is denoted as $ V^{\pi'_i \times \pi_{-i}, R}_{i,h}$. 
%The optimal robust value function which is maximized over all $ \pi'_i $ is defined as
\begin{align*}
V^{\star, \pi_{-i}, R}_{i,h}(s) := \max_{\pi'_i: \mathcal{S} \times [H] \to \Delta(\mathcal{A}_i)} V^{\pi'_i \times \pi_{-i}, R}_{i,h}(s)
= \max_{\pi'_i: \mathcal{S} \times [H] \to \Delta(\mathcal{A}_i)} \inf_{P \in \mathcal{U}^R(P^0)} V^{\pi'_i \times \pi_{-i}, P}_{i,h}(s).
\end{align*}
As in standard MGs, it can be shown that there exists at least one policy --- referred to as the robust best-response policy for the 
$i$-th agent---that achieves $V_{i,1}^{\star,\pi_{-i},R}(s)$ for all $s \in \mathcal{S}$ and $h \in [H]$
(see \citet{blanchet2024double}, Section A.2).
%, which is dependent on $\pi_{-i}$ and referred to as the robust best-response policy for the $i$-th agent, that can simultaneously attain  .

The robust value functions $ \{V^{\pi, R}_{i,h}\} $ of RMGs with any joint policy $ \pi $ satisfy the following robust Bellman consistency equation: for all $ (i, h, s) \in [m] \times [H] \times \mathcal{S} $,
\begin{align*}
V^{\pi, R}_{i,h}(s) &=
\mathbb{E}_{{\bm a} \sim \pi_h(\cdot|s)} \left[ r_{i,h}(s, {\bm a}) + \inf_{P_{h,s,{\bm a}} \in \mathcal{U}^R(P^0_{h,s,{\bm a}})} P_{h,s,{\bm a}} V^{\pi, R}_{i,h+1} \right]\nonumber\\
 &= \mathbb{E}_{{\bm a}\sim\pi_h(\cdot|s)} \left[r_{i,h}(s,{\bm a}) + (1-R)P_{h,s,{\bm a}}^0 V_{i,h+1}^{\pi,R} + R\min V_{i,h+1}^{\pi,R}\right].
\end{align*}
This robust Bellman equation is inherently tied to the agent-wise $ (s, a) $-rectangularity condition and the R-contamination model (cf. \eqref{eq:def-Rset}). 
% \begin{align*}
% V_{i,h}^{\star,\pi_{-i},R}(s) &= \max_{\pi_i':\mathcal{S}\times[H]\to \Delta(\mathcal{A}_i)} V_{i,h}^{\pi_i'\times \pi_{-i},R}(s).
% \end{align*}

%Specifically, this condition decouples the dependency of uncertainty subsets across different agents, each state-action pair, and different time steps, leading to the Bellman recursive equation.

\paragraph{Equilibria in robust Markov games.}
In RMGs, each player seeks to maximize its own value function under the worst-case transition scenarios. Due to the competing objectives of players, finding an equilibrium is central to the study of RMGs. Below, we introduce two robust variants of common solution concepts --- robust Nash Equilibrium (NE) and robust Coarse Correlated Equilibrium (CCE).

\begin{itemize}
\item
Robust NE. A product policy $ \pi = \pi_1 \times \pi_2 \times \cdots \times \pi_m $ is a robust NE if 
\begin{align}\label{eq:def-NE}
\forall(i, s) \in [m] \times \mathcal{S} : \quad V^{\pi, R}_{i,1}(s) = V^{\star, \pi_{-i}, R}_{i,1}(s). 
\end{align}
A robust NE implies that, given the strategies of the other players $\pi_{-i} $, no player can improve its worst-case performance over the uncertainty set $\mathcal{U}^R(P^0)$ by unilaterally deviating from its current strategy.
\item
Robust CCE. A (possibly correlated) joint policy $ \pi \in \mathcal{S} \times [H] \to \Delta(\mathcal{A}) $ is a robust CCE if
\begin{align}\label{eq:def-CCE}
\forall(i, s) \in [m] \times \mathcal{S} : \quad V^{\pi, R}_{i,1}(s) \geq V^{\star, \pi_{-i}, R}_{i,1}(s).
\end{align}
Robust CCE can be regarded as relaxation of robust NE. It also guarantees that no player benefits from unilaterally deviating its current strategy, but it allows for the possibility that the players' policies may be correlated.

% \item
% Robust CE. A joint policy $ \pi \in \Delta(\mathcal{A}) $ is said to be a robust CE if it holds that 
% $$
% \forall(s, i) \in \mathcal{S} \times [m] : V^{\pi, R}_{i,1}(s) \geq \max_{f_i \in \mathcal{F}_i} V^{f_i \circ \pi, R}_{i,1}(s).
% $$
\end{itemize}

It is well-known that computing exact robust equilibria is a challenging task and may not always be necessary.
Therefore, approximate equilibria are often sought which introduces the concept of an $ \varepsilon $-robust NE. 
A product policy $ \pi =\pi_1\times\cdots\times \pi_m$ is said to be an $ \varepsilon $-robust NE if
%a slight relaxation of the definition in \eqref{eq:def-NE}
%As a result, people usually search for approximate equilibria. Toward this, as a slight relaxation from \eqref{eq:def-NE}, a product policy $ \pi \in \Delta(\mathcal{A}_1) \times \cdots \times \Delta(\mathcal{A}_m) $ is said to be an $ \varepsilon $-robust NE if
\begin{align}\label{eq:gap-multi-NE}
\mathsf{gap}_{\mathsf{NE}}(\pi) = \max_{s\in\mathcal{S}}\mathsf{gap}(\pi;s)\le\varepsilon,\quad {\rm with}\quad \mathsf{gap}(\pi;s):=\max_{1\le i\le m}\left\{V_{i,1}^{\star,\pi_{-i},R}(s)-V_{i,1}^{\pi,R}(s)\right\}.
\end{align}

Similarly, a slight relaxation of the definition in \eqref{eq:def-CCE} introduces the concept of $\varepsilon $-robust CCE.
%relaxing \eqref{eq:def-CCE}, 
A possibly correlated joint policy $ \pi\in\mathcal{S}\times[H] \to \Delta(\mathcal{A}) $ is said to be an $\varepsilon $-robust CCE if
\begin{align}\label{eq:gap-multi}
\mathsf{gap}_{\mathsf{CCE}}(\pi) = \max_{s\in\mathcal{S}}\mathsf{gap}(\pi;s)\le\varepsilon,\quad {\rm with} \quad \mathsf{gap}(\pi;s):=\max_{1\le i\le m}\left\{V_{i,1}^{\star,\pi_{-i},R}(s)-V_{i,1}^{\pi,R}(s)\right\}.
\end{align}
The existence of robust NEs has been proved under general conditions for the uncertainty set \citep{blanchet2024double}, and it directly implies the existence of robust CCEs \citep{shi2024sample}.

% \begin{align*}
% \mathsf{gap}_{\mathsf{NE}}(\pi) := \max_{s \in \mathcal{S}, 1 \leq i \leq m} \left( V^{\star, \pi_{-i}, \sigma_i}_{i,1}(s) - V^{\pi, \sigma_i}_{i,1}(s) \right) \leq \epsilon.
% \end{align*}
% or an $ \varepsilon $-robust CE if
% $$
% \mathsf{gap}_{\mathsf{CE}}(\pi) := \max_{s \in \mathcal{S}, 1 \leq i \leq n} \left( \max_{f_i \in \mathcal{F}_i} V^{f_i \circ \pi, R}_{i,1}(s) - V^{\pi, R}_{i,1}(s) \right) \leq \epsilon. 
% $$

\paragraph{Sampling mechanism: a generative model.}
In this paper, we adopt a commonly used sampling mechanism in standard MARL \citep{li2022minimax,zhang2020robust}.
Specifically, we assume access to a generative model (simulator) \citep{kearns1998finite}, which returns an independent sample of state $s'$ based on the nominal transition kernel $P^0$, as follows:
\begin{align*}
s'\sim P^0_h(\cdot | s, {\bm a}),
\end{align*}
for any pair $(s, {\bm a}, h)\in \mathcal{S} \times\mathcal{A} \times [H]$ selected by the learners.
The objective of this paper is to compute an $\varepsilon$-robust equilibrium for the game $\mathcal{MG}_{\mathsf{rob}}$ with as few samples as possible.
%, i.e., minimizing the number of calls to the generative model.

\section{Sample-efficient robust Markov games}
\label{sec:algandtheory}

In this section, we extend the Q-FTRL algorithm proposed by \citet{li2022minimax} to the context of RMG, and show that it retains the minimax sample complexity in this setting.

\subsection{Algorithm description}
\label{sec:alg}

The proposed algorithm leverages the principles of optimism and the Follow-the-Regularized-Leader (FTRL) framework.
It operates by working backwards from the terminal step $H$ to the initial step $h=1$. 
At each step $h$, each agent $i$ samples $K$ times from nominal transition probability for each state-action pair $(s,a_i)\in\mathcal{S}\times \mathcal{A}$, and estimates the worst-case Q-function under the current policy after each sampling. Subsequently, each agent utilizes the FTRL method to update its policy $\widehat{\pi}_{i,h}^{k+1}$, such that the resulting joint correlated policy approximates a robust CCE at step $h$. 
Once the estimation at step $h$ is completed, the algorithm computes the value function under the optimized policy and proceeds to the next step $h-1$.

We now present the mathematical formulation of the proposed algorithm. The algorithm begins by initializing value functions as $\widehat{V}_{i,H+1} = 0\in\mathbb{R}^{S}$ for all agents $1\le i\le m$. It then iterates over each step $h$, utilizing the value function estimates $\{\widehat{V}_{i,h+1}\}_{i=1}^m$ from the previous step.
Specifically, at each step $h$, the procedure proceeds in two phases: policy updates and value function estimation.

\begin{algorithm}[!ht]
\renewcommand{\algorithmicrequire}{\textbf{Input:}}
\renewcommand{\algorithmicensure}{\textbf{Output:}}
\caption{Sample-efficient Robust Q-FTRL}
\label{alg}
\begin{small}
\begin{algorithmic}[1]
\REQUIRE number of rounds $K$ for each step, learning rates $\{\alpha_k\}$ and $\{\eta_{k+1}\}$.
%\ENSURE Q-function $Q_T$.
\STATE \textbf{Initialize}: for any $(s, a_i, h,i) \in \mathcal{S} \times \mathcal{A}_i\times [H]\times[m]$, set $\widehat{V}_{i,H+1}(s) = Q^0_{i,h}(s, a_i) = 0$ and $\pi^1_{i,h}(a_i | s) = 1/A_i$.
\FOR{$h=H,\cdots,1$}
\FOR{$k=1,\cdots,K$}
\FOR{$i=1,\cdots,m$}
%\STATE Draw actions ${\bm a}_{i,h}^k(s,a_i)$ for all $(s,a_i)\in\mathcal{S}\times\mathcal{A}_i$ based on the policy $\{\pi^k_{j,h}\}_{j\in[m]}$ as specified in \eqref{eq:sample-action}.
\STATE Draw independent samples from generative model according to \eqref{eq:sample} for all $(s,a_i)\in\mathcal{S}\times\mathcal{A}_i$, and construct the model estimation $r^k_{i,h}, P^k_{i,h}$ using \eqref{eq:model-estimation-r} and \eqref{eq:model-estimation-P}.
%$(r^k_{i,h}, P^k_{i,h})\leftarrow \mathsf{sampling}(i, h, \pi^k_h= \{\pi^k_{j,h}\}_{j\in[m]})$.
\STATE Compute $Q_{i,h}^k$ according to \eqref{eq:update-Q}.
\STATE Update $\pi^{k+1}_{i,h}(\cdot|s)$ according to \eqref{eq:alg-update-pi} for all $s\in\mathcal{S}$.
\ENDFOR
\ENDFOR
\STATE Compute $\widehat{V}_{i,h}$ according to \eqref{eq:cal-hatV}.
\ENDFOR
\IF{$\mathcal{MG}$ is a two-player zero-sum robust Markov game}
\STATE \textbf{Output}: $\widehat{\pi}_1 \times \widehat{\pi}_2$, where for any $i = 1, 2$, $\widehat{\pi}_i = \{\widehat{\pi}_{i,h}\}_{1\le h\le H}$ with $\widehat{\pi}_{i,h} =\sum_{k=1}^K\alpha_k^K\pi^k_{i,h}$.
\ENDIF
\IF{$\mathcal{MG}$ is a multi-player general-sum robust  Markov game}
\STATE \textbf{Output}: $\widehat{\pi} = \{\widehat{\pi}_{h}\}_{1\le h\le H}$ where $\widehat{\pi}_{h} =\sum_{k=1}^K\alpha_k^K(\pi^k_{1,h}\times \cdots\times \pi^k_{m,h})$.
\ENDIF
\end{algorithmic}
\end{small}
\end{algorithm}

\paragraph{Sampling and policy updates}
At each step, the algorithm draws $K$ samples from the generative model for each state-action pair $(s,a_i)\in\mathcal{S}\times \mathcal{A}_i$ for all $i\in[m]$, using these samples to update policies.
Specifically, in the $k$-th round of sampling, a tuple of samples $\{{\bm a}_h^k(s,a_i)=\{a_{j,h}^k(s,a_i)\}_{j=1}^m, r_{i,h}^k(s,a_i), s_{i,h+1}^k(s,a_i)\}_{(s,a_i)\in\mathcal{S}\times\mathcal{A}_i}$ is drawn for each agent $i$ according to the following rules:
\begin{align}
a_{j,h}^k(s,a_i)&\sim\pi_{j,h}^k(\cdot|s),~ (j\neq i)\quad{\rm and}\quad a_{i,h}^k(s,a_i) = a_i;\label{eq:sample-action}\\
r_{i,h}^k(s,a_i) &= r_{i,h}(s,{\bm a}_h^k(s,a_i));\qquad
s_{i,h+1}^k(s,a_i)\sim P_h^0(\cdot|s,{\bm a}_h^k(s,a_i)),\label{eq:sample}
\end{align}
where $\pi_{j,h}^k(\cdot|s)$ is initialized as $\pi_{j,h}^1(a_j|s) = 1/A_j$ for all $(s,a_j,j)\in\mathcal{S}\times\mathcal{A}_j\times[m]$.
For convenience, we collect samples for all state-action pairs into a reward vector $r_{i,h}^k\in\mathbb{R}^{SA_i}$ and an empirical transition probability matrix $P^{k}_{i,h} \in\mathbb{R}^{SA_i\times S}$, given by
\begin{align}
r^{k}
_{i,h}  &= [r^{k}
_{i,h}(s,a_i)]_{(s,a_i)\in\mathcal{S}\times\mathcal{A}_i}\in \mathbb{R}^{SA_i},\label{eq:model-estimation-r}\\
P^{k}_{i,h}(s'|s,a_i) &= 1,~{\rm for}~s'=s_{h+1}^{k}(s,a_i),\quad {\rm and}\quad P^{k}_{i,h}(s'|s,a_i) = 0,~{\rm otherwise}.\label{eq:model-estimation-P}
\end{align}
Next, we update the Q-function using robust Q-learning  follows:
\begin{align}\label{eq:update-Q}
Q_{i,h}^k = (1-\alpha_k)Q_{i,h}^{k-1} + \alpha_kq_{i,h}^k,
\end{align}
where $0<\alpha_k<1$ is the learning rate, $Q_{i,h}^k\in\mathbb{R}^{SA_i}$ is initialized as ${Q}_{i,h}^0 = 0$, and $q_{i,h}^k$ is computed as
$$
q_{i,h}^k = r_{i,h}^k + (1-R)P_{i,h}^k\widehat{V}_{i,h+1} + R\min_s \widehat{V}_{i,h+1}.
$$
After updating the Q-function estimates, the algorithm uses the FTRL strategy \citep{shalev2012online} to update the policy of the $i$-th agent as follows: 
\begin{align}\label{eq:alg-update-pi}
\pi_{i,h}^{k+1}(\cdot|s) 
&= \frac{\exp\left(\eta_{k+1}Q_{i,h}^k(s,\cdot)\right)}{\sum_{a\in\mathcal{A}_i}\exp\left(\eta_{k+1}Q_{i,h}^k(s,a)\right)},\quad \forall s\in\mathcal{S},
\end{align}
where $\eta_{k+1}>0$ is another learning rate associated with the policy updates.

\paragraph{Value function estimation}

After $K$ rounds of the above procedure, we construct the final policy estimate $\widehat{\pi}_h: \mathcal{S} \rightarrow \Delta(\mathcal{A})$ as
\begin{align*}
\widehat{\pi}_h({\bm a}|s) &= \sum_{k=1}^K\alpha_k^K\prod_{i=1}^m\pi_{i,h}^k(a_i|s),\quad \forall (s,{\bm a}) \in \mathcal{S} \times \mathcal{A},
\end{align*}
where
\begin{align*}
\alpha_{k}^K &= \alpha_k\prod_{j=k+1}^K(1-\alpha_j),\quad 1\le k\le K.
\end{align*}
Moreover, we estimate the
value function $\widehat{V}_{i,h}\in\mathbb{R}^{S}$ for step $h$ under $\widehat{\pi}_h$.
To obtain an optimistic estimation, we introduce a bonus term $\beta_{i,h}\in\mathbb{R}^{S}$ with the $s$-th entry given by
\begin{align}
\beta_{i,h}(s) &= c_{\mathsf{b}}\sqrt{\frac{\log^3\frac{KS\sum_{i=1}^m A_i}{\delta}}{K\min\left\{H,\frac{1}{R}\right\}}}\sum_{k=1}^K\alpha_k^K\left\{\mathsf{Var}_{\pi_{i,h,s}^k}(q_{i,h,s}^k) + \min\left\{H,\frac{1}{R}\right\}\right\},\label{eq:def-beta}
\end{align}
where $q_{i,h,s}^k = q_{i,h}^k(\cdot|s)\in\mathbb{R}^{A_i}$, and $\pi_{i,h,s}^k = \pi_{i,h}^k(\cdot|s)\in\Delta(\mathcal{A}_i)$.
The value function $\widehat{V}_{i,h}$ is then estimated as
\begin{align}
\widehat{V}_{i,h}(s) &= \min\left\{\sum_{k=1}^K\alpha_k^K\mathbb{E}_{a_i\sim\pi_{i,h,s}^k}q_{i,h,s}^k + \beta_{i,h}(s), H-h+1\right\},\quad \forall s \in \mathcal{S}.\label{eq:cal-hatV}
\end{align}
Equipped with $\widehat{\pi}_h$ and $\widehat{V}_{i,h}$, the algorithm proceeds to step $h-1$ and repeats the aforementioned operations.

% \begin{itemize}
% \item[1.]\emph{Sampling.} For each $(s, a_i) \in \mathcal{S} \times \mathcal{A}_i$, draw an independent action ${\bm a}_{h}^k(s,a_i)=[a_{j,h}^k(s,a_i)]_{1\le j\le m}$ follows
% \begin{align}\label{eq:sample-action}
% a_{j,h}^k(s,a_i) \left\{
% \begin{array}{ll}
%  \sim \pi_{j,h}^k(\cdot|s),    & j\neq i, \\
%  a_i,    & j=i,
% \end{array} 
% \right.
% \end{align}
% and generate an independent sample from the nominal transition probability as
% \begin{align}\label{eq:sample}
% s_{h+1}^{k}(s,a_i)\sim P_h^0(\cdot|s,{\bm a}_{h}^k(s,a_i))\quad {\rm and}\quad r_{i,h}^k(s,a_i) = r_{i,h}(s,{\bm a}_{h}^k(s,a_i)).
% \end{align}
% %for all $(s, a_i, s_{h+1}) \in \mathcal{S}\times\mathcal{A}_i\times\mathcal{S}$.

% \item[2.]\emph{$Q$-function estimation.}
% To estimate the Q-function for the $i$-th agent, we apply the robust Q-learning update rule \citep{wang2021online}:

% \item[3.]\emph{Policy updates.} After updating the Q-function estimates, we employ the FTRL strategy \citep{shalev2012online} to update the policy of the $i$-th agent as follows: 
% \begin{align}\label{eq:alg-update-pi}
% \pi_{i,h}^{k+1}(\cdot|s) 
% &= \arg\min_{\mu\in\Delta(\mathcal{A}_i)}\left\{-\sum_{a\in\mathcal{A}_i}\mu(a)Q_{i,h}^k(s,a) + \frac{\sum_{a\in\mathcal{A}_i}\mu(a)\log\mu(a)}{\eta_{k+1}}\right\}\nonumber\\
% %\pi_{i,h}^{k+1}(\cdot|s) 
% &= \frac{\exp\left(\eta_{k+1}Q_{i,h}^k(s,\cdot)\right)}{\sum_{a\in\mathcal{A}_i}\exp\left(\eta_{k+1}Q_{i,h}^k(s,a)\right)},\quad \forall s\in\mathcal{S},
% \end{align}
% where $\eta_{k+1}>0$ is another learning rate associated with the policy updates.
% \end{itemize}

Finally, we remark that the output policy $\widehat{\pi}$ is  a correlated instead of product policy. In the special case of two-player zero-sum robust Markov games, we could alternatively output a product policy as $\widehat{\pi} = \widehat{\pi}_1\times \widehat{\pi}_2$, where 
\begin{align*}
\widehat{\pi}_i = \{\widehat{\pi}_{i,h}\}_{1\le h\le H},\quad {\rm with}\quad \widehat{\pi}_{i,h} = \sum_{k=1}^K \alpha_{k}^K\pi_{i,h}^k,\quad \forall i=1,2.
\end{align*}
The entire procedure is summarized in Algorithm \ref{alg}, with the total number of samples being $KHS\sum_{i=1}^mA_i$.
% It is worth pointing out that the final policy $\widehat{\pi}$ takes the form of a mixture of product policies. In the special case of two-player zero-sum MGs, we can alternatively output a product policy $\widehat{\pi} = \widehat{\pi}_1\times \widehat{\pi}_2$,
% where for each $i = 1, 2$, we take $\widehat{\pi}_i = \{\widehat{\pi}_{i,h}\}_{1\le h\le H}$ with $\widehat{\pi}_{i,h} = \sum_{k=1}^K \alpha_{k}^K\pi_{i,h}^k$.

\subsection{Main results}
\label{sec:thm}

In this section, we present our main theoretical results concerning the sample complexity of RMGs, which include both an upper bound for the sample complexity of the proposed Algorithm \ref{alg} and an information-theoretic lower bound.

The sample complexity of Algorithm \ref{alg} is formalized in the following theorem, the proof of which is postponed to Section \ref{sec:analysis}.

\begin{theorem}\label{thm:upper}
Consider an $m$-player robust general-sum Markov game with uncertainty set $\mathcal{U}^R$ defined in \eqref{eq:def-Rset} and uncertainty level $R$.
Suppose that the step-size
\begin{align}\label{eq:step-size}
    \alpha_k = \frac{c_{\alpha}\log K}{k-1+c_{\alpha}\log K},\quad \eta_k = \sqrt{\frac{\log K}{\alpha_k \min\left\{H,\frac{1}{R}\right\}}},
\end{align}
where $c_{\alpha}>0$ is a constant.
For any $\varepsilon\in (0,H]$, $0\le R<1$, and $0 < \delta < 1$, with probability at least $1 - \delta$, the joint policy $\widehat{\pi}$ returned by Algorithm \ref{alg} achieves an $\varepsilon$-robust CCE (cf. \eqref{eq:gap-multi})
$$
\mathsf{gap}_{\mathsf{CCE}}(\widehat{\pi})\le\varepsilon
$$
as long as the total number of samples
\begin{align}\label{eq:thm-K}
KHS\sum_{i=1}^m A_i \ge \frac{CH^3S\sum_{i=1}^m A_i}{\varepsilon^2}\min\left\{H,\frac{1}{R}\right\}\log^3\left(\frac{KS\sum_{i=1}^mA_i}{\delta}\right),
\end{align}
where $C>0$ is a large enough universal constant. 
\end{theorem}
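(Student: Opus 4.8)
The plan is to follow the template of the Q-FTRL analysis for standard Markov games in \citet{li2022minimax}, adapting each ingredient to the $R$-contamination robust setting where the key structural gain is that the effective per-step variance is controlled by $\min\{H,1/R\}$ rather than $H$. The argument naturally splits into three parts: (i) a per-state, per-step FTRL regret bound that converts the adversarial online-learning guarantee into a CCE-gap bound in terms of the learned $Q$-estimates; (ii) concentration arguments showing that the recursively constructed $\widehat V_{i,h}$ are, with the chosen bonus $\beta_{i,h}$, simultaneously optimistic (upper bounds on $V_{i,h}^{\star,\widehat\pi_{-i},R}$) and not-too-loose (lower bounds, up to the accumulated bonus, on $V_{i,h}^{\widehat\pi,R}$); and (iii) a horizon-recursion that sums the per-step bonuses and, via a Bernstein-type / law-of-total-variance bookkeeping over the $H$ steps, yields the stated sample size.

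First I would fix $(i,h,s)$ and analyze the inner $K$-round FTRL loop. Since $q_{i,h,s}^k$ depends only on $\widehat V_{i,h+1}$ (already frozen) and on the independent samples drawn at step $h$, round $k$ faces a bounded linear reward vector, and the exponential-weights update \eqref{eq:alg-update-pi} with the weighted averaging $\alpha_k^K$ is exactly weighted FTRL. The standard analysis (as in \citet{li2022minimax}, but with step-size $\eta_k$ calibrated to $\min\{H,1/R\}$ rather than $H$ as in \eqref{eq:step-size}) gives a weighted-regret bound of order $\sqrt{\min\{H,1/R\}\log(A_i)/K}$ plus lower-order terms, which is precisely what the first term of the bonus \eqref{eq:def-beta} is designed to dominate. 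Taking a union bound over $(i,h,s)$ and over the $K$ rounds (this is where the $\log^3(KS\sum_i A_i/\delta)$ factor enters) converts this into: for every $i$ and $s$, $\sum_k \alpha_k^K \mathbb{E}_{a_i\sim\pi_{i,h,s}^k} q_{i,h,s}^k \ge \max_{a_i} \sum_k \alpha_k^K q_{i,h}^k(s,a_i) - (\text{FTRL regret})$, which is the CCE no-regret property at step $h$ relative to the empirical $q$'s.

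The concentration layer is the technical heart. I would show by backward induction on $h$ that, on a high-probability event, $\widehat V_{i,h}(s) \ge V_{i,h}^{\star,\widehat\pi_{-i},R}(s)$ for all $i,s$ (optimism), using the robust Bellman equation and the fact that $q_{i,h}^k$ is an unbiased one-sample estimate of the robust Bellman target $r_{i,h} + (1-R)P^0_{h}\widehat V_{i,h+1} + R\min_s \widehat V_{i,h+1}$ — note the $R\min_s$ term is deterministic given $\widehat V_{i,h+1}$, so all the stochasticity sits in the $(1-R)P^k_{i,h}\widehat V_{i,h+1}$ term, whose per-sample variance is at most $(1-R)^2\Var_{P^0}(\widehat V_{i,h+1})\le \min\{H^2, \text{something}/R\}$; combined with the $\alpha_k^K$-weighting this gives fluctuations matching the first term of $\beta_{i,h}$. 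The matching lower bound, $V_{i,h}^{\widehat\pi,R}(s) \ge \widehat V_{i,h}(s) - (\text{accumulated bonus from }h\text{ to }H)$, follows from the same concentration plus the FTRL regret bound and the robust Bellman consistency equation for $\widehat\pi$. Subtracting the two at $h=1$ bounds $\mathsf{gap}_{\mathsf{CCE}}(\widehat\pi)$ by $\sum_{h=1}^H$ of per-step bonus terms.

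The final step is to show $\sum_{h=1}^H \beta_{i,h}(s) \lesssim \varepsilon$ under \eqref{eq:thm-K}. Here I would use that $\sum_k \alpha_k^K \le 1$ and that the variance terms $\Var_{\pi_{i,h,s}^k}(q_{i,h,s}^k)$ telescope across $h$ via a law-of-total-variance argument: $\sum_h \Var$ of one-step robust targets along a trajectory is $O(\min\{H,1/R\}\cdot H)$ rather than $O(H^2\cdot H)$, because each $\widehat V$ is bounded by $H$ and the $(1-R)$-contracted variance contributes at most $\min\{H,1/R\}$ per step after the Bernstein-style aggregation — this is the mechanism by which $\min\{H,1/R\}$ appears in \eqref{eq:thm-K} instead of a second power of $H$. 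Plugging in, $\sum_h \beta_{i,h}(s) \lesssim \sqrt{H^3 \min\{H,1/R\}\log^3(\cdot)/K}$, so demanding this to be $\le \varepsilon$ gives exactly \eqref{eq:thm-K}.

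The main obstacle I anticipate is the law-of-total-variance bookkeeping in the last paragraph done \emph{robustly}: in the non-robust case the telescoping of $\sum_h \Var_{P_h}(V_{h+1})$ is classical, but here the target involves the $R$-contaminated operator $(1-R)P^0 V + R\min V$, and one must verify that the $R\min V$ piece — which is deterministic but couples states through a global minimum — does not break the telescoping and in fact is exactly what caps the accumulated variance at $\min\{H,1/R\}$ per step (this is the ``robustness helps'' phenomenon already exploited by \citet{shi2024sample}). Getting the constants and the interaction between the FTRL regret term and the statistical-error term in $\beta_{i,h}$ to line up — so that a single bonus simultaneously absorbs both — will require care, but no new ideas beyond those in \citet{li2022minimax} and \citet{shi2024sample}.
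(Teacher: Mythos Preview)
Your proposal is correct and follows the same high-level architecture as the paper (FTRL regret + optimistic $\widehat V$ + variance telescoping, with the $R$-contamination structure delivering the $\min\{H,1/R\}$ scaling). Two organizational differences are worth noting.

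First, the paper does \emph{not} prove optimism of $\widehat V_{i,h}$ directly against the true best-response value $V^{\star,\widehat\pi_{-i},R}_{i,h}$. Instead it introduces intermediate \emph{empirical} value functions $\overline V^{\widehat\pi,R}_{i,h}$ and $\overline V^{\star,\widehat\pi_{-i},R}_{i,h}$, built from the sampled $(r^k_{i,h},P^k_{i,h})$ rather than the true $(r_{i,h},P^0_h)$, and uses the four-term decomposition
\[
V^{\star,\widehat\pi_{-i},R}_{i,1}-V^{\widehat\pi,R}_{i,1}
=\bigl(V^{\widehat\pi_i^\star\times\widehat\pi_{-i},R}_{i,1}-\overline V^{\widehat\pi_i^\star\times\widehat\pi_{-i},R}_{i,1}\bigr)
+\bigl(\overline V^{\widehat\pi_i^\star\times\widehat\pi_{-i},R}_{i,1}-\widehat V_{i,1}\bigr)
+\bigl(\widehat V_{i,1}-\overline V^{\widehat\pi,R}_{i,1}\bigr)
+\bigl(\overline V^{\widehat\pi,R}_{i,1}-V^{\widehat\pi,R}_{i,1}\bigr).
\]
Optimism (Lemma~\ref{lem:hatV-ge-barV}) is proved only against the empirical $\overline V^{\star,\widehat\pi_{-i},R}$, so the bonus $\beta_{i,h}$ needs to absorb \emph{only} the FTRL regret (via the second-order bound of Lemma~\ref{lem:FTRL}); the statistical gap between true and empirical is handled separately by Freedman (Lemma~\ref{lem:bound-error-term}). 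Your route---direct optimism against $V^{\star,\widehat\pi_{-i},R}$---would force $\beta_{i,h}$ to absorb both FTRL regret and sampling fluctuation in one shot; this can be made to work (the constant $\min\{H,1/R\}$ term in $\beta_{i,h}$ is large enough), but the paper's separation is cleaner and avoids double-counting.

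Second, the ``law-of-total-variance'' telescoping you anticipate is realized not as a straight $\sum_h\beta_{i,h}$ but as $\sum_{h'}\prod_{j}\widehat P^{\,\widehat\pi,V}_{i,j}\beta_{i,h'}$, i.e.\ the bonuses are propagated through the (robust, empirical) transition matrices. The key algebraic device (Lemma~\ref{lem:beta}) rewrites $\beta_{i,h}$ as a constant plus a telescoping difference $(1-R)\widehat P^{\widehat\pi}_{i,h}(\widetilde{\widehat V}_{i,h+1}\circ\widetilde{\widehat V}_{i,h+1})-\widetilde{\widehat V}_{i,h}\circ\widetilde{\widehat V}_{i,h}$, where $\widetilde{\widehat V}_{i,h}=\widehat V_{i,h}-\min\widehat V_{i,h}$. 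This requires precisely the spread bound $\widetilde{\widehat V}_{i,h}\le 3\min\{H,1/R\}$ (Lemma~\ref{lem:qihs-1}), which is where the $R\min V$ term does its work---exactly the obstacle you flagged. After telescoping, each of the four pieces above is $O\bigl(H\sqrt{\min\{H,1/R\}\log^3(\cdot)/K}\bigr)$, not $\sqrt{H^3\min\{H,1/R\}/K}$ as you wrote; solving for $K$ then gives \eqref{eq:thm-K}.
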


Let us consider the special case of two-player zero-sum RMGs.
Under the same conditions specified in Theorem \ref{thm:upper}, Algorithm \ref{alg} achieves an $\varepsilon$-robust NE as stated in the following corollary.
Its proof is direct and is postponed to Appendix \ref{subsec:proof-cor}.
\begin{corollary}\label{cor:main}
    Consider a two-player zero-sum robust Markov game with uncertainty set $\mathcal{U}^R$ defined in \eqref{eq:def-Rset} and uncertainty level $R$. Suppose that the step-size is taken as \eqref{eq:step-size}.
    For any $\varepsilon\in (0,H]$, $0\le R<1$, and any $0 < \delta < 1$, with probability at least $1 -\delta$, the product policy $\widehat{\pi}_1\times \widehat{\pi}_2$ outputted by Algorithm \ref{alg} achieves an $\varepsilon$-robust NE (cf. \eqref{eq:gap-multi-NE})
    $$
\mathsf{gap}_{\mathsf{NE}}(\widehat{\pi})\le\varepsilon
$$
as long as the total number of samples
\begin{align*}
KHS(A_1+A_2) \ge \frac{CH^3S(A_1+A_2)}{\varepsilon^2}\min\left\{H,\frac{1}{R}\right\}\log^3\left(\frac{KS(A_1+A_2)}{\delta}\right),
\end{align*}
where $C>0$ is a large enough universal constant. 
\end{corollary}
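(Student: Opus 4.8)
The plan is to obtain Corollary~\ref{cor:main} directly from Theorem~\ref{thm:upper} specialized to $m=2$, using the zero-sum identity $r_{2,h}=-r_{1,h}$ to pass from the correlated CCE guarantee to an NE guarantee for the product of marginals. Write $\widehat\pi_1,\widehat\pi_2$ for the output marginals ($\widehat\pi_{i,h}=\sum_k\alpha_k^K\pi_{i,h}^k$), $\bar\pi:=\widehat\pi_1\times\widehat\pi_2$ for the product policy, and $\widetilde\pi_h:=\sum_k\alpha_k^K(\pi_{1,h}^k\times\pi_{2,h}^k)$ for the correlated policy analyzed in Theorem~\ref{thm:upper}. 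The first point is that a robust best response depends only on the opponent's marginal, and $\bar\pi$ and $\widetilde\pi$ share identical marginals; hence $V_{i,1}^{\star,\bar\pi_{-i},R}=V_{i,1}^{\star,\widetilde\pi_{-i},R}$, so the two best-response terms in $\mathsf{gap}_{\mathsf{NE}}(\bar\pi)$ (cf.~\eqref{eq:gap-multi-NE}) are exactly those already controlled in the CCE analysis. Invoking the optimism estimate established for Theorem~\ref{thm:upper}, namely $\widehat V_{i,1}(s)\ge V_{i,1}^{\star,\widehat\pi_{-i},R}(s)$, it then suffices to show $D_{i,1}(s):=\widehat V_{i,1}(s)-V_{i,1}^{\bar\pi,R}(s)\le\varepsilon$ for $i=1,2$, since $\mathsf{gap}_{\mathsf{NE}}(\bar\pi)\le\max_i\max_s D_{i,1}(s)$.

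Next I would bound $D_{i,h}(s):=\widehat V_{i,h}(s)-V_{i,h}^{\bar\pi,R}(s)$ by a per-agent backward induction on the robust Bellman consistency equation. Expanding $\widehat V_{i,h}$ through \eqref{eq:cal-hatV}--\eqref{eq:update-Q} and $V_{i,h}^{\bar\pi,R}$ through its robust Bellman recursion, the difference splits into three pieces: (i) a propagation term $\mathbb{E}_{\bar\pi_h}[(1-R)P^0 D_{i,h+1}]+R(\min\widehat V_{i,h+1}-\min V_{i,h+1}^{\bar\pi,R})$, which telescopes cleanly because the robust operator $v\mapsto(1-R)P^0v+R\min v$ is $1$-Lipschitz in $\ell_\infty$; (ii) the accumulated bonus and concentration error $\sum_h\beta_{i,h}$, which is $O(\varepsilon)$ under the sample size \eqref{eq:thm-K} by the same accounting as in Theorem~\ref{thm:upper}; and (iii) a correlation-mismatch term $\Delta_{i,h}(s):=\mathbb{E}_{\widetilde\pi_h}[f_{i,h}]-\mathbb{E}_{\bar\pi_h}[f_{i,h}]$ with $f_{i,h}:=r_{i,h}+(1-R)P^0\widehat V_{i,h+1}$, reflecting that the algorithm's estimate tracks the correlated $\widetilde\pi$ while the NE gap is measured against the product $\bar\pi$. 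A key feature is that the worst-case term $R\min(\cdot)$ matches on both sides, so no robustness spread $\sup_P-\inf_P$ ever enters $D_{i,h}$; the whole difficulty is concentrated in $\Delta_{i,h}$, which is also precisely the term absent in the general-sum CCE analysis (and, consistently, the reason a product policy is generally \emph{not} an approximate NE for general-sum games).

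The main obstacle is therefore to bound $\sum_h\Delta_{i,h}$, and this is where the zero-sum structure is indispensable. Writing $\Delta_{1,h}(s)=\sum_k\alpha_k^K\,\mathbb{E}_{a_1\sim\pi_{1,h}^k}\langle\pi_{2,h}^k-\widehat\pi_{2,h},\,f_{1,h}(s,a_1,\cdot)\rangle$ and using $r_{1,h}=-r_{2,h}$ to write $f_{1,h}=-\big(r_{2,h}+(1-R)P^0\widehat V_{2,h+1}\big)+(1-R)P^0 W_{h+1}$, where $W_{h+1}:=\widehat V_{1,h+1}+\widehat V_{2,h+1}$, the first group becomes (after taking $\mathbb{E}_{a_1\sim\pi_{1,h}^k}$, and noting that $\langle\pi_{2,h}^k-\widehat\pi_{2,h},\cdot\rangle$ annihilates the constant $R\min\widehat V_{2,h+1}$) the quantity $\sum_k\alpha_k^K\langle\widehat\pi_{2,h}-\pi_{2,h}^k,q_{2,h}^k\rangle$, which is upper bounded by agent~$2$'s weighted FTRL regret and is $O(\varepsilon)$ by the regret bound already proved for Theorem~\ref{thm:upper}. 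The delicate residual is the piece weighted by $W_{h+1}$, governed by the sum of the two optimistic value estimates; the crux lemma I would prove is that $W$ is uniformly small, by summing the two robust Bellman recursions, cancelling the rewards via $r_{1,h}+r_{2,h}=0$, and using $\min\widehat V_{1,h+1}+\min\widehat V_{2,h+1}\le\min W_{h+1}$ together with $W_{H+1}=0$ to obtain a self-bounding recursion whose only driving terms are the bonuses, so that $\|W_h\|_\infty=O(\varepsilon)$ with effective horizon $\min\{H,1/R\}$. Combining (i)--(iii) and the symmetric bound $D_{2,1}\le\varepsilon$ yields $\mathsf{gap}_{\mathsf{NE}}(\bar\pi)\le\varepsilon$ under \eqref{eq:thm-K} with $\sum_i A_i=A_1+A_2$, which is the claimed complexity. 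I expect the control of the $W$-weighted residual to be the hardest step, since it is exactly where the robust correction couples the two agents and where a naive bound would reintroduce the non-vanishing robustness spread.
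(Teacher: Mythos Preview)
Your route is genuinely different from the paper's, and considerably more elaborate. The paper's proof of Corollary~\ref{cor:main} is a few lines that uses Theorem~\ref{thm:upper} purely as a black box: it observes that in the two-player case $\widehat\pi_{-1}=\widehat\pi_2$, $\widehat\pi_{-2}=\widehat\pi_1$, invokes the zero-sum identity $V_{1,1}^{\pi}=-V_{2,1}^{\pi}$, and writes
\begin{align*}
V_{1,1}^{\star,\widehat\pi_2,R}(s)-V_{1,1}^{\widehat\pi_1\times\widehat\pi_2,R}(s)
&= V_{1,1}^{\star,\widehat\pi_2,R}(s)+V_{2,1}^{\widehat\pi_1\times\widehat\pi_2,R}(s)
\le V_{1,1}^{\star,\widehat\pi_2,R}(s)+V_{2,1}^{\star,\widehat\pi_1,R}(s)\\
&= \bigl[V_{1,1}^{\star,\widehat\pi_2,R}(s)-V_{1,1}^{\widehat\pi,R}(s)\bigr]+\bigl[V_{2,1}^{\star,\widehat\pi_1,R}(s)-V_{2,1}^{\widehat\pi,R}(s)\bigr]\le 2\varepsilon,
\end{align*}
the last inequality being exactly the two CCE gaps controlled by Theorem~\ref{thm:upper}. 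There is no reopening of the optimism or the backward recursion, no correlation-mismatch term $\Delta_{i,h}$, and no $W$ lemma; the product policy never has its own value estimated directly.

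By contrast, you re-derive a product-policy analog of the Theorem~\ref{thm:upper} analysis and push all of the zero-sum structure into the control of $W_h=\widehat V_{1,h}+\widehat V_{2,h}$. Two remarks on that program. First, the paper's shortcut uses $V_{1,1}^{\pi,R}=-V_{2,1}^{\pi,R}$ as an equality, which is immediate in the non-robust case but is delicate here since each player takes its own infimum over $\mathcal U^R(P^0)$; in general one only has $V_{1,1}^{\pi,R}+V_{2,1}^{\pi,R}\le 0$, and that inequality points the wrong way for the display above. So your instinct to avoid that identity and work instead with $W$ is well founded. Second, and more concretely for your plan: proving $\|W_h\|_\infty=O(\varepsilon)$ as you outline is not by itself enough. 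A crude bound on the $W$-weighted residual gives at best $|\langle\pi_{2,h}^k-\widehat\pi_{2,h},(1-R)P^0W_{h+1}\rangle|\le 2(1-R)\|W_{h+1}\|_\infty$, and after summing over $h$ in your $D_{i,h}$ recursion this contributes $O(H\varepsilon)$, a factor $H$ too large. To close the argument you would need either a per-step bound on the $W$-piece that is $O(\varepsilon/H)$, or a telescoping of the $W$-residual along the recursion so that it does not accumulate $H$ times; as written, your proposal does not indicate which of these you intend.
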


The following theorem establishes an information-theoretic lower bound for learning RMGs with a generative model.
Its proof is postponed to Appendix \ref{sec:proof-thm-lower}.

\begin{theorem}\label{thm:lower}
Consider any $m\ge 2$, $H\ge 16$, $0<\varepsilon<c_0H$, and any $R\in[0,c_R)$, where $c_0>0$ is a small enough constant and $c_R$ is a constant within $(0,1)$. We can construct a collection of $m$-player general-sum robust Markov games --- denoted by $\{\mathcal{M}\mathcal{G}_\theta\}_{\theta\in\Theta}$, such that for any dataset comprising $N$ independent samples generated from the nominal environment for each state-action pairs $(s,a_i)\in\mathcal{S}\times\mathcal{A}_i$, $1\le i\le m$, one has
\begin{align*}
\inf_{\widehat{\pi}}\max_{\theta\in\Theta}\mathbb{P}_{\theta}\left(\mathsf{gap}_{\mathsf{CCE}}(\widehat{\pi})>\varepsilon\right)\ge\frac{1}{4},
\end{align*}
provided that
$$
NHS\max_{1\le i\le m}A_i \le \frac{C_1H^3S\max_{1\le i\le m}A_i}{\varepsilon^2}\min\left\{H,\frac{1}{R}\right\}.
$$
Here, $C_1$ is some small enough constant, the infimum is taken over all estimators $\widehat{\pi}$, and $\mathbb{P}_{\theta}$ denotes the
probability when the game is $\mathcal{MG}_\theta$ for all $\theta\in\Theta$.
\end{theorem}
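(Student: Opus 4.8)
\textbf{Proof proposal for Theorem \ref{thm:lower}.}

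The plan is to follow the standard two-point (or more precisely, multi-instance) reduction from estimation to testing, tailored to the $R$-contamination structure. First I would build a family of hard instances indexed by $\theta\in\Theta$ that are essentially a ``batch'' of independent single-state-like hard instances, one copy embedded per state in $\mathcal{S}$, so that the per-state difficulty aggregates to the claimed $S$ factor; within each state the construction should concentrate all the game-theoretic difficulty on a single agent's action set, which is what produces the $\max_{1\le i\le m}A_i$ dependence rather than a sum. The core gadget I would use is the classical ``hard MDP'' of Li et al.\ for the non-robust finite-horizon minimax lower bound: a chain of states/steps in which at each relevant $(s,h)$ there is one ``good'' action that slightly boosts the probability of reaching an absorbing rewarding state, and the parameter $\theta$ encodes which action is good. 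The robust twist is that, because of the $R$-contamination model, the effective horizon over which value accumulates is truncated at roughly $\min\{H,1/R\}$ — the adversary can reset to the worst state with probability $R$ at each step — so the value gap per unit of transition perturbation is governed by $\min\{H,1/R\}$ rather than $H$. This is exactly the source of the $\min\{H,1/R\}$ factor in the bound, and I would make it rigorous by computing the robust value function explicitly on the gadget (the robust Bellman equation in the excerpt makes $R\min V$ a clean additive penalty, so the geometric sum defining the value is explicitly summable).

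Next I would relate $\mathsf{gap}_{\mathsf{CCE}}(\widehat\pi)$ to recovering $\theta$: I would show that any policy $\widehat\pi$ with $\mathsf{gap}_{\mathsf{CCE}}(\widehat\pi)\le\varepsilon$ on instance $\mathcal{MG}_\theta$ must, at a constant fraction of the embedded states, put most of its probability mass on the $\theta$-optimal action of the designated agent; hence a low-gap policy induces a test that identifies $\theta$ with constant success probability. The quantitative calibration is to choose the per-state action-advantage parameter $p$ (the size of the $R$-contamination-adjusted transition perturbation) so that the value gap for choosing a wrong action at a single state is on the order of $\varepsilon/\sqrt{S}$ per state in an appropriate $\ell_2$ sense — actually, more carefully, so that being $\varepsilon$-suboptimal forces correctness on $\gtrsim S$ states — while simultaneously ensuring $p$ is small enough that distinguishing the good action from the bad ones at a single state requires $\gtrsim \frac{1}{p^2}\cdot(\text{horizon factor})$ samples. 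Then a standard Assouad or Fano argument over the product-of-states index set, using the generative-model sampling (which gives $N$ i.i.d.\ draws per $(s,a_i)$), lower-bounds the number of samples per state-action pair needed for the induced test, and multiplying through by $S\max_i A_i$ gives the stated $NHS\max_i A_i$ threshold. The KL-divergence computation between $P^0_{h,s,{\bm a}}(\theta)$ and $P^0_{h,s,{\bm a}}(\theta')$ differing in one coordinate by $\pm p$ is the routine $\Theta(p^2)$ estimate, and I would invoke Le Cam / Assouad to convert it into the $\frac14$ lower bound on the max-risk.

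The main obstacle I anticipate is the bookkeeping that simultaneously delivers all four factors $H^3$, $S$, $\max_i A_i$, and $\min\{H,1/R\}$ with the right powers, and in particular getting the $\min\{H,1/R\}$ truncation to interact correctly with the $H^3$ (one power of $H$ from the number of steps over which an error can be made, and the remaining $H^2$ — or $H\cdot\min\{H,1/R\}$ after the robust truncation — from the squared per-step value scale that sets the calibration of $p$). Concretely, one must choose $p \asymp \frac{\varepsilon}{\min\{H,1/R\}\sqrt{SN/\,?}}$-type scaling and then verify both that the induced value gap really is $\ge\varepsilon$ when the test fails and that $Np^2 \lesssim 1$ under the assumed sample budget; threading the constants $c_0$, $c_R$, and the $H\ge 16$ hypothesis through this is delicate but not conceptually hard. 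A secondary subtlety is making the construction a genuine general-sum \emph{game} (rather than a disguised single-agent MDP) so that the CCE gap — which takes a $\max$ over agents of a best-response deviation — coincides, on these instances, with the single designated agent's suboptimality; this is handled by giving every other agent a trivial one-action-effective role or constant rewards, so that only agent $i^\star = \arg\max_i A_i$ can have a positive deviation gain, reducing $\mathsf{gap}_{\mathsf{CCE}}$ to that agent's robust regret against the known behavior of the others.
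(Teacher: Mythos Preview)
Your reduction to a single designated agent's robust MDP and your identification of the $\min\{H,1/R\}$ effective-horizon truncation are both correct and match the paper's argument. The slip is in how you treat the $S$ and $\max_i A_i$ factors: observe that in the theorem's condition $NHS\max_i A_i \le C_1 H^3 S \max_i A_i \min\{H,1/R\}/\varepsilon^2$, both $S$ and $\max_i A_i$ appear on each side and cancel, leaving the genuine content $N \le C_1 H^2 \min\{H,1/R\}/\varepsilon^2$. There is therefore nothing to ``aggregate per state'' and no reason to calibrate a per-state gap like $\varepsilon/\sqrt{S}$; a single informative state with two relevant actions suffices. In the generative-model regime where $N$ is a per-$(s,a)$ budget, an Assouad-over-states argument would gain nothing anyway, since each state already receives its own $N$ samples independently of $S$.

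The paper's construction accordingly has one informative state (state $0$) and places all the multi-hypothesis structure on the \emph{time} axis: $\Theta$ is a Gilbert--Varshamov packing in $\{0,1\}^H$ of size $\ge e^{H/8}$, with $\theta_h$ encoding which of two actions is good at step $h$; the transition-probability gap is $p-q=\Delta\asymp \varepsilon/(H\min\{H,1/R\})$ with $p\asymp\max\{1/H,R\}$. An explicit robust-value computation (via eigendecomposition of the $2\times 2$ worst-case transition matrix) shows that disagreeing with $\theta$ on $\Omega(H)$ steps costs $\Omega(\varepsilon)$ in value, and Fano's inequality over the packing then yields $p_e\ge 1/4$ unless $N\gtrsim p/\Delta^2\asymp H^2\min\{H,1/R\}/\varepsilon^2$. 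The $H^2$ factor comes precisely from this time-indexed packing (one $H$ from the Hamming separation needed to force value loss $\varepsilon$, one $H$ from the $\log|\Theta|$ in Fano cancelling against the $\|\theta-\tilde\theta\|_1\le H$ in the KL sum); your per-state indexing would not produce it. If you drop the $S$-embedding and shift the hypothesis family to the time axis as just described, your plan becomes essentially the paper's.
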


%We now highlight some key implications and comparison with previous works.

%\paragraph{Minimax sample complexity}

According to Theorem \ref{thm:upper}, the sample complexity of Algorithm \ref{alg} is given by (up to logarithmic factors)
$$
\widetilde{O}\left(\frac{H^3S\sum_{i=1}^mA_i}{\varepsilon^2}\min\left\{H,\frac{1}{R}\right\}\right).
$$
When compared to the information-theoretic lower bound established in Theorem \ref{thm:lower}, we observe that this complexity is minimax optimal (up to logarithmic terms), provided the number of players $m$ remains fixed or grows logarithmically with respect to the problem parameters. To the best of our knowledge, this is the first algorithm for RMGs to achieve minimax optimal sample complexity.

%\paragraph{Full $\varepsilon$-range}
Furthermore, our sample complexity holds for the entire range of $\varepsilon\in(0,H]$. 
This is particularly favoring for data-limited applications, as it implies that the algorithm incurs no burn-in cost, making it efficient in scenarios where collecting sample is costly.

\section{Analysis}
\label{sec:analysis}

\subsection{Notations and technical lemmas}
\label{subsec:notations}

We first introduce some notations.
Define $\overline{V}$ as the estimation for $V$ by replacing all expectation operation with the empirical estimation calculated according to $(r_{i,h}^k,P_{i,h}^k)$.
Specifically,
define %$\overline{V}^{\star,\widehat{\pi}_{-i},R}$ as the estimation for the value function of the $i$-th agent under policy $\widehat{\pi}_{-i}$, and the optimized policy for the $i$-th agent:
\begin{align}
\overline{V}_{i,h}^{\widehat{\pi},R}(s) 
&=\sum_{k=1}^K\alpha_{k}^K\mathbb{E}_{a_i\sim\pi_{i,h,s}^k}\left[r_{i,h}^k(s,a_i) + (1-R)P_{i,h}^k(s,a_i)\overline{V}_{i,h+1}^{\widehat{\pi},R} + R\min\overline{V}_{i,h+1}^{\widehat{\pi},R}\right],\label{eq:def-bar-V-pi}\\
\overline{V}_{i,h}^{\star,\widehat{\pi}_{-i},R}(s) 
&= \max_{a_i\in\mathcal{A}_i}\sum_{k=1}^K\alpha_{k}^K\left[r_{i,h}^k(s,a_i) + (1-R)P_{i,h}^k(s,a_i)\overline{V}_{i,h+1}^{\star,\widehat{\pi}_{-i},R} + R\min\overline{V}_{i,h+1}^{\star,\widehat{\pi}_{-i},R} \right].\label{eq:def-bar-V-star}
\end{align}
where $P_{i,h}^k(s,a_i) = P_{i,h}^k(\cdot|s,a_i)\in\mathbb{R}^{S}$ and $\overline{V}_{i,H+1}^{\widehat{\pi},R}=\overline{V}_{i,H+1}^{\star.\widehat{\pi}_{-i},R}=0$.
%$\widehat{\pi}_i^{\star} = \{\widehat{\pi}_{i,h}^{\star}\}_{h=1}^H$ with
Moreover, define
\begin{align}\label{eq:def-pistari}
\widehat{\pi}_{i}^{\star} = \{\widehat{\pi}_{i,h}^{\star}\}_{h=1}^H =\arg\max_{\pi_{i}:\mathcal{S}\times[H]\to\Delta(\mathcal{A}_i)} V_{i,1}^{\pi_{i}\times \widehat{\pi}_{-i},R},
\end{align}
and the corresponding value function estimation $\overline{V}_{i,h}^{\widehat{\pi}_i^{\star}\times\widehat{\pi}_{-i},R}$ with $\overline{V}_{i,H+1}^{\widehat{\pi}_i^\star,\widehat{\pi}_{-i},R}=0$ as
\begin{align}
\overline{V}_{i,h}^{\widehat{\pi}_i^{\star}\times\widehat{\pi}_{-i},R}(s) 
&= \sum_{k=1}^K\alpha_{k}^K\mathbb{E}_{a_i\sim\widehat{\pi}_{i,h,s}^{\star}}\left[r_{i,h}^k(s,a_i) + (1-R)P_{i,h}^k(s,a_i)\overline{V}_{i,h+1}^{\widehat{\pi}_i^{\star}\times\widehat{\pi}_{-i},R} + R\min\overline{V}_{i,h+1}^{\widehat{\pi}_i^{\star}\times\widehat{\pi}_{-i},R} \right],\label{eq:def-bar-V-pi-star}
\end{align}
where $\widehat{\pi}_{i,h,s}^{\star} = \widehat{\pi}_{i,h}^{\star}(\cdot|s)\in\Delta(\mathcal{A}_i)$.
% Moreover, define $\overline{V}^{\star,\widehat{\pi}_{-i},R}$ as the estimation with $i$-th agent's policy optimized:
% \begin{align}
% \end{align}
%
Define the expected reward and the nominal transition probability matrix under policy $\widehat{\pi}$ as 
\begin{align}
r_{i,h}^{\widehat{\pi}}(s) 
&= \mathbb{E}_{{\bm a}\sim\widehat{\pi}} r_{i,h}(s,{\bm a}) = \sum_{k=1}^K\alpha_k^K \mathbb{E}_{a_j\sim\pi_{j,h,s}^k} r_{i,h}(s,{\bm a}) \label{eq:def-r-pi}\\
&= \sum_{k=1}^K\alpha_k^K \mathbb{E}_{a_i\sim\pi_{i,h,s}^k}\left[\mathbb{E}_{a_j\sim\pi_{j,h,s}^k,j\neq i} [r_{i,h}(s,{\bm a})|a_i]\right],\label{eq:def-r-pi-2}\\
P_{h,s}^{\widehat{\pi}}
&=\mathbb{E}_{{\bm a}\sim\widehat{\pi}} P_{h,s,{\bm a}}^0=\sum_{k=1}^K\alpha_k^K\mathbb{E}_{a_j\sim\pi_{j,h,s}^k} P_{h,s,{\bm a}}^0\label{eq:def-P-pi}\\
&=\sum_{k=1}^K\alpha_k^K\mathbb{E}_{a_i\sim\pi_{i,h,s}^k}\left[\mathbb{E}_{a_j\sim\pi_{j,h,s}^k,j\neq i}\left[P_{h,s,{\bm a}}^0|a_i\right]\right].\label{eq:def-P-pi-2}
\end{align}
Similarly, the expected reward and the nominal transition probability matrix under policy $\widehat{\pi}_i^{\star}\times\widehat{\pi}_{-i}$ is denoted as
\begin{align}
r_{i,h}^{\widehat{\pi}_i^{\star}\times\widehat{\pi}_{-i}}(s) &= \mathbb{E}_{{\bm a}\sim\widehat{\pi}_i^{\star}\times\widehat{\pi}_{-i}} r_{i,h}(s,{\bm a}),\label{eq:def-r-pi-star}\\
P_{h}^{\widehat{\pi}_i^{\star}\times\widehat{\pi}_{-i}}(s)&=\mathbb{E}_{{\bm a}\sim\widehat{\pi}_i^{\star}\times\widehat{\pi}_{-i}} P_{h,s,{\bm a}}^0.\label{eq:def-P-pi-star}
\end{align}

In addition, we define an empirical probability transition matrix $\widehat{P}_{i,h}^{\widehat{\pi}}\in\mathbb{R}^{S\times S}$ whose $s$-th row is 
\begin{align}\label{eq:def-hatP-pi}
\widehat{P}_{i,h,s}^{\widehat{\pi}}=
\sum_{k=1}^K\alpha_k^K\mathbb{E}_{a_i\sim\pi_{i,h,s}^k}P_{i,h}^k(s,a_i).
\end{align}

Finally, define $P_{h}^{\widehat{\pi},V}\in\mathbb{R}^{S\times S}$(resp. $P_{h}^{\widehat{\pi}_i^{\star}\times\widehat{\pi}_{-i},V}$, $\widehat{P}_{i,h}^{\widehat{\pi},V}$) as the matrix with the $s$-th row
\begin{align*}
P_{h,s}^{\widehat{\pi},V}: &= \arg\min_{\mathcal{P}\in \mathcal{U}^{R}\left(P_{h,s}^{\widehat{\pi}}\right)} \mathcal{P}V,\quad P_{h,s}^{\widehat{\pi}_i^{\star}\times\widehat{\pi}_{-i},V}: = \arg\min_{\mathcal{P}\in \mathcal{U}^{R}(P_{h,s}^{\widehat{\pi}_i^{\star}\times\widehat{\pi}_{-i}})} \mathcal{P}V,\nonumber\\
\widehat{P}_{i,h,s}^{\widehat{\pi},V}: &= \arg\min_{\mathcal{P}\in \mathcal{U}^{R}\left(\widehat{P}_{i,h,s}^{\widehat{\pi}}\right)} \mathcal{P}V,
\end{align*}
which represents the probability transition kernel in the uncertainty set $\mathcal{U}^{R}(P_{h,s}^{\widehat{\pi}})$ (resp. $P_{h,s}^{\widehat{\pi}_i^{\star}\times\widehat{\pi}_{-i}}$, $\widehat{P}_{i,h,s}^{\widehat{\pi}}$) that leads to the worst-case value for vector $V\in\mathbb{R}^S$.
It is obvious that the above definitions are equivalent to
\begin{align}
P_{h,s}^{\widehat{\pi},V} &= (1-R)P_{h,s}^{\widehat{\pi}} + R e_{\arg\min V},\quad P_{h,s}^{\widehat{\pi}^{\star}\times\widehat{\pi}_{-i},V} = (1-R)P_{h,s}^{\widehat{\pi}^{\star}\times\widehat{\pi}_{-i}} + R e_{\arg\min V},\label{eq:def-PV-pi}\\
\widehat{P}_{i,h,s}^{\widehat{\pi},V} &= (1-R)\widehat{P}_{i,h,s}^{\widehat{\pi}} + R e_{\arg\min V},\label{eq:def-hatP-pi-V}
\end{align}
where $e_i\in\mathbb{R}^{S}$  denotes the $i$-th standard basis vector.
Then the estimation $\widehat{V}_{i,h}$ in \eqref{eq:cal-hatV} and the $\overline{V}_{i,h}^{\widehat{\pi},R}$ in \eqref{eq:def-bar-V-pi} can be rewritten as
\begin{align}
\widehat{V}_{i,h}(s) &= \min\left\{ \sum_{k=1}^K\alpha_k^K\mathbb{E}_{a_i\sim\pi_{i,h,s}^k}r_{i,h}^k(s,a_i)+\widehat{P}_{i,h,s}^{\widehat{\pi},\widehat{V}_{i,h+1}}\widehat{V}_{i,h+1} + \beta_{i,h}(s), H-h+1\right\},\label{eq:cal-hatV-2}\\
\overline{V}_{i,h}^{\widehat{\pi},R}(s) 
&=\sum_{k=1}^K\alpha_{k}^K\mathbb{E}_{a_i\sim\pi_{i,h,s}^k}r_{i,h}^k(s,a_i) + \widehat{P}_{i,h,s}^{\widehat{\pi},\overline{V}_{i,h+1}^{\widehat{\pi},R}}\overline{V}_{i,h+1}^{\widehat{\pi},R}.\label{eq:cal-barVhatpi-2}
\end{align}

Before diving into the proof details, we first present a technical lemma that states some properties about $\widehat{V}_{i,h}$.
Its proof is deferred to Appendix~\ref{subsec:proof-lem-qihs}.

% The following lemma states some properties about step-size $\alpha_k$, which is proved in Lemma 1 in \citet{li2022minimax}.
% \begin{lemma}\label{lem:alpha}
% Assume that $c_{\alpha} \ge 24$, and $K\ge c_\alpha \log K+1$. For any $k\ge 1$, we have
% \begin{align*}
% \sum_{k=1}^K\alpha_k^K = 1,\qquad \max_{1\le k\le K}\alpha_{k}^K\le \frac{2c_{\alpha}\log K}{K},\qquad \max_{1\le k \le K/2}\alpha_k^K\le \frac{1}{K^6}.
% \end{align*}
% \end{lemma}

\begin{lemma}\label{lem:qihs-1}
For any $i\in[m]$, $1\le h\le H$, $s\in\mathcal{S}$ and $1\le k\le K$, assume that $K$ satisfies \eqref{eq:thm-K}. We have
\begin{align}
\widehat{V}_{i,h} - \min\widehat{V}_{i,h} &\le 3\sum_{h'=h}^H(1-R)^{h'-h}\le 3\min\left\{H,\frac{1}{R}\right\}.\label{eq:upper-widehatV}
\end{align}
\end{lemma}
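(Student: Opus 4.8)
\textbf{Proof proposal for Lemma \ref{lem:qihs-1}.}

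The plan is to establish the bound $\widehat{V}_{i,h}-\min\widehat{V}_{i,h}\le 3\sum_{h'=h}^{H}(1-R)^{h'-h}$ by backward induction on $h$, running from $h=H+1$ (where $\widehat{V}_{i,H+1}\equiv 0$ and the claim is trivial) down to $h=1$. The key structural fact is the rewriting in \eqref{eq:cal-hatV-2}, namely $\widehat{V}_{i,h}(s)=\min\{\sum_{k}\alpha_k^K\mathbb{E}_{a_i\sim\pi_{i,h,s}^k}r_{i,h}^k(s,a_i)+\widehat{P}_{i,h,s}^{\widehat\pi,\widehat V_{i,h+1}}\widehat V_{i,h+1}+\beta_{i,h}(s),\,H-h+1\}$. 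First I would analyze the difference $\widehat{V}_{i,h}(s)-\widehat{V}_{i,h}(s')$ for two states $s,s'$. On the term $\widehat{P}_{i,h,s}^{\widehat\pi,\widehat V_{i,h+1}}\widehat V_{i,h+1}$, using \eqref{eq:def-hatP-pi-V} this equals $(1-R)\widehat{P}_{i,h,s}^{\widehat\pi}\widehat V_{i,h+1}+R\min\widehat V_{i,h+1}$; since $\widehat{P}_{i,h,s}^{\widehat\pi}$ is a probability vector, the $(1-R)\widehat{P}_{i,h,s}^{\widehat\pi}\widehat V_{i,h+1}$ piece lies between $(1-R)\min\widehat V_{i,h+1}$ and $(1-R)\max\widehat V_{i,h+1}$, so across any two states this term varies by at most $(1-R)(\max\widehat V_{i,h+1}-\min\widehat V_{i,h+1})$. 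The reward term $\sum_k\alpha_k^K\mathbb{E}_{a_i\sim\pi_{i,h,s}^k}r_{i,h}^k(s,a_i)$ lies in $[0,1]$ and $\sum_k\alpha_k^K=1$, so it contributes at most $1$ to the difference. The remaining issue is the bonus term $\beta_{i,h}(s)$, which must be shown to be at most of order $1$ (say $\le 1$) under the sample-size condition \eqref{eq:thm-K}.

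The bonus control is the step I expect to be the main obstacle. From \eqref{eq:def-beta}, $\beta_{i,h}(s)=c_{\mathsf b}\sqrt{\log^3(\cdot)/(K\min\{H,1/R\})}\sum_k\alpha_k^K\{\mathsf{Var}_{\pi_{i,h,s}^k}(q_{i,h,s}^k)+\min\{H,1/R\}\}$. The key is that $q_{i,h}^k=r_{i,h}^k+(1-R)P_{i,h}^k\widehat V_{i,h+1}+R\min\widehat V_{i,h+1}$, and by the induction hypothesis $\widehat V_{i,h+1}$ takes values in an interval of length at most $3\min\{H,1/R\}$ (plus it is nonnegative and bounded by $H-h$); hence each coordinate of $q_{i,h,s}^k$ lies in an interval of length $O(\min\{H,1/R\})$ — crucially, the $(1-R)P_{i,h}^k\widehat V_{i,h+1}$ term only sees the \emph{spread} of $\widehat V_{i,h+1}$ because $P_{i,h}^k$ is a probability vector, and this spread is $O(\min\{H,1/R\})$ by the inductive bound. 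Therefore $\mathsf{Var}_{\pi_{i,h,s}^k}(q_{i,h,s}^k)\lesssim \min\{H,1/R\}^2$, and $\sum_k\alpha_k^K\{\mathsf{Var}+\min\{H,1/R\}\}\lesssim \min\{H,1/R\}^2$, giving $\beta_{i,h}(s)\lesssim \sqrt{\log^3(\cdot)\cdot\min\{H,1/R\}^2/(K\min\{H,1/R\})}=\sqrt{\min\{H,1/R\}\log^3(\cdot)/K}$. Finally, the sample condition \eqref{eq:thm-K} gives $K\gtrsim (H^3/\varepsilon^2)\min\{H,1/R\}\log^3(\cdot)\ge \min\{H,1/R\}\log^3(\cdot)$ (using $\varepsilon\le H$), so $\beta_{i,h}(s)\le 1$ for $c_{\mathsf b}$ chosen appropriately relative to $C$. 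A minor subtlety is that $\beta_{i,h}(s')$ could be as small as $0$, so the difference $\beta_{i,h}(s)-\beta_{i,h}(s')$ is bounded by $\max_s\beta_{i,h}(s)\le 1$, not just by the bonus gap.

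Putting the three pieces together: for any $s,s'$, $\widehat{V}_{i,h}(s)-\widehat{V}_{i,h}(s')\le 1+(1-R)(\max\widehat V_{i,h+1}-\min\widehat V_{i,h+1})+1$. Wait — I need $3$ not $2$ on the leading constant, so I should instead bound reward-contribution plus bonus-contribution together by at most $3-$ (something), or more cleanly: $\widehat{V}_{i,h}(s)-\min\widehat V_{i,h}\le 1 + 1 + (1-R)(\max\widehat V_{i,h+1}-\min\widehat V_{i,h+1})+ (\text{slack})$; using the induction hypothesis $\max\widehat V_{i,h+1}-\min\widehat V_{i,h+1}\le 3\sum_{h'=h+1}^H(1-R)^{h'-h-1}$ we get $\widehat{V}_{i,h}(s)-\min\widehat V_{i,h}\le 3 + 3(1-R)\sum_{h'=h+1}^H(1-R)^{h'-h-1}=3\sum_{h'=h}^H(1-R)^{h'-h}$, where the constant $3$ comfortably absorbs the reward contribution ($\le 1$) and the bonus contribution ($\le 1$) with room to spare; one then takes the maximum over $s$ to pass from $\widehat V_{i,h}(s)-\min\widehat V_{i,h}$ to the stated inequality, and closes the induction. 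The final bound $\sum_{h'=h}^H(1-R)^{h'-h}\le\min\{H,1/R\}$ is immediate: the sum has at most $H-h+1\le H$ terms each $\le 1$, and it is also bounded by the geometric series $\sum_{j\ge 0}(1-R)^j=1/R$.
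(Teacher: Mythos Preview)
Your induction scheme and the basic recursion
\[
\widehat V_{i,h}-\min\widehat V_{i,h}\le 1+(1-R)\bigl(\max\widehat V_{i,h+1}-\min\widehat V_{i,h+1}\bigr)+\max_s\beta_{i,h}(s)
\]
are fine, and match the paper's first move. The gap is in the bonus control: you slipped in the algebra. From \eqref{eq:def-beta}, $\beta_{i,h}(s)=\rho_0\sum_k\alpha_k^K\{\mathsf{Var}_{\pi_{i,h,s}^k}(q_{i,h,s}^k)+\min\{H,1/R\}\}$ with $\rho_0=c_{\mathsf b}\sqrt{\log^3(\cdot)/(K\min\{H,1/R\})}$. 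The bracketed sum multiplies $\rho_0$ \emph{outside} the square root; with your variance bound $\mathsf{Var}\lesssim\min\{H,1/R\}^2$ this yields
\[
\beta_{i,h}(s)\;\lesssim\;\rho_0\,\min\{H,1/R\}^2\;=\;\min\{H,1/R\}^{3/2}\sqrt{\log^3(\cdot)/K},
\]
not $\sqrt{\min\{H,1/R\}\log^3(\cdot)/K}$ as you wrote. Condition \eqref{eq:thm-K}, after dividing by $HS\sum_iA_i$ and using only $\varepsilon\le H$, gives merely $K\gtrsim \min\{H,1/R\}\log^3(\cdot)$; plugging this in yields $\beta_{i,h}(s)\lesssim \min\{H,1/R\}$, which can be as large as $H$. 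Hence you cannot conclude $\beta_{i,h}(s)\le 2$, and the induction does not close for the full range $\varepsilon\in(0,H]$. Your argument would go through only under the stronger requirement $K\gtrsim\min\{H,1/R\}^3\log^3(\cdot)$, i.e.\ only when $\varepsilon\lesssim H/\min\{H,1/R\}$.

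The paper circumvents this by not bounding each $\beta_{i,h}$ individually. It first unrolls the one--step inequality all the way down,
\[
\widetilde{\widehat V}_{i,h}\le \sum_{h'=h}^H(1-R)^{h'-h}+\sum_{h'=h}^H(1-R)^{h'-h}\prod_{j=h}^{h'-1}\widehat P_{i,j}^{\widehat\pi}\beta_{i,h'},
\]
and then invokes the refined bonus estimate (Lemma~\ref{lem:beta}, \eqref{eq:lem-beta-2}) which writes $\beta_{i,h'}$ as a \emph{telescoping} difference $4\rho_0\bigl((1-R)\widehat P_{i,h'}^{\widehat\pi}(\widetilde{\widehat V}_{i,h'+1}\circ\widetilde{\widehat V}_{i,h'+1})-\widetilde{\widehat V}_{i,h'}\circ\widetilde{\widehat V}_{i,h'}\bigr)$ plus a remainder $14\rho_0\min\{H,1/R\}$. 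Summed over $h'\ge h+1$, the telescoping part collapses to a single negative term that cancels the $O(\rho_0\min\{H,1/R\}^2)$ contribution at $h'=h$, leaving only $14\rho_0\min\{H,1/R\}\sum_{h'}(1-R)^{h'-h}$. The induction then closes as soon as $14\rho_0\min\{H,1/R\}\le 2$, which is exactly the condition $K\gtrsim\min\{H,1/R\}\log^3(\cdot)$ furnished by \eqref{eq:thm-K}. In short, the telescoping of the squared value spreads is the missing idea; the crude single--step variance bound is one factor $\min\{H,1/R\}$ too loose.
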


\subsection{Proof of Theorem \ref{thm:upper}}

According to the definition of $\mathsf{gap}_{\mathsf{CCE}}(\widehat{\pi})$ in \eqref{eq:gap-multi}, it suffices to prove that for all $i\in[m]$,
\begin{align*}
V_{i,1}^{\star,\widehat{\pi}_{-i},R}(s) - V_{i,1}^{\widehat{\pi},R}(s) \le \varepsilon, \quad \forall s\in\mathcal{S}.
\end{align*}
Towards this, we make the following decomposition:
\begin{align}
&\quad V_{i,h}^{\star,\widehat{\pi}_{-i},R} - V_{i,h}^{\widehat{\pi},R} \nonumber\\
&= V_{i,h}^{\widehat{\pi}_i^{\star}\times\widehat{\pi}_{-i},R} - \overline{V}_{i,h}^{\widehat{\pi}_i^{\star}\times\widehat{\pi}_{-i},R} + \overline{V}_{i,h}^{\widehat{\pi}_i^{\star}\times\widehat{\pi}_{-i},R} - \widehat{V}_{i,h} + \widehat{V}_{i,h} - \overline{V}_{i,h}^{\widehat{\pi},R} + \overline{V}_{i,h}^{\widehat{\pi},R} - V_{i,h}^{\widehat{\pi},R},\label{eq:proof-thm-decom}
\end{align}
where we use the fact that $V_{i,h}^{\star,\widehat{\pi}_{-i},R}=V_{i,h}^{\widehat{\pi}_i^{\star}\times\widehat{\pi}_{-i},R}$ (cf. \eqref{eq:def-pistari}).
% According to the definition of $\widehat{\pi}$, we have that for all $1\le h\le H$,
% \begin{align*}
% V_{i,h}^{\star,\widehat{\pi}_{-i},R}\nonumber\\
% V_{i,h}^{\widehat{\pi},R}
% \end{align*}
% with $V_{i,H+1}^{\star,\widehat{\pi}_{-i},R} =0 $, and $V_{i,H+1}^{\widehat{\pi},R} = 0$.

Below we shall complete the proof of Theorem \ref{thm:upper} in four steps: we first bound each term in the RHS of \eqref{eq:proof-thm-decom} separately, and then combine the results.

\paragraph{Step 1. bounding $V_{i,h}^{\widehat{\pi}_i^{\star}\times\widehat{\pi}_{-i},R} - \overline{V}_{i,h}^{\widehat{\pi}_i^{\star}\times\widehat{\pi}_{-i},R}$ and $\overline{V}_{i,h}^{\widehat{\pi},R} - V_{i,h}^{\widehat{\pi},R}$.}

We start from decomposing the difference term $\overline{V}_{i,h}^{\widehat{\pi},R} - V_{i,h}^{\widehat{\pi},R}$.
According to the definition of $V_{i,h}^{\widehat{\pi},R}$ and policy $\widehat{\pi}$, we have
\begin{align*}
V_{i,h}^{\widehat{\pi},R}(s) &= \sum_{k=1}^K\alpha_{k}^K \mathbb{E}_{a_j\sim\pi_{j,h,s}^k}\left[r_{i,h}(s,{\bm a}) + (1-R) P_{h,s,{\bm a}}^0 V_{i,h+1}^{\widehat{\pi},R} + R\min V_{i,h+1}^{\widehat{\pi},R}\right]\nonumber\\
&=
%\mathbb{E}_{a_i\sim\pi_{i,h,s}^k}
r_{i,h}^{\widehat{\pi}}(s) + (1-R) P_{h,s}^{\widehat{\pi}} V_{i,h+1}^{\widehat{\pi},R} + R\min V_{i,h+1}^{\widehat{\pi},R}\nonumber\\
&=r_{i,h}^{\widehat{\pi}}(s) +P_{h,s}^{\widehat{\pi},V_{i,h+1}^{\widehat{\pi},R}}V_{i,h+1}^{\widehat{\pi},R},
\end{align*}
where $r_{i,h}^{\widehat{\pi}}$ and $P_{h,s}^{\widehat{\pi}}$ are defined in \eqref{eq:def-r-pi} and \eqref{eq:def-P-pi}, respectively, and $P_{h,s}^{\widehat{\pi},V}$ is defined in \eqref{eq:def-PV-pi}.

For convenience, we introduce an error term as 
\begin{align}\label{eq:def-zeta-pi}
\zeta_{i,h}^{\widehat{\pi},R} = \overline{V}_{i,h}^{\widehat{\pi},R} - r_{i,h}^{\widehat{\pi}} - P_{h}^{\widehat{\pi},\overline{V}_{i,h+1}^{\widehat{\pi},R}}\overline{V}_{i,h+1}^{\widehat{\pi},R}.
\end{align}
According to the definition of $\overline{V}_{i,h}^{\widehat{\pi},R}$, we have
\begin{align*}
\overline{V}_{i,h}^{\widehat{\pi},R} - V_{i,h}^{\widehat{\pi},R} &=
\zeta_{i,h}^{\widehat{\pi},R} + r_{i,h}^{\widehat{\pi}} + P_{h}^{\widehat{\pi},\overline{V}_{i,h+1}^{\widehat{\pi},R}}\overline{V}_{i,h+1}^{\widehat{\pi},R}- V_{i,h}^{\widehat{\pi},R}\nonumber\\
&=\zeta_{i,h}^{\widehat{\pi},R} +P_{h}^{\widehat{\pi},\overline{V}_{i,h+1}^{\widehat{\pi},R}}\overline{V}_{i,h+1}^{\widehat{\pi},R}- P_{h}^{\widehat{\pi},V_{i,h+1}^{\widehat{\pi},R}}V_{i,h+1}^{\widehat{\pi},R}\nonumber\\
&\overset{(i)}{\le}\zeta_{i,h}^{\widehat{\pi},R} + P_{h}^{\widehat{\pi},{V}_{i,h+1}^{\widehat{\pi},R}}\left(\overline{V}_{i,h+1}^{\widehat{\pi},R} - V_{i,h+1}^{\widehat{\pi},R}\right)\le\sum_{h'=h}^H\prod_{j=h}^{h'-1}P_{j}^{\widehat{\pi},{V}_{i,j+1}^{\widehat{\pi},R}}\zeta_{i,h'}^{\widehat{\pi},R},
%\sum_{k=1}^K\alpha_{k}^K\mathbb{E}_{a_i\sim\pi_{i,h,s}^k}\left[(r_{i,h}^k-r_{i,h}^{\widehat{\pi}_{-i}})(s,a_i) + (1-R)(P_{i,h}^k-P_{h}^{\widehat{\pi}_{-i}})(s,a_i)\overline{V}_{i,h+1}^{\widehat{\pi},R} + R\min\overline{V}_{i,h+1}^{\widehat{\pi},R}\right]
\end{align*}
where (i) uses the fact that $P_{h}^{\widehat{\pi},V}V\le P_{h}^{\widehat{\pi},V'}V$ for any pair of value functions $V$ and $V'$.
Similarly, the term $V_{i,h}^{\widehat{\pi}_i^{\star}\times\widehat{\pi}_{-i},R} - \overline{V}_{i,h}^{\widehat{\pi}_i^{\star}\times\widehat{\pi}_{-i},R}$ can be decomposed by introducing the error term
\begin{align}\label{eq:def-zeta-pi-star}
\zeta_{i,h}^{\widehat{\pi}_i^{\star}\times\widehat{\pi}_{-i},R} = \overline{V}_{i,h}^{\widehat{\pi}_i^{\star}\times\widehat{\pi}_{-i},R} - r_{i,h}^{\widehat{\pi}_i^{\star}\times\widehat{\pi}_{-i}} - P_{h}^{\widehat{\pi}_i^{\star}\times\widehat{\pi}_{-i},\overline{V}_{i,h+1}^{\widehat{\pi}_i^{\star}\times\widehat{\pi}_{-i},R}}\overline{V}_{i,h+1}^{\widehat{\pi}_i^{\star}\times\widehat{\pi}_{-i},R},
\end{align}
where $r_{i,h}^{\widehat{\pi}_i^{\star}\times\widehat{\pi}_{-i}}$ and $P_{h}^{\widehat{\pi}_i^{\star}\times\widehat{\pi}_{-i},V}$ are defined in \eqref{eq:def-r-pi-star} and \eqref{eq:def-P-pi-star}, respectively, and $P_{h}^{\widehat{\pi}_i^{\star}\times\widehat{\pi}_{-i},V}$ is defined in \eqref{eq:def-PV-pi}.
The the term $V_{i,h}^{\widehat{\pi}_i^{\star}\times\widehat{\pi}_{-i},R} - \overline{V}_{i,h}^{\widehat{\pi}_i^{\star}\times\widehat{\pi}_{-i},R}$ is decomposed as 
\begin{align*}
V_{i,h}^{\widehat{\pi}_i^{\star}\times\widehat{\pi}_{-i},R} - \overline{V}_{i,h}^{\widehat{\pi}_i^{\star}\times\widehat{\pi}_{-i},R}
&= -\zeta_{i,h}^{\widehat{\pi}_i^{\star}\times\widehat{\pi}_{-i},R} +  V_{i,h}^{\widehat{\pi}_i^{\star}\times\widehat{\pi}_{-i},R} - r_{i,h}^{\widehat{\pi}_i^{\star}\times\widehat{\pi}_{-i}} - P_{h}^{\widehat{\pi}_i^{\star}\times\widehat{\pi}_{-i},\overline{V}_{i,h+1}^{\widehat{\pi}_i^{\star}\times\widehat{\pi}_{-i},R}}\overline{V}_{i,h+1}^{\widehat{\pi}_i^{\star}\times\widehat{\pi}_{-i},R}\nonumber\\
&=-\zeta_{i,h}^{\widehat{\pi}_i^{\star}\times\widehat{\pi}_{-i},R} + P_{h}^{\widehat{\pi}_i^{\star}\times\widehat{\pi}_{-i},{V}_{i,h+1}^{\widehat{\pi}_i^{\star}\times\widehat{\pi}_{-i},R}}V_{i,h+1}^{\widehat{\pi}_i^{\star}\times\widehat{\pi}_{-i},R} - P_{h}^{\widehat{\pi}_i^{\star}\times\widehat{\pi}_{-i},\overline{V}_{i,h+1}^{\widehat{\pi}_i^{\star}\times\widehat{\pi}_{-i},R}}\overline{V}_{i,h+1}^{\widehat{\pi}_i^{\star}\times\widehat{\pi}_{-i},R}\nonumber\\
&\le -\zeta_{i,h}^{\widehat{\pi}_i^{\star}\times\widehat{\pi}_{-i},R} + P_{h}^{\widehat{\pi}_i^{\star}\times\widehat{\pi}_{-i},\overline{V}_{i,h+1}^{\widehat{\pi}_i^{\star}\times\widehat{\pi}_{-i},R}}\left({V}_{i,h+1}^{\widehat{\pi}_i^{\star}\times\widehat{\pi}_{-i},R} - \overline{V}_{i,h+1}^{\widehat{\pi}_i^{\star}\times\widehat{\pi}_{-i},R}\right)\nonumber\\
& =-\sum_{h'=h}^H\prod_{j=h}^{h'-1}P_{j}^{\widehat{\pi}_i^{\star}\times\widehat{\pi}_{-i},\overline{V}_{i,j+1}^{\widehat{\pi}_i^{\star}\times\widehat{\pi}_{-i},R}}\zeta_{i,h'}^{\widehat{\pi}_i^{\star}\times\widehat{\pi}_{-i},R}. 
\end{align*}

The following lemma provides an upper bound for our decomposed terms.
Its proof is deferred to Appendix~\ref{subsec:proof-lem-bound-error-term}.
\begin{lemma}\label{lem:bound-error-term}
Assume that $K$ satisfies \eqref{eq:thm-K}.
With probability at least $1-\delta$, we have
\begin{align}
\left\|\sum_{h=1}^H\prod_{j=1}^{h-1}P_{j}^{\widehat{\pi},{V}_{i,j+1}^{\widehat{\pi},R}}\zeta_{i,h}^{\widehat{\pi},R}\right\|_{\infty}
&\lesssim  H\sqrt{\frac{\min\{H,\frac{1}{R}\}\log^3\frac{KS\sum_{i=1}^mA_i}{\delta}}{K}},
\label{eq:lem-bound-error-term-pi}\\
\left\|\sum_{h=1}^H\prod_{j=1}^{h-1}P_{j}^{\widehat{\pi}_i^{\star}\times\widehat{\pi}_{-i},\overline{V}_{i,j+1}^{\widehat{\pi}_i^{\star}\times\widehat{\pi}_{-i},R}}\zeta_{i,h}^{\widehat{\pi}_i^{\star}\times\widehat{\pi}_{-i},R}\right\|_{\infty} 
&\lesssim  H\sqrt{\frac{\min\{H,\frac{1}{R}\}\log^3\frac{KS\sum_{i=1}^mA_i}{\delta}}{K}}.\label{eq:lem-bound-error-term-pi-star}
\end{align}
\end{lemma}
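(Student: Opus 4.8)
The plan is to prove Lemma~\ref{lem:bound-error-term} by analyzing the per-step error $\zeta_{i,h}^{\widehat{\pi},R}$ (and its analogue for $\widehat{\pi}_i^\star\times\widehat{\pi}_{-i}$) and then controlling its weighted accumulation over $h$. First I would unpack the definition \eqref{eq:def-zeta-pi}: since $\overline{V}_{i,h}^{\widehat{\pi},R}(s)=\sum_k\alpha_k^K\mathbb{E}_{a_i\sim\pi_{i,h,s}^k}r_{i,h}^k(s,a_i)+\widehat{P}_{i,h,s}^{\widehat{\pi},\overline{V}_{i,h+1}^{\widehat{\pi},R}}\overline{V}_{i,h+1}^{\widehat{\pi},R}$ by \eqref{eq:cal-barVhatpi-2}, while $r_{i,h}^{\widehat{\pi}}(s)+P_{h,s}^{\widehat{\pi},\overline{V}_{i,h+1}^{\widehat{\pi},R}}\overline{V}_{i,h+1}^{\widehat{\pi},R}$ is the population counterpart, the error $\zeta_{i,h}^{\widehat{\pi},R}(s)$ splits into a reward-estimation piece $\sum_k\alpha_k^K\mathbb{E}_{a_i\sim\pi_{i,h,s}^k}r_{i,h}^k(s,a_i)-r_{i,h}^{\widehat{\pi}}(s)$ and a transition-estimation piece involving $\big((1-R)\widehat{P}_{i,h,s}^{\widehat{\pi}}-(1-R)P_{h,s}^{\widehat{\pi}}\big)\overline{V}_{i,h+1}^{\widehat{\pi},R}$ (the $R\min V$ terms cancel via \eqref{eq:def-PV-pi}/\eqref{eq:def-hatP-pi-V}). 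Each piece is, for fixed $h$ and $s$, a weighted sum $\sum_{k=1}^K\alpha_k^K X_k$ of bounded terms with weights $\alpha_k^K$; the reward terms lie in $[0,1]$ and the value-difference terms, by Lemma~\ref{lem:qihs-1}, satisfy $\big|(P_{i,h}^k(s,a_i)-P_{h,s,{\bm a}}^0)\overline{V}_{i,h+1}^{\widehat{\pi},R}\big|\lesssim \min\{H,1/R\}$ after recentering $\overline{V}$ by $\min\overline{V}$ (which is legitimate since both $\widehat{P}$ and $P^0$ are probability distributions on $\mathcal{S}$).

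The core of Step~1 is a Freedman/Bernstein-type concentration argument. For fixed $(i,h,s)$, the samples across rounds $k=1,\dots,K$ form a martingale difference sequence once we condition on the filtration generated by earlier rounds (note that $\pi_{i,h,s}^k$ and $\overline{V}_{i,h+1}^{\widehat{\pi},R}$ are measurable with respect to the past, so the centering is valid). I would apply Freedman's inequality to $\sum_k\alpha_k^K X_k$: the variance proxy is $\sum_k(\alpha_k^K)^2\Var_{\pi_{i,h,s}^k}(q_{i,h,s}^k)$-type quantity (plus a reward variance), and the almost-sure bound on each increment is $\alpha_k^K\cdot\min\{H,1/R\}$. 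Using the standard step-size facts for $\alpha_k$ as in \eqref{eq:step-size} --- namely $\sum_k(\alpha_k^K)^2\lesssim \frac{\log K}{K}$ and $\max_k\alpha_k^K\lesssim\frac{\log K}{K}$, as established in \citet{li2022minimax} --- this yields, with probability $1-\delta/(SH)$ for each $(h,s)$,
\[
\big|\zeta_{i,h}^{\widehat{\pi},R}(s)\big|\lesssim \sqrt{\frac{\log K}{K}\sum_k\alpha_k^K\big(\Var_{\pi_{i,h,s}^k}(q_{i,h,s}^k)+\min\{H,1/R\}\big)\cdot\frac{\text{(log)}}{?}}+\frac{\min\{H,1/R\}\log(\cdot)}{K},
\]
which I would massage into precisely the shape of the bonus $\beta_{i,h}(s)$ in \eqref{eq:def-beta}; i.e. $|\zeta_{i,h}^{\widehat{\pi},R}|\lesssim\beta_{i,h}$ entrywise on the high-probability event. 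A union bound over $s\in\mathcal S$ and $h\in[H]$ (and over the two policies $\widehat\pi$ and $\widehat{\pi}_i^\star\times\widehat{\pi}_{-i}$) costs only logarithmic factors absorbed in $\log^3(\cdot)$.

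Finally I would handle the accumulation $\big\|\sum_{h=1}^H\prod_{j=1}^{h-1}P_j^{\widehat{\pi},V_{i,j+1}^{\widehat{\pi},R}}\zeta_{i,h}^{\widehat{\pi},R}\big\|_\infty$. Since each $P_j^{\widehat{\pi},V}$ has the form $(1-R)P_j^{\widehat\pi}+Re_{\arg\min V}$ (cf.\ \eqref{eq:def-PV-pi}), it is a sub-stochastic-plus-rank-one operator; the product $\prod_{j=1}^{h-1}P_j^{\widehat\pi,V}$ applied to a nonnegative vector contracts the ``mass'' geometrically like $(1-R)^{h-1}$ in the component orthogonal to constants, but more simply each such operator is a genuine stochastic matrix (rows are honest probability distributions on $\mathcal S$), so $\|\prod_{j=1}^{h-1}P_j^{\widehat\pi,V}\zeta_{i,h}^{\widehat\pi,R}\|_\infty\le\|\zeta_{i,h}^{\widehat\pi,R}\|_\infty\lesssim\|\beta_{i,h}\|_\infty$. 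Hence the full sum is $\lesssim\sum_{h=1}^H\|\beta_{i,h}\|_\infty$. It remains to bound $\sum_h\|\beta_{i,h}\|_\infty$: plugging in \eqref{eq:def-beta}, the ``$+\min\{H,1/R\}$'' part of the variance sum contributes $\sqrt{\frac{\log^3(\cdot)}{K\min\{H,1/R\}}}\cdot\min\{H,1/R\}=\sqrt{\frac{\min\{H,1/R\}\log^3(\cdot)}{K}}$ per step, and summing over $h\in[H]$ gives the claimed $H\sqrt{\min\{H,1/R\}\log^3(\cdot)/K}$; for the genuine-variance part I would invoke the crude bound $\Var_{\pi_{i,h,s}^k}(q_{i,h,s}^k)\lesssim(\min\{H,1/R\})^2$, which after the square root again yields $\sqrt{\min\{H,1/R\}}$ per step times $H$. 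The main obstacle I anticipate is the variance bookkeeping: getting the right $\min\{H,1/R\}$ (rather than $H$) scaling requires using that the $R$-contamination structure caps the effective variance --- the recentering $\overline{V}-\min\overline{V}\le 3\min\{H,1/R\}$ from Lemma~\ref{lem:qihs-1} is exactly what makes this work, and one must be careful that Freedman's increment bound and variance proxy both respect this cap. A secondary subtlety is ensuring the martingale structure is clean despite the fact that the bonus $\beta_{i,h}$ (and hence $\widehat V_{i,h+1}$, which enters $q_{i,h}^k$) was computed from samples at \emph{later} steps $h+1,\dots,H$ but is independent of the \emph{current} step's samples --- so conditioning on the sigma-algebra of all steps $>h$ together with rounds $<k$ at step $h$ is the correct filtration.
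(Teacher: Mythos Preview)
Your decomposition of $\zeta_{i,h}^{\widehat{\pi},R}$ and the use of Freedman's inequality with the recentering $\overline{V}-\min\overline{V}$ are exactly right, and match the paper's proof. The gap is in the last step, where you bound $\sum_{h}\|\beta_{i,h}\|_\infty$ using the crude estimate $\Var_{\pi_{i,h,s}^k}(q_{i,h,s}^k)\lesssim(\min\{H,1/R\})^2$. Plugging this into either the Freedman bound or the definition of $\beta_{i,h}$ gives a per-step contribution of order $\min\{H,1/R\}\cdot\sqrt{\log/K}$ (not $\sqrt{\min\{H,1/R\}}\cdot\sqrt{\log/K}$ as you write---check the arithmetic: $\sqrt{(\min\{H,1/R\})^2/K}=\min\{H,1/R\}/\sqrt{K}$). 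Summing over $H$ steps then yields $H\min\{H,1/R\}\sqrt{\log/K}$, which is too large by a factor $\sqrt{\min\{H,1/R\}}$ compared with the claimed bound. This would translate into a sample complexity of $H^3S\sum_iA_i\,(\min\{H,1/R\})^2/\varepsilon^2$, missing minimax optimality.

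The paper avoids this loss by a \emph{total-variance telescoping} argument rather than bounding each variance crudely. Concretely, after Freedman one shows
\[
\sum_{k}\alpha_k^K\Var_{k-1}(\widetilde{q}_{i,h,s}^k)
\;\le\; P_{h}^{\widehat{\pi},V_{i,h+1}^{\widehat{\pi},R}}\bigl(\widetilde{\overline{V}}_{i,h+1}^{\widehat{\pi},R}\circ\widetilde{\overline{V}}_{i,h+1}^{\widehat{\pi},R}\bigr)
-\widetilde{\overline{V}}_{i,h}^{\widehat{\pi},R}\circ\widetilde{\overline{V}}_{i,h}^{\widehat{\pi},R}
+O\bigl(\min\{H,1/R\}\bigr)\cdot\bigl(1+|\zeta_{i,h}^{\widehat{\pi},R}|\bigr),
\]
by writing the second moment via the transition operator and the squared first moment as $(\widetilde{\overline V}_{i,h}-\vartheta_{i,h})^2$ with $\vartheta_{i,h}\le 1+|\zeta_{i,h}|$. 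This yields a self-bounding inequality for $|\zeta_{i,h}|$. When you then apply $\prod_{j=1}^{h-1}P_j^{\widehat{\pi},V}$ and sum over $h$, the squared-value terms telescope exactly:
\[
\sum_{h}\prod_{j<h}P_j\Bigl(P_h\bigl(\widetilde{\overline V}_{i,h+1}\circ\widetilde{\overline V}_{i,h+1}\bigr)-\widetilde{\overline V}_{i,h}\circ\widetilde{\overline V}_{i,h}\Bigr)
=-\widetilde{\overline V}_{i,1}\circ\widetilde{\overline V}_{i,1}\le 0,
\]
so only the $O(\min\{H,1/R\})$ residual survives, giving the sharp $H\sqrt{\min\{H,1/R\}\log^3(\cdot)/K}$. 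This is the missing ingredient in your plan; without it the bound is off by $\sqrt{\min\{H,1/R\}}$.
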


According to Lemma \ref{lem:bound-error-term}, we have
\begin{align}
\overline{V}_{i,1}^{\widehat{\pi},R} - {V}_{i,1}^{\widehat{\pi},R}
&\lesssim H\sqrt{\frac{\min\{H,\frac{1}{R}\}\log^3\frac{KS\sum_{i=1}^mA_i}{\delta}}{K}}, \label{eq:proof-thm-term-1}\\
{V}_{i,1}^{\widehat{\pi}_i^{\star}\times\widehat{\pi}_{-i},R} - \overline{V}_{i,1}^{\widehat{\pi}_i^{\star}\times\widehat{\pi}_{-i},R}
&\lesssim H\sqrt{\frac{\min\{H,\frac{1}{R}\}\log^3\frac{KS\sum_{i=1}^mA_i}{\delta}}{K}}.\label{eq:proof-thm-term-2}
\end{align}

\paragraph{Step 2. bounding $\widehat{V}_{i,h} - \overline{V}_{i,h}^{\widehat{\pi},R}$.}

According to the expression of $\widehat{V}_{i,h}$ and $\overline{V}_{i,h}^{\widehat{\pi},R}$ in \eqref{eq:cal-hatV-2} and \eqref{eq:cal-barVhatpi-2}, we have
\begin{align}
\widehat{V}_{i,h} - \overline{V}_{i,h}^{\widehat{\pi},R}
&\le  \widehat{P}_{i,h}^{\widehat{\pi},\widehat{V}_{i,h+1}}\widehat{V}_{i,h+1}-\widehat{P}_{i,h}^{\widehat{\pi},\overline{V}_{i,h+1}^{\widehat{\pi},R}}\overline{V}_{i,h+1}^{\widehat{\pi},R}+ \beta_{i,h}\nonumber\\
&\le\widehat{P}_{i,h}^{\widehat{\pi},\overline{V}_{i,h+1}^{\widehat{\pi},R}}\left(\widehat{V}_{i,h+1}-\overline{V}_{i,h+1}^{\widehat{\pi},R}\right)+ \beta_{i,h}\le \sum_{h'=h}^H\prod_{j = h}^{h'-1}\widehat{P}_{i,j}^{\widehat{\pi},\overline{V}_{i,j+1}^{\widehat{\pi},R}} \beta_{i,h'},\label{eq:proof-thm-step2-temp-8}
\end{align}
where $\widehat{P}_{i,h}^{\widehat{\pi},V}$ is defined in \eqref{eq:def-hatP-pi-V}.

The following lemma shows how to decompose $\beta_{i,h}$.
Its proof is deferred to Appendix \ref{subsec:proof-lem-beta}.
\begin{lemma}\label{lem:beta}
Define $\widetilde{\widehat{V}}_{i,h+1} = {\widehat{V}}_{i,h+1} - \min{\widehat{V}}_{i,h+1}$.
For $\beta_{i,h}$ defined in \eqref{eq:def-beta}, we have
\begin{align}\label{eq:lem-beta-1}
\beta_{i,h}\le 2c_{\mathsf{b}}\sqrt{\frac{\log^3\frac{KS\sum_{i=1}^m A_i}{\delta}}{K\min\left\{H,\frac{1}{R}\right\}}}\left(\widehat{P}_{i,h}^{\widehat{\pi},\widehat{V}_{i,h+1}}\left(\widetilde{\widehat{V}}_{i,h+1}\circ \widetilde{\widehat{V}}_{i,h+1}\right) + \frac{3}{2}\min\left\{H,\frac{1}{R}\right\}\right).
\end{align}
Moreover, if $\widetilde{\widehat{V}}_{i,h+1}\le 3\min\{H,\frac{1}{R}\}$, then we have
\begin{align}\label{eq:lem-beta-2}
\beta_{i,h}
&\le 4c_{\mathsf{b}}\sqrt{\frac{\log^3\frac{KS\sum_{i=1}^m A_i}{\delta}}{K\min\left\{H,\frac{1}{R}\right\}}}\left((1-R)\widehat{P}_{i,h}^{\widehat{\pi}}\left(\widetilde{\widehat{V}}_{i,h+1}\circ \widetilde{\widehat{V}}_{i,h+1}\right) - \widetilde{\widehat{V}}_{i,h}\circ \widetilde{\widehat{V}}_{i,h}\right)\nonumber\\
&\quad+ 14c_{\mathsf{b}}\sqrt{\frac{\min\left\{H,\frac{1}{R}\right\}\log^3\frac{KS\sum_{i=1}^m A_i}{\delta}}{K}}.
\end{align}
\end{lemma}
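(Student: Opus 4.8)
The plan is to prove the two bounds in turn; both rest on the fact that the only $a_i$-dependence of $q_{i,h,s}^k(a_i) = r_{i,h}^k(s,a_i) + (1-R)P_{i,h}^k(s,a_i)\widehat{V}_{i,h+1} + R\min\widehat{V}_{i,h+1}$ lives in $r_{i,h}^k(s,a_i)$ and in $P_{i,h}^k(s,a_i)\widehat{V}_{i,h+1}$, so that (i) the constant term $R\min\widehat{V}_{i,h+1}$ drops out of the variance, and (ii) since each $P_{i,h}^k(s,a_i)$ is a standard basis vector, one may replace $\widehat{V}_{i,h+1}$ by $\widetilde{\widehat{V}}_{i,h+1}$ everywhere inside the variance. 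For \eqref{eq:lem-beta-1} I would bound $\mathsf{Var}_{\pi_{i,h,s}^k}(q_{i,h,s}^k)$ by its second moment, expand $\big(r_{i,h}^k(s,a_i)+(1-R)P_{i,h}^k(s,a_i)\widetilde{\widehat{V}}_{i,h+1}\big)^2 \le 2\big(r_{i,h}^k(s,a_i)\big)^2 + 2(1-R)^2 P_{i,h}^k(s,a_i)\big(\widetilde{\widehat{V}}_{i,h+1}\circ\widetilde{\widehat{V}}_{i,h+1}\big)$ (using the one-hot identity $\big(P_{i,h}^k(s,a_i)v\big)^2 = P_{i,h}^k(s,a_i)(v\circ v)$), use $r_{i,h}^k\le 1$ and $(1-R)^2\le 1-R$, and average over $k$ with the weights $\alpha_k^K$ (recall $\sum_k\alpha_k^K=1$ and the definition \eqref{eq:def-hatP-pi} of $\widehat{P}_{i,h,s}^{\widehat{\pi}}$) to get $\sum_k\alpha_k^K\mathsf{Var}_{\pi_{i,h,s}^k}(q_{i,h,s}^k)\le 2 + 2(1-R)\widehat{P}_{i,h,s}^{\widehat{\pi}}\big(\widetilde{\widehat{V}}_{i,h+1}\circ\widetilde{\widehat{V}}_{i,h+1}\big)$. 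Finally, since $\widetilde{\widehat{V}}_{i,h+1}$ vanishes at $\arg\min\widehat{V}_{i,h+1}$, the definition \eqref{eq:def-hatP-pi-V} gives $(1-R)\widehat{P}_{i,h,s}^{\widehat{\pi}}\big(\widetilde{\widehat{V}}_{i,h+1}\circ\widetilde{\widehat{V}}_{i,h+1}\big) = \widehat{P}_{i,h,s}^{\widehat{\pi},\widehat{V}_{i,h+1}}\big(\widetilde{\widehat{V}}_{i,h+1}\circ\widetilde{\widehat{V}}_{i,h+1}\big)$, and $\min\{H,1/R\}\ge 1$ upgrades the additive ``$2$'' to ``$2\min\{H,1/R\}$''; substituting into \eqref{eq:def-beta} yields \eqref{eq:lem-beta-1}.

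For \eqref{eq:lem-beta-2} I would not discard the subtracted squared mean. Writing $\widetilde{q}_{i,h,s}^k(a_i) := r_{i,h}^k(s,a_i) + (1-R)P_{i,h}^k(s,a_i)\widetilde{\widehat{V}}_{i,h+1}$ and using Jensen for the convex map $x\mapsto x^2$, one has $\sum_k\alpha_k^K\mathsf{Var}_{\pi_{i,h,s}^k}(q_{i,h,s}^k)\le \sum_k\alpha_k^K\mathbb{E}_{a_i\sim\pi_{i,h,s}^k}[(\widetilde{q}_{i,h,s}^k(a_i))^2] - Y_{i,h}(s)^2$, where $Y_{i,h}(s):=\sum_k\alpha_k^K\mathbb{E}_{a_i\sim\pi_{i,h,s}^k}[\widetilde{q}_{i,h,s}^k(a_i)]$ equals the $\alpha_k^K$-averaged empirical reward at $s$ plus $(1-R)\widehat{P}_{i,h,s}^{\widehat{\pi}}\widetilde{\widehat{V}}_{i,h+1}$. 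The hypothesis $\widetilde{\widehat{V}}_{i,h+1}\le 3\min\{H,1/R\}$ is used to keep $(1-R)\widehat{P}_{i,h,s}^{\widehat{\pi}}\widetilde{\widehat{V}}_{i,h+1}$, hence $Y_{i,h}(s)$, of order $\min\{H,1/R\}$, and to control the second-moment term by $(1-R)\widehat{P}_{i,h,s}^{\widehat{\pi}}\big(\widetilde{\widehat{V}}_{i,h+1}\circ\widetilde{\widehat{V}}_{i,h+1}\big)$ up to an $O(\min\{H,1/R\})$ additive error. The key step is to connect $Y_{i,h}(s)$ with $\widetilde{\widehat{V}}_{i,h}(s)$: rewriting \eqref{eq:cal-hatV-2} as $\widehat{V}_{i,h}(s)\le Y_{i,h}(s) + \min\widehat{V}_{i,h+1} + \beta_{i,h}(s)$ and noting that $\min\widehat{V}_{i,h+1}\le\min\widehat{V}_{i,h}$ (all rewards and bonuses are nonnegative, and $\min\widehat{V}_{i,h+1}\le H-h+1$, so the truncation in \eqref{eq:cal-hatV} never pushes the minimum down), we get $Y_{i,h}(s)\ge \widetilde{\widehat{V}}_{i,h}(s) - \beta_{i,h}(s)$, hence $Y_{i,h}(s)^2\ge \widetilde{\widehat{V}}_{i,h}(s)\circ\widetilde{\widehat{V}}_{i,h}(s) - 2\widetilde{\widehat{V}}_{i,h}(s)\beta_{i,h}(s)$; Jensen applied to $Y_{i,h}(s)^2$ additionally yields $\widetilde{\widehat{V}}_{i,h}(s)\circ\widetilde{\widehat{V}}_{i,h}(s)\lesssim (1-R)\widehat{P}_{i,h,s}^{\widehat{\pi}}\big(\widetilde{\widehat{V}}_{i,h+1}\circ\widetilde{\widehat{V}}_{i,h+1}\big) + O(\min\{H,1/R\}) + (\text{terms involving }\beta_{i,h}(s))$. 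These are exactly the ingredients that produce the telescoping-friendly $-\widetilde{\widehat{V}}_{i,h}\circ\widetilde{\widehat{V}}_{i,h}$ in \eqref{eq:lem-beta-2}. Substituting back into \eqref{eq:def-beta} produces an inequality for $\sum_k\alpha_k^K\mathsf{Var}_{\pi_{i,h,s}^k}(q_{i,h,s}^k)$ in which $\beta_{i,h}(s)$ — hence $\sum_k\alpha_k^K\mathsf{Var}_{\pi_{i,h,s}^k}(q_{i,h,s}^k)$ itself — reappears on the right; one solves this self-bounding relation using that $K$ is large enough (as guaranteed by \eqref{eq:thm-K}) that $\sqrt{\log^3(\cdot)/(K\min\{H,1/R\})}\cdot\min\{H,1/R\}$ is suitably small.

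The main obstacle I expect is the constant bookkeeping in the second part: the quantity $\widehat{P}_{i,h,s}^{\widehat{\pi}}\big(\widetilde{\widehat{V}}_{i,h+1}\circ\widetilde{\widehat{V}}_{i,h+1}\big)$ enters both the second-moment bound and the lower bound on $Y_{i,h}(s)^2$, and it must be arranged so that its net coefficient does not exceed the factor $4$ in \eqref{eq:lem-beta-2}, while the $O(\min\{H,1/R\})$ residuals and the cross-terms $\widetilde{\widehat{V}}_{i,h}(s)\beta_{i,h}(s)$ and $\beta_{i,h}(s)^2$ are all absorbed into the additive budget $14c_{\mathsf{b}}\sqrt{\min\{H,1/R\}\log^3(\cdot)/K}$ — this is where the self-bounding step together with the lower bound on $K$ does the heavy lifting. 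Part 1, by contrast, should be a short direct estimate that needs no assumption on $K$.
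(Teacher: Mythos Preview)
Your proposal is correct and follows essentially the same route as the paper: bound the variance via the one-hot identity plus Jensen for the squared mean (the paper organises this by first splitting $\mathsf{Var}(q)\le 2+2\mathsf{Var}(q-r)$ and introducing $\widehat{\vartheta}_{i,h}:=\widetilde{\widehat{V}}_{i,h}-\widehat{P}_{i,h}^{\widehat{\pi},\widehat{V}_{i,h+1}}\widetilde{\widehat{V}}_{i,h+1}\le 1+\beta_{i,h}$, derived from the same $\min\widehat{V}_{i,h}\ge\min\widehat{V}_{i,h+1}$ monotonicity you use), and then self-bound. One small note: for the self-bounding step the paper's proof in fact invokes $\widetilde{\widehat{V}}_{i,h}\le 3\min\{H,1/R\}$ (at step $h$, not $h{+}1$) directly, so your plan to derive a bound on $\widetilde{\widehat{V}}_{i,h}$ from the $h{+}1$ hypothesis via $\widetilde{\widehat{V}}_{i,h}\le Y_{i,h}+\beta_{i,h}$ is more scrupulous than what the paper actually does.
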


By using Lemma \ref{lem:qihs-1}, we have \eqref{eq:lem-beta-2} holds. 
Recalling the definition of $\widehat{P}_{i,h}^{\widehat{\pi},V}$, for any $V,V'\in\mathbb{R}^S$, we have $\widehat{P}_{i,h}^{\widehat{\pi},V}V \le \widehat{P}_{i,h}^{\widehat{\pi},V'}V$.
Thus we have
\begin{align*}
\beta_{i,h}
&\le 4c_{\mathsf{b}}\sqrt{\frac{\log^3\frac{KS\sum_{i=1}^m A_i}{\delta}}{K\min\left\{H,\frac{1}{R}\right\}}}\left[\left(\widehat{P}_{i,h}^{\widehat{\pi},\overline{V}_{i,h+1}^{\widehat{\pi},R}}\left(\widetilde{\widehat{V}}_{i,h+1}\circ \widetilde{\widehat{V}}_{i,h+1}\right) - \widetilde{\widehat{V}}_{i,h}\circ \widetilde{\widehat{V}}_{i,h}\right) +\frac{7}{2}\min\left\{H,\frac{1}{R}\right\}\right].
% \nonumber\\
% &\quad + 14c_{\mathsf{b}}\sqrt{\frac{\min\left\{H,\frac{1}{R}\right\}\log^3\frac{KS\sum_{i=1}^m A_i}{\delta}}{K}}.
\end{align*}
Inserting it into \eqref{eq:proof-thm-step2-temp-8}, we have
\begin{align}
\widehat{V}_{i,h} - \overline{V}_{i,h}^{\widehat{\pi},R}
&\le  4c_{\mathsf{b}}\sqrt{\frac{\log^3\frac{KS\sum_{i=1}^m A_i}{\delta}}{K\min\left\{H,\frac{1}{R}\right\}}}\sum_{h'=h}^H\prod_{j = h}^{h'-1}\widehat{P}_{i,j}^{\widehat{\pi},\overline{V}_{i,j+1}^{\widehat{\pi},R}}\Bigg(\widehat{P}_{i,h'}^{\widehat{\pi},\overline{V}_{i,h'+1}^{\widehat{\pi},R}}\left(\widetilde{\widehat{V}}_{i,h'+1}\circ \widetilde{\widehat{V}}_{i,h'+1}\right)\nonumber\\
&\qquad\qquad - \widetilde{\widehat{V}}_{i,h'}\circ \widetilde{\widehat{V}}_{i,h'}\Bigg)
%+ \frac{7}{2}H\min\left\{H,\frac{1}{R}\right\}\right]
+ 14c_{\mathsf{b}}H\sqrt{\frac{\min\left\{H,\frac{1}{R}\right\}\log^3\frac{KS\sum_{i=1}^m A_i}{\delta}}{K}}\nonumber\\
 &\overset{(i)}{\lesssim} H\sqrt{\frac{\min\left\{H,\frac{1}{R}\right\}\log^3\frac{KS\sum_{i=1}^m A_i}{\delta}}{K}},\label{eq:proof-thm-term-3}
\end{align}
where (i) holds because 
\begin{align*}
&\quad\sum_{h'=h}^H\prod_{j = h}^{h'-1}\widehat{P}_{i,j}^{\widehat{\pi},\overline{V}_{i,j+1}^{\widehat{\pi},R}}\left(\widehat{P}_{i,h'}^{\widehat{\pi},\overline{V}_{i,h'+1}^{\widehat{\pi},R}}\left(\widetilde{\widehat{V}}_{i,h'+1}\circ \widetilde{\widehat{V}}_{i,h'+1}\right) - \widetilde{\widehat{V}}_{i,h'}\circ \widetilde{\widehat{V}}_{i,h'}\right)\nonumber\\
&=\sum_{h'=h}^H\prod_{j = h}^{h'}\widehat{P}_{i,j}^{\widehat{\pi},\overline{V}_{i,j+1}^{\widehat{\pi},R}}\left(\widetilde{\widehat{V}}_{i,h'+1}\circ \widetilde{\widehat{V}}_{i,h'+1}\right) - \sum_{h'=h}^H\prod_{j = h}^{h'-1}\widehat{P}_{i,j}^{\widehat{\pi},\overline{V}_{i,j+1}^{\widehat{\pi},R}}\widetilde{\widehat{V}}_{i,h'}\circ \widetilde{\widehat{V}}_{i,h'}\nonumber\\
&=-\widetilde{\widehat{V}}_{i,h}\circ \widetilde{\widehat{V}}_{i,h}\le 0.
\end{align*}

\paragraph{Step 3. bounding $\overline{V}_{i,h}^{\widehat{\pi}_i^{\star}\times\widehat{\pi}_{-i},R} - \widehat{V}_{i,h}$.}

This step uses the following lemma,
which states that $\widehat{V}_{i,h}$ is an optimism estimation of $\overline{V}_{i,h}^{\star,\widehat{\pi}_{-i},R}$.
Its proof is deferred to Appendix \ref{sec:proof-lem-hatV-ge-barV}.

\begin{lemma}\label{lem:hatV-ge-barV}
For $\overline{V}_{i,h}^{\star,\widehat{\pi}_{-i},R}$ defined in \eqref{eq:def-bar-V-star}, and $K$ satisfying \eqref{eq:thm-K}, with probability at least $1-\delta$, we have for all $i\in [m]$ and $1\le h\le H$,
\begin{align}\label{eq:lem-hatV-ge-barV}
\widehat{V}_{i,h}\ge \overline{V}_{i,h}^{\star,\widehat{\pi}_{-i},R}.
\end{align}
\end{lemma}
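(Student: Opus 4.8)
The plan is to prove Lemma~\ref{lem:hatV-ge-barV} by backward induction on $h$, from $h=H+1$ down to $h=1$, carrying along the auxiliary bound $0\le \overline{V}_{i,h}^{\star,\widehat{\pi}_{-i},R}(s)\le H-h+1$ for all $(i,s)$. The auxiliary bound is itself obtained by the same induction: it is trivial at $h=H+1$, and in the inductive step it follows from $0\le r_{i,h}^k\le 1$, the nonnegativity of the one-hot rows $P_{i,h}^k(s,a_i)$, and the fact that $\{\alpha_k^K\}_{k=1}^K$ is a probability vector — indeed $\sum_{k=1}^K\alpha_k^K=1$ because $\alpha_1=1$ forces $\prod_{j=1}^K(1-\alpha_j)=0$. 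The base case $\widehat{V}_{i,H+1}=0=\overline{V}_{i,H+1}^{\star,\widehat{\pi}_{-i},R}$ of the main claim is immediate.

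For the inductive step fix $(i,h,s)$ and consider the two cases in \eqref{eq:cal-hatV}. If the truncation is active, i.e.\ $\widehat{V}_{i,h}(s)=H-h+1$, the auxiliary bound at step $h$ already yields $\overline{V}_{i,h}^{\star,\widehat{\pi}_{-i},R}(s)\le H-h+1=\widehat{V}_{i,h}(s)$. Otherwise $\widehat{V}_{i,h}(s)=\sum_{k=1}^K\alpha_k^K\,\mathbb{E}_{a_i\sim\pi_{i,h,s}^k}[q_{i,h,s}^k(a_i)]+\beta_{i,h}(s)$, and the induction hypothesis $\widehat{V}_{i,h+1}\ge \overline{V}_{i,h+1}^{\star,\widehat{\pi}_{-i},R}$ (hence also $\min\widehat{V}_{i,h+1}\ge\min\overline{V}_{i,h+1}^{\star,\widehat{\pi}_{-i},R}$), together with $P_{i,h}^k(s,a_i)\ge 0$, gives the pointwise bound $q_{i,h}^k(s,a_i)\ge r_{i,h}^k(s,a_i)+(1-R)P_{i,h}^k(s,a_i)\overline{V}_{i,h+1}^{\star,\widehat{\pi}_{-i},R}+R\min\overline{V}_{i,h+1}^{\star,\widehat{\pi}_{-i},R}$ for every $k,a_i$, so by \eqref{eq:def-bar-V-star} one has $\overline{V}_{i,h}^{\star,\widehat{\pi}_{-i},R}(s)\le\max_{a_i}\sum_{k=1}^K\alpha_k^K q_{i,h}^k(s,a_i)$. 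Combining the two displays, the lemma reduces to the weighted FTRL regret estimate
\[
\max_{a_i\in\mathcal{A}_i}\sum_{k=1}^K\alpha_k^K q_{i,h}^k(s,a_i)-\sum_{k=1}^K\alpha_k^K\,\mathbb{E}_{a_i\sim\pi_{i,h,s}^k}[q_{i,h,s}^k(a_i)]\ \le\ \beta_{i,h}(s).
\]

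This regret estimate is the heart of the matter and, I expect, the main obstacle. The updates \eqref{eq:alg-update-pi} run exponential weights on the running Q-estimates $Q_{i,h}^k$, so the left-hand side is the regret of a weighted, step-size-varying Hedge with aggregation weights $\{\alpha_k^K\}$ and learning rates $\{\eta_k\}$. The plan is to apply the standard variance-adaptive analysis of exponential weights (using $e^{-x}\le 1-x+\tfrac12 x^2$ for $x\ge 0$ on suitably shifted nonnegative losses) to obtain a deterministic regret bound in terms of $1/\eta_K$, the weights $\{\alpha_k^K\}$, and the per-round variances $\mathsf{Var}_{\pi_{i,h,s}^k}(q_{i,h,s}^k)$; then substitute the tuned step sizes \eqref{eq:step-size} and use the elementary identities for $\{\alpha_k^K\}$ (in particular $\sum_k\alpha_k^K=1$ and the induced control of $1/\eta_K$ and of the weighted sums of $\{\eta_k\}$) to check that the bound is at most $\beta_{i,h}(s)$ whenever $c_{\mathsf{b}}$ is a large enough universal constant.

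Two points demand care. First, the range of the Bellman targets $q_{i,h}^k(s,\cdot)$ over $a_i$ is only of order $\min\{H,1/R\}$ — this is where Lemma~\ref{lem:qihs-1} enters, after discarding the action-independent term $R\min\widehat{V}_{i,h+1}$ — and it is exactly this improved range that the $\min\{H,1/R\}$ factors inside $\beta_{i,h}$ and the step sizes are designed to track, mirroring the role of $H$ in the non-robust Q-FTRL analysis of \citet{li2022minimax}. Second, making the variance-based regret bound hold uniformly over the rounds requires controlling the fluctuations of the random targets $q_{i,h}^k$; this confines the argument to a high-probability event, which, intersected with the good events of the preceding lemmas and union-bounded over the finitely many triples $(i,h,s)$, produces the stated probability $1-\delta$.
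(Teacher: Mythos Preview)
Your proposal is correct and follows essentially the same approach as the paper: backward induction on $h$, handling the truncation case via the trivial bound $\overline{V}_{i,h}^{\star,\widehat{\pi}_{-i},R}\le H-h+1$, and reducing the untruncated case to a weighted FTRL regret estimate that is then absorbed by $\beta_{i,h}$. The one unnecessary complication is your last paragraph: in the paper the FTRL regret bound (a slight variant of Theorem~3 in \citet{li2022minimax}, with $q_{i,h,s}^k$ replaced by $q_{i,h,s}^k-\min q_{i,h,s}^k$) and the range/variance control via Lemma~\ref{lem:qihs-1} are entirely deterministic once $K$ satisfies \eqref{eq:thm-K}, so no concentration argument or high-probability event is needed for this particular lemma.
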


According to definition of $\overline{V}_{i,h}^{\widehat{\pi}_i^{\star}\times\widehat{\pi}_{-i},R}$, $\overline{V}_{i,h}^{\star,\widehat{\pi}_{-i},R}$, and $ \widehat{V}_{i,h}$, and combining with Lemma \ref{lem:hatV-ge-barV}, we have
\begin{align}\label{eq:proof-thm-term-4}
\overline{V}_{i,h}^{\widehat{\pi}_i^{\star}\times\widehat{\pi}_{-i},R} - \widehat{V}_{i,h}\le\overline{V}_{i,h}^{\star,\widehat{\pi}_{-i},R} - \widehat{V}_{i,h} \le 0.
\end{align}

\paragraph{Step 4. combining the above results.}

Inserting \eqref{eq:proof-thm-term-1}, \eqref{eq:proof-thm-term-2}, \eqref{eq:proof-thm-term-3}, and \eqref{eq:proof-thm-term-4} into \eqref{eq:proof-thm-decom}, we have
\begin{align}
V_{i,h}^{\star,\widehat{\pi}_{-i},R} - V_{i,h}^{\widehat{\pi},R} \le CH\sqrt{\frac{\min\left\{H,\frac{1}{R}\right\}\log^3\frac{KS\sum_{i=1}^m A_i}{\delta}}{K}}\le \varepsilon,
\end{align}
as long as $K$ satisfying \eqref{eq:thm-K},
where $C$ is a sufficiently large constant.
Thus we complete the proof.

\section{Discussion}
\label{sec:discussion}

This paper investigates multi-player robust Markov games in a finite-horizon setting, and achieve minimax sample complexity through establishing matching upper and lower bounds.
Specifically, we use an $R$-contamination set with uncertainty level $R$ as uncertainty set for the transition probabilities. We then propose an algorithm for RMGs based on Q-FTRL, and demonstrate that it achieves an $\varepsilon$-robust CCE with a sample complexity of $H^3S\sum_{i=1}^mA_i\min\{H,1/R\}/\varepsilon^2$ (up to log factors), where $S$ denotes the number of states, $A_i$ is the number of actions for the $i$-th player, and $H$ represents the horizon length.
In addition, an $\varepsilon$-robust NE is achieved in the special case of two-player zero-sum RMGs with the same sample complexity.
Moreover, we establish an information-theoretic lower bound that matches the sample complexity of the proposed algorithm, demonstrating its minimax optimality.
%To the best of our knowledge, this is the first algorithm that achieves minimax optimality for RMGs. 
Compared to previous results, our result addresses the problems of curse of multiagency and the barrier of long horizon.
%improves the sample complexity by reducing the dependence on the product $\prod_{i=1}^m A_i$ to the sum $\sum_{i=1}^m A_i$ and eliminates the burn-in cost.

Future work will focus on extending this algorithm to handle other types of uncertainty sets, such as those defined by total variation distance or Kullback-Leibler divergence. Additionally, we will investigate the possibility of developing algorithms that can achieve minimax-optimal sample complexity, as demonstrated in this paper, and meanwhile avoid relying on flexible generative models. This could include approaches based on local access models \citep{li2021sampleefficient,yin2022efficient} or online sampling protocols \citep{jin2018q,azar2017minimax}.

\appendix
\section{Proof of technical lemmas}
\label{sec:proof}

\subsection{Preliminary lemmas}

Before diving into the proof details, we first present some technical lemmas that play an important role in the proof of Theorem \ref{thm:upper}.

The following lemma states some properties about step-size $\alpha_k$, which is proved in Lemma 1 in \citet{li2022minimax}.
\begin{lemma}\label{lem:alpha}
Assume that $c_{\alpha} \ge 24$, and $K\ge c_\alpha \log K+1$. For any $k\ge 1$, we have
\begin{align*}
\sum_{k=1}^K\alpha_k^K = 1,\qquad \max_{1\le k\le K}\alpha_{k}^K\le \frac{2c_{\alpha}\log K}{K},\qquad \max_{1\le k \le K/2}\alpha_k^K\le \frac{1}{K^6}.
\end{align*}
\end{lemma}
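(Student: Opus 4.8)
The plan is to dispatch the three claims in sequence, exploiting the elementary observation that the auxiliary weights $\alpha_k^K=\alpha_k\prod_{j=k+1}^K(1-\alpha_j)$ are non-decreasing in $k$; this collapses each of the two maxima to a single evaluation.

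\emph{Normalization.} First I would prove $\sum_{k=1}^K\alpha_k^K=1$ by induction on the truncation level. Keeping the weights $\alpha_k=\frac{c_\alpha\log K}{k-1+c_\alpha\log K}$ fixed, set $S_n:=\sum_{k=1}^n\alpha_k\prod_{j=k+1}^n(1-\alpha_j)$. The base case $S_1=\alpha_1=1$ holds because $\alpha_1=\frac{c_\alpha\log K}{c_\alpha\log K}=1$, and peeling off the last summand gives the recursion $S_n=\alpha_n+(1-\alpha_n)S_{n-1}$, whence $S_n=1$ for every $n$, and in particular for $n=K$.

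\emph{Monotonicity and the two maxima.} Writing $c:=c_\alpha\log K$, a one-line computation gives $\alpha_{k+1}^K/\alpha_k^K=\frac{\alpha_{k+1}}{\alpha_k(1-\alpha_{k+1})}=\frac{k-1+c}{k}\ge 1$, where I use $c\ge 1$ — a consequence of $c_\alpha\ge 24$ together with the hypothesis $K\ge c_\alpha\log K+1$ forcing $K$, hence $\log K$, to be large. Thus $\max_{1\le k\le K}\alpha_k^K=\alpha_K^K=\alpha_K=\frac{c}{K-1+c}\le\frac{c}{K}\le\frac{2c_\alpha\log K}{K}$, using $c\ge 1$ once more. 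For the third claim, monotonicity reduces it to bounding $\alpha_{\lfloor K/2\rfloor}^K\le\prod_{j=\lfloor K/2\rfloor+1}^K(1-\alpha_j)$ (and $\alpha_{\lfloor K/2\rfloor}\le 1$). Each factor equals $1-\frac{c}{j-1+c}$; since $\frac{c}{j-1+c}$ decreases in $j$ and the hypothesis $K\ge c_\alpha\log K+1$ says exactly $c\le K-1$, every factor with $j\le K$ is at most $1-\frac{c}{2K}$. Hence the product is at most $\bigl(1-\frac{c}{2K}\bigr)^{K/2}\le e^{-c/4}=K^{-c_\alpha/4}\le K^{-6}$, the last step using $c_\alpha\ge 24$.

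\emph{Main obstacle.} None of the steps is deep; the only care required is bookkeeping around the floor in $K/2$ (to guarantee that at least $K/2$ factors remain in the truncated product) and extracting the two numerical facts $c\ge 1$ and $c\le K-1$ from the stated hypotheses. Alternatively, since the weight sequence here is identical to the one analyzed in \citet{li2022minimax}, the statement may simply be quoted from Lemma 1 there.
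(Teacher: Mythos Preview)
Your proof is correct; each of the three claims is dispatched cleanly, and the monotonicity observation is the right organizing idea. The paper itself does not give a proof but simply cites Lemma~1 of \citet{li2022minimax}, an option you also record at the end of your proposal.
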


The following lemma restates the properties of $\widehat{V}_{i,h}$ from Lemma \ref{lem:qihs-1}, and adds some properties about $q_{i,h,s}^k$.
Its proof is deferred to Appendix~\ref{subsec:proof-lem-qihs}.
\begin{lemma}\label{lem:qihs}
For any $i\in[m]$, $1\le h\le H$, $s\in\mathcal{S}$ and $1\le k\le K$, assume that $K$ satisfies \eqref{eq:thm-K}. We have
\begin{align}
\widehat{V}_{i,h} - \min\widehat{V}_{i,h} &\le 3\sum_{h'=h}^H(1-R)^{h'-h}\le 3\min\left\{H,\frac{1}{R}\right\},\nonumber\\
\mathsf{Var}_{\pi_{i,h,s}^k}(q_{i,h,s}^k) &\lesssim \left(\min\left\{H,\frac{1}{R}\right\}\right)^2,\quad
 q_{i,h,s}^k-\min q_{i,h,s}^k \lesssim \min\left\{H,\frac{1}{R}\right\}.\label{eq:upperbound-varq}
\end{align}
\end{lemma}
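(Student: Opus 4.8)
\textbf{Proof proposal for Lemma~\ref{lem:qihs}.}
The plan is to prove the three claims in the order that reflects their logical dependence: first the bound on $\widehat{V}_{i,h}-\min\widehat{V}_{i,h}$ (which is exactly Lemma~\ref{lem:qihs-1}), and then derive the variance bound and the range bound for $q_{i,h,s}^k$ from it. The first claim is already available from Lemma~\ref{lem:qihs-1}, so the real work is in transferring that control to the one-step quantities $q_{i,h,s}^k$. I would therefore treat the $\widehat{V}$ bound as a hypothesis for an induction on $h$ running backward from $H+1$ to $1$: at step $h$, having established $\widehat{V}_{i,h+1}-\min\widehat{V}_{i,h+1}\le 3\min\{H,1/R\}$, I bound $q_{i,h,s}^k$ and $\mathsf{Var}_{\pi_{i,h,s}^k}(q_{i,h,s}^k)$, and then use those to re-derive the $\widehat{V}_{i,h}$ bound, closing the loop.

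For the range of $q_{i,h,s}^k$, recall $q_{i,h}^k = r_{i,h}^k + (1-R)P_{i,h}^k\widehat{V}_{i,h+1} + R\min_s\widehat{V}_{i,h+1}$. Since $r_{i,h}^k\in[0,1]$ entrywise and $P_{i,h}^k$ is a row-stochastic (indeed a $0/1$) matrix, the $R\min\widehat{V}_{i,h+1}$ term cancels in $q_{i,h,s}^k-\min_{a_i}q_{i,h,s}^k$, leaving
\[
q_{i,h,s}^k(a_i)-q_{i,h,s}^k(a_i') = \big(r_{i,h}^k(s,a_i)-r_{i,h}^k(s,a_i')\big) + (1-R)\big(P_{i,h}^k(s,a_i)-P_{i,h}^k(s,a_i')\big)\widehat{V}_{i,h+1}.
\]
Writing $\widehat{V}_{i,h+1} = (\widehat{V}_{i,h+1}-\min\widehat{V}_{i,h+1}) + (\min\widehat{V}_{i,h+1})\one$ and using that the difference of two probability vectors annihilates the constant part, the second term is at most $(1-R)\|\widehat{V}_{i,h+1}-\min\widehat{V}_{i,h+1}\|_\infty \le 3(1-R)\min\{H,1/R\}\lesssim \min\{H,1/R\}$, and the reward difference contributes at most $1$. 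Hence $q_{i,h,s}^k-\min q_{i,h,s}^k\lesssim \min\{H,1/R\}$. The variance bound follows immediately: for any random variable $Z$ bounded in an interval of length $L$, $\mathsf{Var}(Z)\le L^2/4$, so $\mathsf{Var}_{\pi_{i,h,s}^k}(q_{i,h,s}^k)\le \tfrac14(q_{i,h,s}^k-\min q_{i,h,s}^k)^2\lesssim (\min\{H,1/R\})^2$ — here I use $\mathsf{Var}$ of the vector $q_{i,h,s}^k$ under the simplex distribution $\pi_{i,h,s}^k$, interpreting it as the variance of $a_i\mapsto q_{i,h,s}^k(a_i)$.

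For the remaining $\widehat{V}_{i,h}$ bound, I would follow the argument of Lemma~\ref{lem:qihs-1} essentially verbatim: from \eqref{eq:cal-hatV}, $\widehat{V}_{i,h}(s)$ is a convex combination $\sum_k\alpha_k^K\mathbb{E}_{a_i\sim\pi_{i,h,s}^k}q_{i,h,s}^k$ plus the bonus $\beta_{i,h}(s)$ (and then truncated at $H-h+1$). Using Lemma~\ref{lem:beta} (or a direct bound) to show $\beta_{i,h}(s)\lesssim$ a lower-order term, plus the just-proven range control on $q_{i,h,s}^k$, one unrolls the recursion: each backward step adds at most $1$ (from rewards) plus a $(1-R)$-discounted copy of the previous gap, giving the geometric series $3\sum_{h'=h}^H(1-R)^{h'-h}$, which is bounded by $3\min\{H,1/R\}$ since the series has at most $H$ terms each $\le 1$ and also sums to at most $3/R$. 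I expect the main obstacle to be bookkeeping the bonus term $\beta_{i,h}$ and the truncation at $H-h+1$ consistently so that the constant $3$ is not inflated — in particular making sure the $c_{\mathsf{b}}\sqrt{\cdots/K}$ factor in $\beta_{i,h}$ is genuinely $\le$ a small constant under the sample-size condition \eqref{eq:thm-K}, which is where the hypothesis on $K$ is actually used; everything else is the kind of routine telescoping already carried out in the proof of Theorem~\ref{thm:upper}.
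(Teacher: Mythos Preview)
Your derivation of the $q$-range and variance bounds from the $\widehat V$ gap is correct and essentially identical to the paper's: the paper lower-bounds $\min q_{i,h,s}^k$ by $\min\widehat V_{i,h+1}$ rather than subtracting $q_{i,h,s}^k(a_i')$, but the arithmetic is the same and yields $q_{i,h,s}^k-\min q_{i,h,s}^k\le 1+(1-R)\|\widetilde{\widehat V}_{i,h+1}\|_\infty\le 3\min\{H,1/R\}$, from which the variance bound follows immediately.

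One caution on your last paragraph's sketch of the $\widehat V$ bound: the bonus $\beta_{i,h}$ is \emph{not} individually a lower-order term, so the single-step recursion ``$\widetilde{\widehat V}_{i,h}\le 1+\beta_{i,h}+(1-R)\widetilde{\widehat V}_{i,h+1}$ with $\beta_{i,h}\le 2$'' does not close. Under the sample condition one only has $\rho_1=\rho_0\min\{H,1/R\}\lesssim 1$, and the crude variance bound gives $\beta_{i,h}\lesssim \rho_0(\min\{H,1/R\})^2=\rho_1\min\{H,1/R\}$, which can be as large as $\min\{H,1/R\}$. The actual proof of Lemma~\ref{lem:qihs-1} (which you rightly defer to) unrolls the recursion fully to $\sum_{h'=h}^H(1-R)^{h'-h}\prod_j\widehat P_{i,j}^{\widehat\pi}(1+\beta_{i,h'})$ and then invokes the \emph{telescoping} form \eqref{eq:lem-beta-2} of Lemma~\ref{lem:beta} for $h'\ge h+1$: the differences $(1-R)\widehat P_{i,h'}^{\widehat\pi}(\widetilde{\widehat V}_{i,h'+1}\circ\widetilde{\widehat V}_{i,h'+1})-\widetilde{\widehat V}_{i,h'}\circ\widetilde{\widehat V}_{i,h'}$ cancel across the sum, leaving only $O(\rho_1)$ residuals per step, and it is the condition $14\rho_1\le 2$ (not ``$\rho_0$ small'') that secures the constant $3$. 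Since you cite Lemma~\ref{lem:qihs-1} for this part, your overall plan is sound, but the one-line summary you gave would fail if taken literally.
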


\subsection{Proof of Lemma \ref{lem:qihs-1} and Lemma \ref{lem:qihs}}
\label{subsec:proof-lem-qihs}

\paragraph{Proof of \eqref{eq:upper-widehatV}.}
The key of this proof is to establish \eqref{eq:upper-widehatV} by mathematical induction.
When $h=H+1$, \eqref{eq:upper-widehatV} holds obviously.

Assume that \eqref{eq:upper-widehatV} holds for all $h'\ge h+1$.
Now we intend to prove that \eqref{eq:upper-widehatV} holds for $h$.
To this end, we make the following observation.
\begin{align}\label{eq:poof-lem-qihs-temp-1}
\widetilde{\widehat{V}}_{i,h} 
&= {\widehat{V}}_{i,h} - \min{\widehat{V}}_{i,h} \le 1 + (1-R)\widehat{P}_{i,h}^{\widehat{\pi}}{\widehat{V}}_{i,h+1} + R\min{\widehat{V}}_{i,h+1}-\min{\widehat{V}}_{i,h} + \beta_{i,h}\nonumber\\
&\overset{(i)}{\le}1 + (1-R)\widehat{P}_{i,h}^{\widehat{\pi}}\widetilde{\widehat{V}}_{i,h+1} + \beta_{i,h}\nonumber\\
&\le \sum_{h'=h}^H(1-R)^{h'-h}\prod_{j=h}^{h'-1}\widehat{P}_{i,j}^{\widehat{\pi}}\left(1+\beta_{i,h'}\right)\nonumber\\
&=\sum_{h'=h}^H(1-R)^{h'-h} + \sum_{h'=h}^H(1-R)^{h'-h}\prod_{j=h}^{h'-1}\widehat{P}_{i,j}^{\widehat{\pi}}\beta_{i,h'},
\end{align}
where (i) uses the fact that
\begin{align*}
\min\widehat{V}_{i,h} 
&\ge \min\left\{(1-R)\min\widehat{V}_{i,h+1} + R\min\widehat{V}_{i,h+1} + \beta_{i,h},H-h+1\right\} \nonumber\\
&\ge \min\left\{\min\widehat{V}_{i,h+1} +\beta_{i,h},H-h+1\right\} \ge\min\widehat{V}_{i,h+1}.
\end{align*}
Define
$$
\rho_0 = c_{\mathsf{b}}\sqrt{\frac{\log^3\frac{KS\sum_{i=1}^m A_i}{\delta}}{K\min\left\{H,\frac{1}{R}\right\}}}.
$$
According to Lemma \ref{lem:beta}, we have
\begin{align*}
\beta_{i,h}
&\le 2\rho_0\widehat{P}_{i,h}^{\widehat{\pi},\widehat{V}_{i,h+1}}\left(\widetilde{\widehat{V}}_{i,h+1}\circ \widetilde{\widehat{V}}_{i,h+1}\right) + 3\rho_0\min\left\{H,\frac{1}{R}\right\}\nonumber\\
&= 2\rho_0(1-R)\widehat{P}_{i,h}^{\widehat{\pi}}\left(\widetilde{\widehat{V}}_{i,h+1}\circ \widetilde{\widehat{V}}_{i,h+1}\right) + 3\rho_0\min\left\{H,\frac{1}{R}\right\},
\end{align*}
Recalling our assumption that $\widetilde{\widehat{V}}_{i,h'} \le 3\min\{H,\frac{1}{R}\}$ for $h'\ge h+1$, and by using \eqref{eq:lem-beta-2}, we have
\begin{align*}
\beta_{i,h'}
&\le 4\rho_0\left((1-R)\widehat{P}_{i,h'}^{\widehat{\pi}}\left(\widetilde{\widehat{V}}_{i,h'+1}\circ \widetilde{\widehat{V}}_{i,h'+1}\right) - \widetilde{\widehat{V}}_{i,h'}\circ \widetilde{\widehat{V}}_{i,h'}\right)+ 14\rho_0\min\left\{H,\frac{1}{R}\right\}, \quad h'\ge h+1,
\end{align*}

Collecting the above inequalities, we have
\begin{align}
&\quad\sum_{h'=h}^H(1-R)^{h'-h}\prod_{j=h}^{h'-1}\widehat{P}_{i,j}^{\widehat{\pi}}\beta_{i,h'}\nonumber\\
&\le4\rho_0\sum_{h'=h+1}^H(1-R)^{h'-h}\prod_{j=h}^{h'-1}\widehat{P}_{i,j}^{\widehat{\pi}}\left((1-R)\widehat{P}_{i,h'}^{\widehat{\pi}}\left(\widetilde{\widehat{V}}_{i,h'+1}\circ \widetilde{\widehat{V}}_{i,h'+1}\right) - \widetilde{\widehat{V}}_{i,h'}\circ \widetilde{\widehat{V}}_{i,h'}\right)\nonumber\\
&\quad +2\rho_0(1-R)\widehat{P}_{i,h}^{\widehat{\pi}}\left(\widetilde{\widehat{V}}_{i,h+1}\circ \widetilde{\widehat{V}}_{i,h+1}\right) + 14\rho_0\min\left\{H,\frac{1}{R}\right\}\sum_{h'=h+1}^H(1-R)^{h'-h}+3\rho_0\min\left\{H,\frac{1}{R}\right\}\nonumber\\
&\overset{(i)}{\le} -4\rho_0(1-R)\widehat{P}_{i,h}^{\widehat{\pi}}\widetilde{\widehat{V}}_{i,h+1}\circ \widetilde{\widehat{V}}_{i,h+1}
+2\rho_0(1-R)\widehat{P}_{i,h}^{\widehat{\pi}}\left(\widetilde{\widehat{V}}_{i,h+1}\circ \widetilde{\widehat{V}}_{i,h+1}\right)\nonumber\\
&\quad + 14\rho_0\min\left\{H,\frac{1}{R}\right\}\sum_{h'=h}^H(1-R)^{h'-h}\nonumber\\
&\le 14\rho_0\min\left\{H,\frac{1}{R}\right\}\sum_{h'=h}^H(1-R)^{h'-h},\label{eq:poof-lem-qihs-temp-2}
\end{align}
where (i) uses the fact that
\begin{align*}
&\quad \sum_{h'=h+1}^H(1-R)^{h'-h}\prod_{j=h}^{h'-1}\widehat{P}_{i,j}^{\widehat{\pi}}\left((1-R)\widehat{P}_{i,h'}^{\widehat{\pi}}\left(\widetilde{\widehat{V}}_{i,h'+1}\circ \widetilde{\widehat{V}}_{i,h'+1}\right) - \widetilde{\widehat{V}}_{i,h'}\circ \widetilde{\widehat{V}}_{i,h'}\right)\nonumber\\
&=\sum_{h'=h+1}^H(1-R)^{h'-h+1}\prod_{j=h}^{h'}\widehat{P}_{i,j}^{\widehat{\pi}}\widetilde{\widehat{V}}_{i,h'+1}\circ \widetilde{\widehat{V}}_{i,h'+1} - \sum_{h'=h+1}^H(1-R)^{h'-h}\prod_{j=h}^{h'-1}\widehat{P}_{i,j}^{\widehat{\pi}}\widetilde{\widehat{V}}_{i,h'}\circ \widetilde{\widehat{V}}_{i,h'}\nonumber\\
&=\sum_{h'=h+2}^{H+1}(1-R)^{h'-h}\prod_{j=h}^{h'-1}\widehat{P}_{i,j}^{\widehat{\pi}}\widetilde{\widehat{V}}_{i,h'}\circ \widetilde{\widehat{V}}_{i,h'} - \sum_{h'=h+1}^H(1-R)^{h'-h}\prod_{j=h}^{h'-1}\widehat{P}_{i,j}^{\widehat{\pi}}\widetilde{\widehat{V}}_{i,h'}\circ \widetilde{\widehat{V}}_{i,h'}\nonumber\\
&=-(1-R)\widehat{P}_{i,h+1}^{\widehat{\pi}}\widetilde{\widehat{V}}_{i,h+1}\circ \widetilde{\widehat{V}}_{i,h+1}.
\end{align*}

Inserting \eqref{eq:poof-lem-qihs-temp-2} into \eqref{eq:poof-lem-qihs-temp-1}, we have
\begin{align*}
\widetilde{\widehat{V}}_{i,h} 
&\le\sum_{h'=h}^H(1-R)^{h'-h} +  14\rho_0\min\left\{H,\frac{1}{R}\right\}\sum_{h'=h}^H(1-R)^{h'-h}\le 3\sum_{h'=h}^H(1-R)^{h'-h},
\end{align*}
as long as $14\rho_0\min\left\{H,\frac{1}{R}\right\}\le 2$ for $K$ in \eqref{eq:thm-K}.
Thus we complete the proof of \eqref{eq:upper-widehatV}.

\paragraph{Proof of \eqref{eq:upperbound-varq}.}
With \eqref{eq:upper-widehatV} established, now we are ready to prove \eqref{eq:upperbound-varq}.
According to the definition of $q_{i,h,s}^k$, we have
\begin{align*}
\min q_{i,h,s}^k \ge \min q_{i,h}^k \ge (1-R)P_{i,h}^k\widehat{V}_{i,h+1} + R\min\widehat{V}_{i,h+1}\ge \min\widehat{V}_{i,h+1}.
\end{align*}
Thus we have
\begin{align*}
q_{i,h,s}^k - \min q_{i,h,s}^k 
&\le q_{i,h,s}^k - \min q_{i,h}^k \le 1 + (1-R)P_{i,h}^k\left(\widehat{V}_{i,h+1}-\min\widehat{V}_{i,h+1}\right)\nonumber\\
&= 1 + (1-R)P_{i,h}^k\widetilde{\widehat{V}}_{i,h+1}
\le 1 + 3(1-R)\sum_{h'=h+1}^H(1-R)^{h'-h-1}\nonumber\\
&\le3\sum_{h'=h}^H(1-R)^{h'-h}\le 3\min\left\{H,\frac{1}{R}\right\}.
\end{align*}
Moreover, it is obviously that 
%\begin{align*}
$
\mathsf{Var}_{\pi_{i,h,s}^k}(q_{i,h,s}^k)\le 9\min\left\{H,\frac{1}{R}\right\}^2.
$
%\end{align*}
Thus we complete the proof.

\subsection{Proof of Lemma \ref{lem:bound-error-term}}
\label{subsec:proof-lem-bound-error-term}

Before refined analysis, we first demonstrate that due to the introduction of robustness, the $\overline{V}_{i,h+1}^{\widehat{\pi},R}$ and $\overline{V}_{i,h+1}^{\widehat{\pi}_i^{\star}\times \widehat{\pi}_{-i},R}$ has an upper bound related to $R$.
Notice that
\begin{align}
\overline{V}_{i,h}^{\widehat{\pi},R}(s) - \min \overline{V}_{i,h}^{\widehat{\pi},R} &\le 1 + (1-R)\sum_{k=1}^K\alpha_k^K\mathbb{E}_{a_i\sim\pi_{i,h,s}^k}P_{i,h}^k(s,a_i)\overline{V}_{i,h+1}^{\widehat{\pi},R} + R\min \overline{V}_{i,h+1}^{\widehat{\pi},R} - \overline{V}_{i,h}^{\widehat{\pi},R}\nonumber\\
&\le 1 + (1-R)\sum_{k=1}^K\alpha_k^K\mathbb{E}_{a_i\sim\pi_{i,h,s}^k}P_{i,h}^k(s,a_i)\left(\overline{V}_{i,h+1}^{\widehat{\pi},R} - \min\overline{V}_{i,h+1}^{\widehat{\pi},R}\right),\label{eq:proof-lem-bound-error-temp-1}
\end{align}
where the last inequality holds because
\begin{align}\label{eq:proof-lem-bound-error-temp-5}
\min\overline{V}_{i,h}^{\widehat{\pi},R} \ge \sum_{k=1}^K\alpha_k^K\mathbb{E}_{a_i\sim\pi_{i,h,s}^k}(1-R)P_{i,h}^k(s,a_i)\overline{V}_{i,h+1}^{\widehat{\pi},R} + R\min\overline{V}_{i,h+1}^{\widehat{\pi},R} \ge \min\overline{V}_{i,h+1}^{\widehat{\pi},R}.
\end{align}
Recalling the definition of $\widehat{P}_{i,h}^{\widehat{\pi}}$ in \eqref{eq:def-hatP-pi},
inequality \eqref{eq:proof-lem-bound-error-temp-1} can be rewritten as
\begin{align}
\overline{V}_{i,h}^{\widehat{\pi},R} - \min \overline{V}_{i,h}^{\widehat{\pi},R} &\le 1 + (1-R)\widehat{P}_{i,h}^{\widehat{\pi}}\left(\overline{V}_{i,h+1}^{\widehat{\pi},R} - \min\overline{V}_{i,h+1}^{\widehat{\pi},R}\right)\le\sum_{h'=h}^H(1-R)^{h'-h}.\label{eq:barV-pi-upper}
\end{align}

Similarly, by replacing $\widehat{\pi}$ with $\widehat{\pi}_i^{\star}\times\widehat{\pi}_{-i}$, we have
\begin{align}\label{eq:barV-pi-star-upper}
\overline{V}_{i,h}^{\widehat{\pi}_i^{\star}\times\widehat{\pi}_{-i},R} - \min \overline{V}_{i,h}^{\widehat{\pi}_i^{\star}\times\widehat{\pi}_{-i},R} &\le 1 + (1-R)\widehat{P}_{i,h}^{\widehat{\pi}_i^{\star}\times\widehat{\pi}_{-i}}\left(\overline{V}_{i,h+1}^{\widehat{\pi}_i^{\star}\times\widehat{\pi}_{-i},R} - \min\overline{V}_{i,h+1}^{\widehat{\pi}_i^{\star}\times\widehat{\pi}_{-i},R}\right)\nonumber\\
&\le\sum_{h'=h}^H(1-R)^{h'-h},
\end{align}
where the $s$-th row of matrix $\widehat{P}_{i,h}^{\widehat{\pi}_i^{\star}\times\widehat{\pi}_{-i}}\in\mathbb{R}^{S\times S}$ is defined as
\begin{align*}
\widehat{P}_{i,h,s}^{\widehat{\pi}_i^{\star}\times\widehat{\pi}_{-i}}=
\sum_{k=1}^K\alpha_k^K\mathbb{E}_{a_i\sim\widehat{\pi}_{i,h}^{\star}}P_{i,h}^k(s,a_i).
\end{align*}

Next, we shall present the proof for \eqref{eq:lem-bound-error-term-pi}.
%For convenience, in the following proof, we shall replace $\widehat{\pi}$ or $\widehat{\pi}_{i}^{\star}\times\widehat{\pi}_{-i}$ with ${\pi}$.
%\paragraph{Proof of \eqref{eq:lem-bound-error-term-pi}}
We start the proof by looking at a single error term $\zeta_{i,h}^{\widehat{\pi},R}$.
According to its definition in \eqref{eq:def-zeta-pi}, we have
\begin{align}
\zeta_{i,h}^{\widehat{\pi},R}(s) &= \sum_{k=1}^K{\alpha}_k^K\mathbb{E}_{a_i\sim\pi_{i,h,s}^k}\left[r_{i,h}^k(s,a_i) + P_{i,h}^{k,\overline{V}^{\widehat{\pi},R}_{i,h+1}}(s,a_i)\overline{V}^{\widehat{\pi},R}_{i,h+1}\right] - r_{i,h}^{\widehat{\pi}} - P_{h,s}^{\widehat{\pi},\overline{V}_{i,h+1}^{\widehat{\pi},R}}\overline{V}_{i,h+1}^{\widehat{\pi},R}\nonumber\\
&\overset{(i)}{=}\sum_{k=1}^K{\alpha}_k^K\mathbb{E}_{a_i\sim\pi_{i,h,s}^k}\left[r_{i,h}^k(s,a_i)-\mathbb{E}_{k-1}r_{i,h}^k(s,a_i) + (1-R)\left(P_{i,h}^{k}-\mathbb{E}_{k-1}P_{i,h}^{k}\right)(s,a_i)\overline{V}^{\widehat{\pi},R}_{i,h+1}\right]
\end{align}
where $P_{i,h}^{k,V}(s,a_i)\in\mathbb{R}^{S}$ is defined as
\begin{align*}
P_{i,h}^{k,V}(s,a_i) = (1-R)P_{i,h}^{k}(s,a_i) + Re_{\arg\min V},
\end{align*}
(i) uses the definition of $r_{i,h}^{\widehat{\pi}}$ and $P_{h}^{\widehat{\pi}}$ in \eqref{eq:def-r-pi-2} and \eqref{eq:def-P-pi-2}, and $\mathbb{E}_{k-1}[\cdot]$ denotes the expectation with history policies $\pi_{i,h}^j$ for $j\le k$ conditioned. 

Next we intend to use Freedman's inequality to bound this error term, which is stated as below (c.f. Theorem 5 in \citet{li2022minimax}).
\begin{lemma}\label{lem:freedman}
Suppose that $Y_n =
\sum_{k=1}^n X_k\in\mathbb{R}$, where $\{X_k\}$ is a real-valued scalar sequence obeying
\begin{align*}
|X_k|\le R,\quad {\rm and}\quad \mathbb{E}[X_k|\{X_j\}_{j\le k}] = 0\quad \forall k\ge 1
\end{align*}
for some quantity $R > 0$. Define
$$
W_n:=\sum_{k=1}^n\mathbb{E}_{k-1}[X_k^2],
$$
where $\mathbb{E}_{k-1}[\cdot]$ stands for the expectation conditional on $\{X_j\}_{j<k}$. Consider any arbitrary quantity $\kappa > 0$.
With probability at least $1 - \delta$, one has
$$
|Y_n| \le \sqrt{8W_n\log\frac{3n}{\delta}} + 5R\log\frac{3n}{\delta}
\le \kappa W_n + \left(\frac{2}{\kappa}+5R\right)\log\frac{3n}{\delta}.
$$
\end{lemma}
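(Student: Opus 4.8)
The plan is to prove Freedman's inequality through three successive reductions, only the last of which requires real work. First, the second displayed inequality follows from the first by the arithmetic--geometric mean inequality: for any $\kappa>0$,
$$\sqrt{8W_n\log\tfrac{3n}{\delta}}=2\sqrt{(\kappa W_n)\cdot\big(\tfrac{2}{\kappa}\log\tfrac{3n}{\delta}\big)}\le \kappa W_n+\tfrac{2}{\kappa}\log\tfrac{3n}{\delta},$$
so it suffices to establish $|Y_n|\le\sqrt{8W_n\log\frac{3n}{\delta}}+5R\log\frac{3n}{\delta}$. Second, since $\{-X_k\}$ obeys the same hypotheses as $\{X_k\}$, it is enough to prove the one-sided bound $Y_n\le\sqrt{8W_n\log\frac{3n}{\delta}}+5R\log\frac{3n}{\delta}$ and then union bound over the two signs; the slack in the constants $8$ and $5$ and the fact that $\frac{3n}{\delta}\ge\frac{2}{\delta}$ absorb this factor of two.

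The analytic core is an exponential supermartingale argument. Using that $x\mapsto(e^x-1-x)/x^2$ is nondecreasing together with $\lambda X_k\le\lambda R$, one obtains, for $0<\lambda<3/R$, the sub-exponential moment bound $\mathbb{E}_{k-1}[e^{\lambda X_k}]\le\exp\!\big(\psi(\lambda)\,\mathbb{E}_{k-1}[X_k^2]\big)$ with $\psi(\lambda)=\frac{e^{\lambda R}-1-\lambda R}{R^2}\le\frac{\lambda^2/2}{1-\lambda R/3}$. Hence $M_n:=\exp\!\big(\lambda Y_n-\psi(\lambda)W_n\big)$ is a nonnegative supermartingale with $\mathbb{E}[M_0]=1$, so $\mathbb{E}[M_n]\le1$, and Markov's inequality gives $\mathbb{P}\big(\lambda Y_n-\psi(\lambda)W_n\ge u\big)\le e^{-u}$ for every $u>0$. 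On the event $\{W_n\le\sigma^2\}$ this yields $\mathbb{P}\big(Y_n\ge t,\,W_n\le\sigma^2\big)\le\exp\!\big(\psi(\lambda)\sigma^2-\lambda t\big)$, and the choice $\lambda=\frac{t}{\sigma^2+Rt/3}$, which lies in $(0,3/R)$, produces the Bernstein-type tail $\exp\!\big(-\frac{t^2}{2(\sigma^2+Rt/3)}\big)$; inverting at confidence $\delta'$ shows that with probability at least $1-\delta'$, on $\{W_n\le\sigma^2\}$ one has $Y_n\lesssim\sqrt{\sigma^2\log(1/\delta')}+R\log(1/\delta')$.

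Finally, to pass from a fixed variance proxy $\sigma^2$ to the data-dependent $W_n$, I would run a peeling (stratification) argument: since $0\le W_n\le nR^2$, apply the previous step along the geometric grid $\sigma_j^2=2^jR^2$ for $j=0,1,\dots,\lceil\log_2 n\rceil$ with $\delta'=\delta/\big(2(\lceil\log_2 n\rceil+2)\big)$, taking a union bound over these $O(\log n)$ levels and the two signs; on the residual event $\{W_n\le R^2\}$ the term $5R\log\frac{3n}{\delta}$ already dominates. On the level where $W_n\in(\sigma_{j-1}^2,\sigma_j^2]$ we have $\sigma_j^2\le 2W_n$, and since $\log\big(2(\lceil\log_2 n\rceil+2)/\delta\big)\le\log\frac{3n}{\delta}$ for $n\ge1$, the doubling slack and the union-bound inflation fold into the advertised constants $8$ and $5$. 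I expect this last transition to be the main obstacle: arranging the peeling so that the $O(\log n)$-fold union bound collapses into the single factor $\log\frac{3n}{\delta}$ with exactly the stated constants, while keeping the optimizing parameter within the admissible range $\lambda<3/R$ throughout.
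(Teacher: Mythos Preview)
The paper does not prove this lemma: it is quoted verbatim as ``Theorem~5 in \citet{li2022minimax}'' and used as a black box, so there is no in-paper argument to compare against. Your proposal is the standard route to Freedman's inequality --- exponential supermartingale plus Bernstein-type optimization plus geometric peeling over the range of $W_n$ --- and it is sound in outline; in that sense you are supplying strictly more than the paper does.

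One small point to tighten in execution: as written, you propose to prove the one-sided bound at level $\delta$ and then union bound over both signs, which would produce $\log\frac{6n}{\delta}$ rather than $\log\frac{3n}{\delta}$. To land on the exact constants $8$ and $5$ with the exact argument $\frac{3n}{\delta}$, you should instead run the peeling with confidence $\delta/\bigl(2(\lceil\log_2 n\rceil+2)\bigr)$ \emph{already incorporating} the factor of two for signs, and then check that $2(\lceil\log_2 n\rceil+2)\le 3n$ for all $n\ge 1$ (which it is). The per-level Bernstein bound $Y_n\le\sqrt{2\sigma_j^2\log(1/\delta')}+\tfrac{2R}{3}\log(1/\delta')$ combined with $\sigma_j^2\le 2W_n$ on the active stratum gives $\sqrt{4W_n\log(1/\delta')}$, and since $\sqrt{4}<\sqrt{8}$ and $\tfrac{2}{3}<5$ the constants are comfortable; the residual event $\{W_n\le R^2\}$ is handled as you say. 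With that bookkeeping fixed, your proof goes through.
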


To apply Freedman's inequality, we define
\begin{align*}
X_k =& \alpha_k^K\mathbb{E}_{a_i\sim\pi_{i,h,s}^k}\left[r_{i,h}^k(s,a_i)-\mathbb{E}_{k-1}r_{i,h}^k(s,a_i) + (1-R)\left(P_{i,h}^{k}-\mathbb{E}_{k-1}P_{i,h}^{k}\right)(s,a_i)\overline{V}^{\widehat{\pi},R}_{i,h+1}\right]\nonumber\\
\overset{(i)}{=}& \alpha_k^K\mathbb{E}_{a_i\sim\pi_{i,h,s}^k}\left[r_{i,h}^k(s,a_i)-\mathbb{E}_{k-1}r_{i,h}^k(s,a_i) + (1-R)\left(P_{i,h}^{k}-\mathbb{E}_{k-1}P_{i,h}^{k}\right)(s,a_i)\widetilde{\overline{V}}^{\widehat{\pi},R}_{i,h+1}\right],
\end{align*}
where
\begin{align*}
\widetilde{\overline{V}}^{\widehat{\pi},R}_{i,h+1} = \overline{V}^{\widehat{\pi},R}_{i,h+1} - \min\overline{V}^{\widehat{\pi},R}_{i,h+1},
\end{align*}
and (i) holds because
%\begin{align*}
$
\left(P_{i,h}^{k}-\mathbb{E}_{k-1}P_{i,h}^{k}\right)\min\overline{V}^{\widehat{\pi},R}_{i,h+1} = 0.
$
%\end{align*}
It is obvious that
\begin{align*}
|X_k|&\le \alpha_k^K\left(1 + (1-R)\left(\overline{V}^{\widehat{\pi},R}_{i,h+1}-\min\overline{V}^{\widehat{\pi},R}_{i,h+1}\right)\right) \nonumber\\
&\le \max_k \alpha_k^K \min\left\{H,\frac{1}{R}\right\} \overset{(i)}{\le} \frac{2c_\alpha\log K}{K}\min\left\{H,\frac{1}{R}\right\},\nonumber\\
%1 + (1-R)\sum_{h'=h}^H(1-R)^{h'-h} \le \sum_{h=1}^H(1-R)^{h} \le \min\left\{H,\frac{1}{R}\right\},\nonumber\\
\mathbb{E}_{k-1}[X_k] &= 0.
\end{align*}
where (i) uses Lemma \ref{lem:alpha}.
Moreover, we have
\begin{align*}
W_K &= \sum_{k=1}^K\mathbb{E}_{k-1}X_k^2
= \sum_{k=1}^K(\alpha_k^K)^2\mathsf{Var}_{k-1}(\widetilde{q}_{i,h,s}^k)\nonumber\\
&\le \max_k \alpha_k^K\sum_{k=1}^K\alpha_k^K\mathsf{Var}_{k-1}(\widetilde{q}_{i,h,s}^k) \le \frac{2c_\alpha\log K}{K}\sum_{k=1}^K\alpha_k^K\mathsf{Var}_{k-1}(\widetilde{q}_{i,h,s}^k),
\end{align*}
where $\mathsf{Var}_{k-1}(\widetilde{q}_{i,h,s}^k)$ denotes the variance of random variable $\widetilde{q}_{i,h,s}^k$ conditioned on $\pi_{i,h}^j$ with $j\le k$, and
$$
\widetilde{q}_{i,h,s}^k := \mathbb{E}_{a_i\sim{\pi}_{i,h,s}^k}\left[r_{i,h}^k(s,a_i)+ (1-R)P_{i,h}^{k}(s,a_i)\widetilde{\overline{V}}^{\widehat{\pi},R}_{i,h+1}\right].
$$
According to Freedman's inequality (c.f. Lemma \ref{lem:freedman}), by taking $\kappa = \sqrt{\frac{K\log\frac{KS\sum_{i=1}^mA_i}{\delta}}{\min\{H,\frac{1}{R}\}}}$, we have
\begin{align}\label{eq:proof-lem-bound-error-temp-8}
\left|\zeta_{i,h}^{\widehat{\pi},R}\right| 
&\le 2c_\alpha\sqrt{\frac{\log^3 \frac{KS\sum_{i=1}^mA_i}{\delta}}{K\min\{H,\frac{1}{R}\}}}\sum_{k=1}^K\alpha_k^K\mathsf{Var}_{k-1}(\widetilde{q}_{i,h,s}^k) \nonumber\\
&\quad + \left(2\sqrt{\frac{\min\{H,\frac{1}{R}\}}{K\log\frac{KS\sum_{i=1}^mA_i}{\delta}}} + \frac{10 c_{\alpha}\min\{H,\frac{1}{R}\}\log K}{K}\right)\log\frac{3KS\sum_{i=1}^mA_i}{\delta}\nonumber\\
&\le 2c_\alpha\sqrt{\frac{\log^3\frac{KS\sum_{i=1}^mA_i}{\delta}}{K\min\{H,\frac{1}{R}\}}}\sum_{k=1}^K\alpha_k^K\mathsf{Var}_{k-1}(\widetilde{q}_{i,h,s}^k) 
+ c\sqrt{\frac{\min\{H,\frac{1}{R}\}\log\frac{KS\sum_{i=1}^mA_i}{\delta}}{K}},
\end{align}
where $c$ is a constant. 

% We introduce an auxiliary random vector $\widehat{q}_{i,h}^k\in\mathbb{R}^{S}$ with the $s$-th entry defined as
% $$
% \widehat{q}_{i,h,s}^k = \widetilde{q}_{i,h,s}^k ()
% $$
Now we focus on the term $\sum_{k=1}^K\alpha_k^K\mathsf{Var}_{k-1}(\widetilde{q}_{i,h,s}^k)$.
According to its definition, we have
\begin{align}\label{eq:proof-lem-bound-error-temp-10}
\sum_{k=1}^K\alpha_k^K\mathsf{Var}_{k-1}(\widetilde{q}_{i,h,s}^k) 
&\le 2\sum_{k=1}^K\alpha_k^K\mathsf{Var}_{k-1}(r_{i,h,s}^k) + 2\sum_{k=1}^K\alpha_k^K\mathsf{Var}_{k-1}(\widetilde{q}_{i,h,s}^k-r_{i,h,s}^k)\nonumber\\
&\overset{(i)}{\le} 2 + 2\sum_{k=1}^K\alpha_k^K\mathsf{Var}_{k-1}(\widetilde{q}_{i,h,s}^k-r_{i,h,s}^k),
\end{align}
where $r_{i,h,s}^k:=\mathbb{E}_{a_i\sim{\pi}_{i,h,s}^k}r_{i,h}^k(s,a_i)$,
and (i) holds because $r_{i,h,s}^k$ is bounded with $|r_{i,h,s}^k|\le 1$.
Moreover, we have
\begin{align}
&\quad \sum_{k=1}^K\alpha_k^K\mathsf{Var}_{k-1}(\widetilde{q}_{i,h,s}^k-r_{i,h,s}^k) \nonumber\\
&= \sum_{k=1}^K\alpha_k^K\mathbb{E}_{k-1}(\widetilde{q}_{i,h,s}^k-r_{i,h,s}^k)^2 - \sum_{k=1}^K\alpha_k^K\left(\mathbb{E}_{k-1}(\widetilde{q}_{i,h,s}^k-r_{i,h,s}^k)\right)^2.\label{eq:proof-lem-bound-error-temp-4}
\end{align}
For the expectation of square, we have
\begin{align}
\sum_{k=1}^K\alpha_k^K\mathbb{E}_{k-1}(\widetilde{q}_{i,h,s}^k-r_{i,h,s}^k)^2 
&= \sum_{k=1}^K\alpha_k^K\mathbb{E}_{k-1}\left[\mathbb{E}_{a_i\sim{\pi}_{i,h,s}^k}\left[(1-R)P_{i,h}^{k}(s,a_i)\widetilde{\overline{V}}^{\widehat{\pi},R}_{i,h+1}\right]\right]^2 \nonumber\\
&\le \sum_{k=1}^K\alpha_k^K\mathbb{E}_{k-1,a_i\sim{\pi}_{i,h,s}^k}\left[(1-R)P_{i,h}^{k}(s,a_i)\widetilde{\overline{V}}^{\widehat{\pi},R}_{i,h+1}\right]^2\nonumber\\
&\overset{(i)}{\le} (1-R)^2{P}_{h,s}^{\widehat{\pi}}\left(\widetilde{\overline{V}}^{\widehat{\pi},R}_{i,h+1}\circ \widetilde{\overline{V}}^{\widehat{\pi},R}_{i,h+1}\right)\nonumber\\
&\le {P}_{h,s}^{\widehat{\pi},{\overline{V}}^{\widehat{\pi},R}_{i,h+1}}\left(\widetilde{\overline{V}}^{\widehat{\pi},R}_{i,h+1}\circ \widetilde{\overline{V}}^{\widehat{\pi},R}_{i,h+1}\right),\label{eq:proof-lem-bound-error-temp-2}
\end{align}
where (i) uses the fact that each row of $P_{i,h}^{k}(s,a_i)$ has only a non-zero value, and the vector ${P}_{h,s}^{\widehat{\pi}}\in\mathbb{R}^{S}$ is defined in \eqref{eq:def-P-pi}.
% \begin{align*}
% \overline{P}_{h,s}^{\widehat{\pi}}=
% \sum_{k=1}^K\alpha_k^K\mathbb{E}_{a_j\sim\pi_{j,h,s}^k}P_{h}(s,{\bm a}).
% \end{align*}
For square of the expectation, we have
\begin{align}
\sum_{k=1}^K\alpha_{k}^K\left(\mathbb{E}_{k-1}(\widetilde{q}_{i,h,s}^k-r_{i,h,s}^k)\right)^2 
&\ge (1-R)^2\sum_{k=1}^K\alpha_{k}^K\left(\mathbb{E}_{k-1,a_i\sim{\pi}_{i,h,s}^k}P_{i,h}^{k}(s,a_i)\widetilde{\overline{V}}^{\widehat{\pi},R}_{i,h+1}\right)^2 \nonumber\\
&\overset{(i)}{\ge}(1-R)^2\left(\sum_{k=1}^K\alpha_{k}^K\mathbb{E}_{k-1,a_i\sim{\pi}_{i,h,s}^k}P_{i,h}^{k}(s,a_i)\widetilde{\overline{V}}^{\widehat{\pi},R}_{i,h+1}\right)^2\nonumber\\
&= (1-R)^2\left({P}_{h,s}^{\widehat{\pi}}\widetilde{\overline{V}}^{\widehat{\pi},R}_{i,h+1}\right)^2=\left({P}_{h,s}^{\widehat{\pi},{\overline{V}}^{\widehat{\pi},R}_{i,h+1}}\widetilde{\overline{V}}^{\widehat{\pi},R}_{i,h+1}\right)^2,\label{eq:proof-lem-bound-error-temp-3}
\end{align}
where (i) arises from Jensen's inequality.
Inserting \eqref{eq:proof-lem-bound-error-temp-2} and \eqref{eq:proof-lem-bound-error-temp-3} into \eqref{eq:proof-lem-bound-error-temp-4}, we have
\begin{align}\label{eq:proof-lem-bound-error-temp-6}
\sum_{k=1}^K\alpha_k^K\mathsf{Var}_{k-1}(\widetilde{q}_{i,h,s}^k-r_{i,h,s}^k) \le {P}_{h,s}^{\widehat{\pi},{\overline{V}}^{\widehat{\pi},R}_{i,h+1}}\left(\widetilde{\overline{V}}^{\widehat{\pi},R}_{i,h+1}\circ \widetilde{\overline{V}}^{\widehat{\pi},R}_{i,h+1}\right)-\left({P}_{h,s}^{\widehat{\pi},{\overline{V}}^{\widehat{\pi},R}_{i,h+1}}\widetilde{\overline{V}}^{\widehat{\pi},R}_{i,h+1}\right)^2.
\end{align}

Define the error term
\begin{align}\label{eq:def-vartheta}
\vartheta_{i,h}^{\widehat{\pi}} := \widetilde{\overline{V}}^{\widehat{\pi},R}_{i,h} - {P}_{h}^{\widehat{\pi},\overline{V}^{\widehat{\pi},R}_{i,h+1}}\widetilde{\overline{V}}^{\widehat{\pi},R}_{i,h+1}.
\end{align}
% where $\overline{P}_{h}^{\widehat{\pi}}\in\mathbb{R}^{S\times S}$ collects $\overline{P}_{h,s}^{\widehat{\pi}}$ as the $s$-th row, and $\overline{P}_{h}^{\widehat{\pi},\overline{V}^{\widehat{\pi},R}_{i,h+1}}$ is the counterpart of $\overline{P}_{h}^{\widehat{\pi}}$ as defined in Section \ref{subsec:notations}.
According to its definition, this error term is upper bounded by
\begin{align*}
\vartheta_{i,h}^{\widehat{\pi}} 
&={\overline{V}}^{\widehat{\pi},R}_{i,h} - {P}_{h}^{\widehat{\pi},\overline{V}^{\widehat{\pi},R}_{i,h+1}}{\overline{V}}^{\widehat{\pi},R}_{i,h+1} - \min{\overline{V}}^{\widehat{\pi},R}_{i,h} +\min{\overline{V}}^{\widehat{\pi},R}_{i,h+1}\nonumber\\
&\overset{(i)}{\le}{\overline{V}}^{\widehat{\pi},R}_{i,h} - {P}_{h}^{\widehat{\pi},\overline{V}^{\widehat{\pi},R}_{i,h+1}}{\overline{V}}^{\widehat{\pi},R}_{i,h+1}\nonumber\\
&\overset{(ii)}{\le} r_{i,h}^{\widehat{\pi}} + \zeta_{i,h}^{\widehat{\pi},R}\le 1 + \left|\zeta_{i,h}^{\widehat{\pi},R}\right|.
\end{align*}
where (i) arises from the fact that $\min{\overline{V}}^{\widehat{\pi},R}_{i,h}\ge \min{\overline{V}}^{\widehat{\pi},R}_{i,h+1}$ as shown in \eqref{eq:proof-lem-bound-error-temp-5}, and (ii) arises from the definition of $\zeta_{i,h}^{\widehat{\pi},R}$ in \eqref{eq:def-zeta-pi}.

By applying the definition \eqref{eq:def-vartheta}, we have
\begin{align}
&\quad {P}_{h}^{\widehat{\pi},{\overline{V}}^{\widehat{\pi},R}_{i,h+1}}\left(\widetilde{\overline{V}}^{\widehat{\pi},R}_{i,h+1}\circ \widetilde{\overline{V}}^{\widehat{\pi},R}_{i,h+1}\right)-\left({P}_{h}^{\widehat{\pi},{\overline{V}}^{\widehat{\pi},R}_{i,h+1}}\widetilde{\overline{V}}^{\widehat{\pi},R}_{i,h+1}\right)\circ\left({P}_{h}^{\widehat{\pi},{\overline{V}}^{\widehat{\pi},R}_{i,h+1}}\widetilde{\overline{V}}^{\widehat{\pi},R}_{i,h+1}\right) \nonumber\\
&= {P}_{h}^{\widehat{\pi},{\overline{V}}^{\widehat{\pi},R}_{i,h+1}}\left(\widetilde{\overline{V}}^{\widehat{\pi},R}_{i,h+1}\circ \widetilde{\overline{V}}^{\widehat{\pi},R}_{i,h+1}\right)-\left(\widetilde{\overline{V}}^{\widehat{\pi},R}_{i,h}-\vartheta_{i,h}^{\widehat{\pi}}\right)\circ\left(\widetilde{\overline{V}}^{\widehat{\pi},R}_{i,h}-\vartheta_{i,h}^{\widehat{\pi}}\right)\nonumber\\
&\le {P}_{h}^{\widehat{\pi},{\overline{V}}^{\widehat{\pi},R}_{i,h+1}}\left(\widetilde{\overline{V}}^{\widehat{\pi},R}_{i,h+1}\circ \widetilde{\overline{V}}^{\widehat{\pi},R}_{i,h+1}\right)-\widetilde{\overline{V}}^{\widehat{\pi},R}_{i,h}\circ \widetilde{\overline{V}}^{\widehat{\pi},R}_{i,h}+2\vartheta_{i,h}^{\widehat{\pi}}\circ\widetilde{\overline{V}}^{\widehat{\pi},R}_{i,h}\nonumber\\
&\le {P}_{h}^{\widehat{\pi},{\overline{V}}^{\widehat{\pi},R}_{i,h+1}}\left(\widetilde{\overline{V}}^{\widehat{\pi},R}_{i,h+1}\circ \widetilde{\overline{V}}^{\widehat{\pi},R}_{i,h+1}\right)-\widetilde{\overline{V}}^{\widehat{\pi},R}_{i,h}\circ \widetilde{\overline{V}}^{\widehat{\pi},R}_{i,h}+2\min\left\{H,\frac{1}{R}\right\}\left(1 + \left|\zeta_{i,h}^{\widehat{\pi},R}\right|\right)\nonumber\\
&\le{P}_{h}^{\widehat{\pi},{{V}}^{\widehat{\pi},R}_{i,h+1}}\left(\widetilde{\overline{V}}^{\widehat{\pi},R}_{i,h+1}\circ \widetilde{\overline{V}}^{\widehat{\pi},R}_{i,h+1}\right)-\widetilde{\overline{V}}^{\widehat{\pi},R}_{i,h}\circ \widetilde{\overline{V}}^{\widehat{\pi},R}_{i,h}+2\min\left\{H,\frac{1}{R}\right\}\left(1 + \left|\zeta_{i,h}^{\widehat{\pi},R}\right|\right).\label{eq:proof-lem-bound-error-temp-7}
\end{align}

Inserting \eqref{eq:proof-lem-bound-error-temp-7} into \eqref{eq:proof-lem-bound-error-temp-6} and \eqref{eq:proof-lem-bound-error-temp-10}, we have
% \begin{align*}
% &\quad\sum_{k=1}^K\alpha_k^K\mathsf{Var}_{k-1}(\widetilde{q}_{i,h,s}^k-r_{i,h,s}^k) \nonumber\\
% &\le {P}_{h}^{\widehat{\pi},{{V}}^{\widehat{\pi},R}_{i,h+1}}\left(\widetilde{\overline{V}}^{\widehat{\pi},R}_{i,h+1}\circ \widetilde{\overline{V}}^{\widehat{\pi},R}_{i,h+1}\right)-\widetilde{\overline{V}}^{\widehat{\pi},R}_{i,h}\circ \widetilde{\overline{V}}^{\widehat{\pi},R}_{i,h}+2
% \min\left\{H,\frac{1}{R}\right\} + 2\left|\zeta_{i,h}^{\widehat{\pi},R}\right|\min\left\{H,\frac{1}{R}\right\}.
% \end{align*}
%
% Inserting \eqref{eq:proof-lem-bound-error-temp-6} into \eqref{eq:proof-lem-bound-error-temp-10}, we have
\begin{align*}
&\quad\sum_{k=1}^K\alpha_k^K\mathsf{Var}_{k-1}(\widetilde{q}_{i,h,s}^k) \le 2{P}_{h}^{\widehat{\pi},{{V}}^{\widehat{\pi},R}_{i,h+1}}\left(\widetilde{\overline{V}}^{\widehat{\pi},R}_{i,h+1}\circ \widetilde{\overline{V}}^{\widehat{\pi},R}_{i,h+1}\right)-2\widetilde{\overline{V}}^{\widehat{\pi},R}_{i,h}\circ \widetilde{\overline{V}}^{\widehat{\pi},R}_{i,h}\nonumber\\
&\qquad +6
\min\left\{H,\frac{1}{R}\right\}\left(1 + \frac23
\left|\zeta_{i,h}^{\widehat{\pi},R}\right|\right).
\end{align*}

Combining with \eqref{eq:proof-lem-bound-error-temp-8}, we have
\begin{align*}
\left|\zeta_{i,h}^{\widehat{\pi},R}\right| 
&\le 4c_\alpha\sqrt{\frac{\log^3\frac{KS\sum_{i=1}^mA_i}{\delta}}{K\min\{H,\frac{1}{R}\}}}\left({P}_{h}^{\widehat{\pi},{{V}}^{\widehat{\pi},R}_{i,h+1}}\left(\widetilde{\overline{V}}^{\widehat{\pi},R}_{i,h+1}\circ \widetilde{\overline{V}}^{\widehat{\pi},R}_{i,h+1}\right)-\widetilde{\overline{V}}^{\widehat{\pi},R}_{i,h}\circ \widetilde{\overline{V}}^{\widehat{\pi},R}_{i,h}\right) \nonumber\\
&\quad + 4c_\alpha\sqrt{\frac{\log^3\frac{KS\sum_{i=1}^mA_i}{\delta}}{K\min\{H,\frac{1}{R}\}}}\left(\left(3+\frac{c}{4c_\alpha}\right)
\min\left\{H,\frac{1}{R}\right\} + 2\left|\zeta_{i,h}^{\widehat{\pi},R}\right|\min\left\{H,\frac{1}{R}\right\}\right) 
%+ c\sqrt{\frac{\min\{H,\frac{1}{R}\}\log\frac{KS\sum_{i=1}^mA_i}{\delta}}{K}}
\nonumber\\
&\le4c_\alpha\sqrt{\frac{\log^3\frac{KS\sum_{i=1}^mA_i}{\delta}}{K\min\{H,\frac{1}{R}\}}}\left({P}_{h}^{\widehat{\pi},{{V}}^{\widehat{\pi},R}_{i,h+1}}\left(\widetilde{\overline{V}}^{\widehat{\pi},R}_{i,h+1}\circ \widetilde{\overline{V}}^{\widehat{\pi},R}_{i,h+1}\right)-\widetilde{\overline{V}}^{\widehat{\pi},R}_{i,h}\circ \widetilde{\overline{V}}^{\widehat{\pi},R}_{i,h}\right)\nonumber\\
&\quad + c'\sqrt{\frac{\min\{H,\frac{1}{R}\}\log^3\frac{KS\sum_{i=1}^mA_i}{\delta}}{K}} + \frac12\left|\zeta_{i,h}^{\widehat{\pi},R}\right|,
\end{align*}
where $c' = 12c_\alpha + c$, and the last inequality holds because $4c_\alpha\sqrt{\frac{\min\{H,\frac{1}{R}\}\log^3\frac{KS\sum_{i=1}^mA_i}{\delta}}{K}}\le \frac14$ as long as $K$ satisfying \eqref{eq:thm-K}.
Thus we have
\begin{align}\label{eq:proof-lem-bound-error-temp-9}
\left|\zeta_{i,h}^{\widehat{\pi},R}\right| 
&\le8c_\alpha\sqrt{\frac{\log^3\frac{KS\sum_{i=1}^mA_i}{\delta}}{K\min\{H,\frac{1}{R}\}}}\left({P}_{h}^{\widehat{\pi},{{V}}^{\widehat{\pi},R}_{i,h+1}}\left(\widetilde{\overline{V}}^{\widehat{\pi},R}_{i,h+1}\circ \widetilde{\overline{V}}^{\widehat{\pi},R}_{i,h+1}\right)-\widetilde{\overline{V}}^{\widehat{\pi},R}_{i,h}\circ \widetilde{\overline{V}}^{\widehat{\pi},R}_{i,h}\right)\nonumber\\
&\quad+ 2c'\sqrt{\frac{\min\{H,\frac{1}{R}\}\log^3\frac{KS\sum_{i=1}^mA_i}{\delta}}{K}}.
\end{align}

By applying \eqref{eq:proof-lem-bound-error-temp-9}, we have
\begin{align*}
&\quad \left\|\sum_{h=1}^H\prod_{j=1}^{h-1}P_{j}^{\widehat{\pi},{V}_{i,j+1}^{\widehat{\pi},R}}\zeta_{i,h}^{\widehat{\pi},R}\right\|_{\infty}
\le \sum_{h=1}^H\prod_{j=1}^{h-1}P_{j}^{\widehat{\pi},{V}_{i,j+1}^{\widehat{\pi},R}}\left|\zeta_{i,h}^{\widehat{\pi},R}\right|\nonumber\\
&\lesssim\sum_{h=1}^H\prod_{j=1}^{h-1}P_{j}^{\widehat{\pi},{V}_{i,j+1}^{\widehat{\pi},R}}\Bigg(\sqrt{\frac{\log^3\frac{KS\sum_{i=1}^mA_i}{\delta}}{K\min\{H,\frac{1}{R}\}}}\left({P}_{h}^{\widehat{\pi},{{V}}^{\widehat{\pi},R}_{i,h+1}}\left(\widetilde{\overline{V}}^{\widehat{\pi},R}_{i,h+1}\circ \widetilde{\overline{V}}^{\widehat{\pi},R}_{i,h+1}\right)-\widetilde{\overline{V}}^{\widehat{\pi},R}_{i,h}\circ \widetilde{\overline{V}}^{\widehat{\pi},R}_{i,h}\right)\nonumber\\
&\qquad\qquad\qquad\qquad\quad+ \sqrt{\frac{\min\{H,\frac{1}{R}\}\log^3\frac{KS\sum_{i=1}^mA_i}{\delta}}{K}}\Bigg) \nonumber\\
&\lesssim \sqrt{\frac{\log^3\frac{KS\sum_{i=1}^mA_i}{\delta}}{K\min\{H,\frac{1}{R}\}}}\sum_{h=1}^H\prod_{j=1}^{h-1}P_{j}^{\widehat{\pi},{V}_{i,j+1}^{\widehat{\pi},R}}\left({P}_{h}^{\widehat{\pi},{{V}}^{\widehat{\pi},R}_{i,h+1}}\left(\widetilde{\overline{V}}^{\widehat{\pi},R}_{i,h+1}\circ \widetilde{\overline{V}}^{\widehat{\pi},R}_{i,h+1}\right)-\widetilde{\overline{V}}^{\widehat{\pi},R}_{i,h}\circ \widetilde{\overline{V}}^{\widehat{\pi},R}_{i,h}\right) \nonumber\\
&\quad+ H\sqrt{\frac{\min\{H,\frac{1}{R}\}\log^3\frac{KS\sum_{i=1}^mA_i}{\delta}}{K}}\nonumber\\
&\lesssim H\sqrt{\frac{\min\{H,\frac{1}{R}\}\log^3\frac{KS\sum_{i=1}^mA_i}{\delta}}{K}},
\end{align*}
where the last inequality holds because 
\begin{align*}
&\quad \sum_{h=1}^H\prod_{j=1}^{h-1}P_{j}^{\widehat{\pi},{V}_{i,j+1}^{\widehat{\pi},R}}\left({P}_{h}^{\widehat{\pi},{{V}}^{\widehat{\pi},R}_{i,h+1}}\left(\widetilde{\overline{V}}^{\widehat{\pi},R}_{i,h+1}\circ \widetilde{\overline{V}}^{\widehat{\pi},R}_{i,h+1}\right)-\widetilde{\overline{V}}^{\widehat{\pi},R}_{i,h}\circ \widetilde{\overline{V}}^{\widehat{\pi},R}_{i,h}\right)\nonumber\\
&=\sum_{h=1}^H\prod_{j=1}^{h}P_{j}^{\widehat{\pi},{V}_{i,j+1}^{\widehat{\pi},R}}\left(\widetilde{\overline{V}}^{\widehat{\pi},R}_{i,h+1}\circ \widetilde{\overline{V}}^{\widehat{\pi},R}_{i,h+1}\right)-\sum_{h=1}^H\prod_{j=1}^{h-1}P_{j}^{\widehat{\pi},{V}_{i,j+1}^{\widehat{\pi},R}}\widetilde{\overline{V}}^{\widehat{\pi},R}_{i,h}\circ \widetilde{\overline{V}}^{\widehat{\pi},R}_{i,h}\nonumber\\
&=\sum_{h=2}^{H+1}\prod_{j=1}^{h-1}P_{j}^{\widehat{\pi},{V}_{i,j+1}^{\widehat{\pi},R}}\left(\widetilde{\overline{V}}^{\widehat{\pi},R}_{i,h}\circ \widetilde{\overline{V}}^{\widehat{\pi},R}_{i,h}\right)-\sum_{h=1}^H\prod_{j=1}^{h-1}P_{j}^{\widehat{\pi},{V}_{i,j+1}^{\widehat{\pi},R}}\widetilde{\overline{V}}^{\widehat{\pi},R}_{i,h}\circ \widetilde{\overline{V}}^{\widehat{\pi},R}_{i,h}\nonumber\\
&=-\widetilde{\overline{V}}^{\widehat{\pi},R}_{i,1}\circ \widetilde{\overline{V}}^{\widehat{\pi},R}_{i,1}\le 0.
\end{align*}
Now we complete the proof.
The proof of \eqref{eq:lem-bound-error-term-pi-star} is similar by replacing ${\pi}_{i,h}^k$ with $\widehat{\pi}_i^{\star}$, and is omitted here.

\subsection{Proof of Lemma \ref{lem:beta}}
\label{subsec:proof-lem-beta}

In the definition of $\beta_{i,h}$ (c.f. \eqref{eq:def-beta}), for convenience, we denote
\begin{align*}
\rho_0 : =c_{\mathsf{b}}\sqrt{\frac{\log^3\frac{KS\sum_{i=1}^m A_i}{\delta}}{K\min\left\{H,\frac{1}{R}\right\}}},\qquad \rho_1 = \rho_0\min\left\{H,\frac{1}{R}\right\}.
\end{align*}
By using these new notations, we could rewrite the definition of $\beta_{i,h}$ as
\begin{align}\label{eq:proof-thm-step2-temp-7}
\beta_{i,h}  
&= \rho_0\sum_{k=1}^K\alpha_{k}^K \mathsf{Var}_{\pi_{i,h,s}^k}(q_{i,h,s}^k) + \rho_1\nonumber\\
&\overset{(i)}{\le} 2\rho_0\sum_{k=1}^K\alpha_{k}^K \mathsf{Var}_{\pi_{i,h,s}^k}(q_{i,h,s}^k-r_{i,h,s}^k) +2\rho_0 + \rho_1,
\end{align}
where (i) uses the fact that
$$
\mathsf{Var}_{\pi_{i,h,s}^k}(q_{i,h,s}^k)\le 2\mathsf{Var}_{\pi_{i,h,s}^k}(r_{i,h,s}^k) + 2\mathsf{Var}_{\pi_{i,h,s}^k}(q_{i,h,s}^k-r_{i,h,s}^k)\le 2+2\mathsf{Var}_{\pi_{i,h,s}^k}(q_{i,h,s}^k-r_{i,h,s}^k),
$$
and $\sum_{k=1}^K\alpha_k^K = 1$.
%\sum_{k=1}^K\alpha_k^K\left\{ + \min\left\{H,\frac{1}{R}\right\}\right\}.
The variance is further decomposed as
\begin{align}\label{eq:proof-thm-step2-temp-1}
&\quad \sum_{k=1}^K\alpha_k^K\mathsf{Var}_{\pi_{i,h,s}^k}\left(q_{i,h,s}^k-r_{i,h,s}^k\right) \nonumber\\
&= \sum_{k=1}^K\alpha_k^K\mathbb{E}_{\pi_{i,h,s}^k}\left(q_{i,h,s}^k-r_{i,h,s}^k\right)^2 - \sum_{k=1}^K\alpha_k^K\left(\mathbb{E}_{\pi_{i,h,s}^k}\left(q_{i,h,s}^k-r_{i,h,s}^k\right)\right)^2.
\end{align}
The expectation of the square is bounded by
\begin{align}\label{eq:proof-thm-step2-temp-2}
\sum_{k=1}^K\alpha_k^K\mathbb{E}_{\pi_{i,h,s}^k}(q_{i,h,s}^k-r_{i,h,s}^k)^2 
&\le  \sum_{k=1}^K\alpha_k^K\mathbb{E}_{\pi_{i,h,s}^k}\left((1-R)P_{i,h}^k\widehat{V}_{i,h+1} + R\min_s \widehat{V}_{i,h+1}\right)^2\nonumber\\
&=  \widehat{P}_{i,h}^{\widehat{\pi},\widehat{V}_{i,h+1}}\left(\widehat{V}_{i,h+1}\circ \widehat{V}_{i,h+1}\right),
\end{align}
and the square of the expectation is bounded by 
\begin{align}\label{eq:proof-thm-step2-temp-3}
\sum_{k=1}^K\alpha_k^K\left(\mathbb{E}_{\pi_{i,h,s}^k}\left(q_{i,h,s}^k-r_{i,h,s}^k\right)\right)^2
&\overset{(i)}{\ge} \left(\sum_{k=1}^K\alpha_k^K\mathbb{E}_{\pi_{i,h,s}^k}\left(q_{i,h,s}^k-r_{i,h,s}^k\right)\right)^2 \nonumber\\
&\ge \left(\sum_{k=1}^K\alpha_k^K\mathbb{E}_{\pi_{i,h,s}^k}\left((1-R)P_{i,h}^k\widehat{V}_{i,h+1} + R\min_s \widehat{V}_{i,h+1}\right)\right)^2\nonumber\\
&=\left(\widehat{P}_{i,h}^{\widehat{\pi},\widehat{V}_{i,h+1}}\widehat{V}_{i,h+1}\right)\circ\left(\widehat{P}_{i,h}^{\widehat{\pi},\widehat{V}_{i,h+1}}\widehat{V}_{i,h+1}\right),
\end{align}
where (i) uses the Jensen's inequality. 
Inserting \eqref{eq:proof-thm-step2-temp-1} and \eqref{eq:proof-thm-step2-temp-2} into \eqref{eq:proof-thm-step2-temp-3}, we have
\begin{align}\label{eq:proof-thm-step2-temp-5}
&\quad\sum_{k=1}^K\alpha_k^K\mathsf{Var}_{\pi_{i,h,s}^k}\left(q_{i,h,s}^k-r_{i,h,s}^k\right)\nonumber\\
&\le \widehat{P}_{i,h}^{\widehat{\pi},\widehat{V}_{i,h+1}}\left(\widehat{V}_{i,h+1}\circ \widehat{V}_{i,h+1}\right) - \left(\widehat{P}_{i,h}^{\widehat{\pi},\widehat{V}_{i,h+1}}\widehat{V}_{i,h+1}\right)\circ\left(\widehat{P}_{i,h}^{\widehat{\pi},\widehat{V}_{i,h+1}}\widehat{V}_{i,h+1}\right)\nonumber\\
&=\widehat{P}_{i,h}^{\widehat{\pi},\widehat{V}_{i,h+1}}\left(\widetilde{\widehat{V}}_{i,h+1}\circ \widetilde{\widehat{V}}_{i,h+1}\right) - \left(\widehat{P}_{i,h}^{\widehat{\pi},\widehat{V}_{i,h+1}}\widetilde{\widehat{V}}_{i,h+1}\right)\circ\left(\widehat{P}_{i,h}^{\widehat{\pi},\widehat{V}_{i,h+1}}\widetilde{\widehat{V}}_{i,h+1}\right),
\end{align}
where
$$
\widetilde{\widehat{V}}_{i,h} := {\widehat{V}}_{i,h} - \min{\widehat{V}}_{i,h}.
$$
Inserting it into \eqref{eq:proof-thm-step2-temp-7}, we have
\begin{align*}
\beta_{i,h}
&\le 2\rho_0\widehat{P}_{i,h}^{\widehat{\pi},\widehat{V}_{i,h+1}}\left(\widetilde{\widehat{V}}_{i,h+1}\circ \widetilde{\widehat{V}}_{i,h+1}\right) + 2\rho_0 + \rho_1\nonumber\\
&\le 2\rho_0\widehat{P}_{i,h}^{\widehat{\pi},\widehat{V}_{i,h+1}}\left(\widetilde{\widehat{V}}_{i,h+1}\circ \widetilde{\widehat{V}}_{i,h+1}\right) + 3\rho_1.
\end{align*}
Recalling the definition of $\rho_0$ and $\rho_1$, we could complete the proof of \eqref{eq:lem-beta-1}.

Next, we shall prove \eqref{eq:lem-beta-2}.
Define 
$$
\widehat{\vartheta}_{i,h} := \widetilde{\widehat{V}}_{i,h} - \widehat{P}_{i,h}^{\widehat{\pi},\widehat{V}_{i,h+1}}\widetilde{\widehat{V}}_{i,h+1}.
$$
Noticing that 
\begin{align*}
\min\widehat{V}_{i,h} 
&\ge \min\left\{(1-R)\min\widehat{V}_{i,h+1} + R\min\widehat{V}_{i,h+1} + \beta_{i,h},H-h+1\right\} \nonumber\\
&\ge \min\left\{\min\widehat{V}_{i,h+1} +\beta_{i,h},H-h+1\right\} \overset{(i)}{\ge} \min\widehat{V}_{i,h+1},
\end{align*}
where (i) uses $\widehat{V}_{i,h+1}\le H-h\le H-h+1$,
we have 
\begin{align}\label{eq:proof-thm-step2-temp-4}
\widehat{\vartheta}_{i,h} = {\widehat{V}}_{i,h} - \widehat{P}_{i,h}^{\widehat{\pi},\widehat{V}_{i,h+1}}{\widehat{V}}_{i,h+1} - \min{\widehat{V}}_{i,h} + \min {\widehat{V}}_{i,h+1} \le {\widehat{V}}_{i,h} - \widehat{P}_{i,h}^{\widehat{\pi},\widehat{V}_{i,h+1}}{\widehat{V}}_{i,h+1}\le 1 + \beta_{i,h}.
\end{align}
By using this, we have
\begin{align*}
&\quad \widehat{P}_{i,h}^{\widehat{\pi},\widehat{V}_{i,h+1}}\left(\widetilde{\widehat{V}}_{i,h+1}\circ \widetilde{\widehat{V}}_{i,h+1}\right) - \left(\widehat{P}_{i,h}^{\widehat{\pi},\widehat{V}_{i,h+1}}\widetilde{\widehat{V}}_{i,h+1}\right)\circ\left(\widehat{P}_{i,h}^{\widehat{\pi},\widehat{V}_{i,h+1}}\widetilde{\widehat{V}}_{i,h+1}\right)\nonumber\\
&=\widehat{P}_{i,h}^{\widehat{\pi},\widehat{V}_{i,h+1}}\left(\widetilde{\widehat{V}}_{i,h+1}\circ \widetilde{\widehat{V}}_{i,h+1}\right) - \left(\widetilde{\widehat{V}}_{i,h}-\widehat{\vartheta}_{i,h}\right)\circ\left(\widetilde{\widehat{V}}_{i,h}-\widehat{\vartheta}_{i,h}\right)\nonumber\\
&\le\widehat{P}_{i,h}^{\widehat{\pi},\widehat{V}_{i,h+1}}\left(\widetilde{\widehat{V}}_{i,h+1}\circ \widetilde{\widehat{V}}_{i,h+1}\right) - \widetilde{\widehat{V}}_{i,h}\circ \widetilde{\widehat{V}}_{i,h}+2\widehat{\vartheta}_{i,h}\circ \widetilde{\widehat{V}}_{i,h}\nonumber\\
&\overset{(i)}{\le}(1-R)\widehat{P}_{i,h}^{\widehat{\pi}}\left(\widetilde{\widehat{V}}_{i,h+1}\circ \widetilde{\widehat{V}}_{i,h+1}\right) - \widetilde{\widehat{V}}_{i,h}\circ \widetilde{\widehat{V}}_{i,h}+2(1+\beta_{i,h})\circ \widetilde{\widehat{V}}_{i,h},
\end{align*}
where (i) uses the definition of $\widehat{P}_{i,h}^{\widehat{\pi}}$ and \eqref{eq:proof-thm-step2-temp-4}.
Inserting it into \eqref{eq:proof-thm-step2-temp-5}, we have
\begin{align}\label{eq:proof-thm-step2-temp-6}
&\quad \sum_{k=1}^K\alpha_k^K\mathsf{Var}_{\pi_{i,h,s}^k}\left(q_{i,h,s}^k-r_{i,h,s}^k\right)\nonumber\\
&\le(1-R)\widehat{P}_{i,h}^{\widehat{\pi}}\left(\widetilde{\widehat{V}}_{i,h+1}\circ \widetilde{\widehat{V}}_{i,h+1}\right) - \widetilde{\widehat{V}}_{i,h}\circ \widetilde{\widehat{V}}_{i,h}+2(1+\beta_{i,h})\circ \widetilde{\widehat{V}}_{i,h}.
\end{align}
Inserting \eqref{eq:proof-thm-step2-temp-6} into \eqref{eq:proof-thm-step2-temp-7}, and noticing that $\widetilde{\widehat{V}}_{i,h}\le 3\min\{H,\frac{1}{R}\}$, we have
\begin{align*}
\beta_{i,h}
&\le 2\rho_0\left((1-R)\widehat{P}_{i,h}^{\widehat{\pi}}\left(\widetilde{\widehat{V}}_{i,h+1}\circ \widetilde{\widehat{V}}_{i,h+1}\right) - \widetilde{\widehat{V}}_{i,h}\circ \widetilde{\widehat{V}}_{i,h}+2\right)\nonumber\\
&\quad + 6\rho_0\beta_{i,h}\min\left\{H,\frac{1}{R}\right\} + 2\rho_0 + \rho_1.
\end{align*}
Recalling that for $K$ satisfying \eqref{eq:thm-K}, we have $6\rho_0\min\left\{H,\frac{1}{R}\right\} \le \frac12$.
Thus we have
\begin{align*}
\beta_{i,h}
&\le 4\rho_0\left((1-R)\widehat{P}_{i,h}^{\widehat{\pi}}\left(\widetilde{\widehat{V}}_{i,h+1}\circ \widetilde{\widehat{V}}_{i,h+1}\right) - \widetilde{\widehat{V}}_{i,h}\circ \widetilde{\widehat{V}}_{i,h}\right) + 12\rho_0 + 2\rho_1\nonumber\\
&\le
4\rho_0\left((1-R)\widehat{P}_{i,h}^{\widehat{\pi}}\left(\widetilde{\widehat{V}}_{i,h+1}\circ \widetilde{\widehat{V}}_{i,h+1}\right) - \widetilde{\widehat{V}}_{i,h}\circ \widetilde{\widehat{V}}_{i,h}\right) + 14\rho_1.
\end{align*}
Then we complete the proof of \eqref{eq:lem-beta-2}.

\subsection{Proof of Lemma \ref{lem:hatV-ge-barV}}
\label{sec:proof-lem-hatV-ge-barV}

We shall prove Lemma \ref{lem:hatV-ge-barV} by mathematical induction.
For $h=H+1$, it is obviously that $\widehat{V}_{i,H+1} = \overline{V}_{i,H+1}^{\star,\widehat{\pi}_{-i},R} = 0$.
Assume that \eqref{eq:lem-hatV-ge-barV} holds for $h+1$, that is, $\widehat{V}_{i,h+1}\ge \overline{V}_{i,h+1}^{\star,\widehat{\pi}_{-i},R}$.
It suffices to prove that \eqref{eq:lem-hatV-ge-barV} holds for $h$.

According to the definition of $\overline{V}_{i,h}^{\star,\widehat{\pi}_{-i},R}$, we have
\begin{align}\label{eq:proof-upper-barV-star}
\overline{V}_{i,h}^{\star,\widehat{\pi}_{-i},R}(s)&=\max_{a_i\in\mathcal{A}_i}\sum_{k=1}^K\alpha_{k}^K\left[r_{i,h}^k(s,a_i) + (1-R)P_{i,h}^k(s,a_i)\overline{V}_{i,h}^{\star,\widehat{\pi}_{-i},R} + R\min\overline{V}_{i,h}^{\star,\widehat{\pi}_{-i},R}\right] \nonumber\\
&\le \max_{a_i\in\mathcal{A}_i}\sum_{k=1}^K\alpha_{k}^K\left[r_{i,h}^k(s,a_i) + (1-R)P_{i,h}^k(s,a_i)\widehat{V}_{i,h+1} + R\min\widehat{V}_{i,h+1}\right] \nonumber\\
&= \max_{a_i\in\mathcal{A}_i}\sum_{k=1}^K\alpha_{k}^K q_{i,h,s}^k =\max_{a_i\in\mathcal{A}_i}Q_{i,h,s}^K,
\end{align}
where $Q_{i,h,s}^K = Q_{i,h}^K(s,\cdot)\in\mathbb{R}^{A_i}$, and the last equality uses the fact that $Q_{i,h,s}^K = \sum_{k=1}^K\alpha_{k}^K q_{i,h,s}^k$ because of $Q_{i,h,s}^0 = 0$ for all $i,h,s$.

Below we intend to find the upper bound of $Q_{i,h,s}^k$. 
Towards this, we introduce the following auxiliary lemma (c.f. Theorem 3 in \citet{li2022minimax}).
\begin{lemma}\label{lem:FTRL}
Suppose that $0 < \alpha_1 \le 1$ and $\eta_1 = \eta_2(1 - \alpha_1)$. 
Also assume that $0 < \alpha_k < 1$ and $0 <
\eta_{k+1}(1 - \alpha_k) \le \eta_k$ for all $k \ge 2$. In addition, define
$\widehat{\eta}_1=\eta_2$ and $\widehat{\eta}_k=\frac{\eta_k}{1-\alpha_k}$ for $k\ge 2$.
Then we have
\begin{align*}
Q_{i,h,s}^K
&\le \sum_{k=1}^K\alpha_k^K\mathbb{E}_{\pi_{i,h,s}^k}q_{i,h,s}^k + \frac{5}{3}\sum_{k=1}^K\alpha_k^K\widehat{\eta}_k\alpha_k\mathsf{Var}_{\pi_{i,h,s}^k}(q_{i,h,s}^k)\nonumber\\
&\quad + \frac{\log A_i}{\eta_{K+1}} + 3\sum_{k=1}^K\alpha_{k}^K\widehat{\eta}_k^2\alpha_k^2\left\|q_{i,h,s}^k-\min q_{i,h,s}^k\right\|_{\infty}^3\mathds{1}\left(\widehat{\eta}_k\alpha_k\left\|q_{i,h,s}^k-\min q_{i,h,s}^k\right\|_{\infty}>\frac13\right).
\end{align*}
\end{lemma}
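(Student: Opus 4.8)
Since the displayed bound is a weighted Follow-the-Regularized-Leader regret estimate for the exponential-weights update \eqref{eq:alg-update-pi}, and our $Q_{i,h}^k$, the step sizes $\{\alpha_k\},\{\eta_k\}$, and the policy update all have exactly the form treated in \citet{li2022minimax}, the quickest route is to invoke their Theorem~3 verbatim; for completeness the argument reconstructs as follows. Fix $(i,h,s)$ and abbreviate $Q^k:=Q_{i,h}^k(s,\cdot)$, $q^k:=q_{i,h,s}^k$, $\pi^k:=\pi_{i,h,s}^k$, so that $Q^k=(1-\alpha_k)Q^{k-1}+\alpha_k q^k$ with $Q^0=0$ and $\pi^{k+1}\propto\exp(\eta_{k+1}Q^k)$. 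The first step is to replace $\max_a Q^K(a)$ by the log-sum-exp (free-energy) potential $\Phi_k:=\frac{1}{\eta_{k+1}}\log\big(\frac{1}{A_i}\sum_{a\in\mathcal{A}_i}\exp(\eta_{k+1}Q^k(a))\big)$, which satisfies $\max_a Q^k(a)\le\Phi_k+\frac{\log A_i}{\eta_{k+1}}$, and then to prove the one-step recursion
\[
\Phi_k \le (1-\alpha_k)\Phi_{k-1} + \alpha_k\,\mathbb{E}_{a\sim\pi^k}q^k(a) + \tfrac{5}{3}\widehat\eta_k\alpha_k^2\,\mathsf{Var}_{\pi^k}(q^k) + 3\widehat\eta_k^2\alpha_k^3\|q^k-\min q^k\|_\infty^3\,\mathds{1}\big(\widehat\eta_k\alpha_k\|q^k-\min q^k\|_\infty>\tfrac13\big);
\]
unrolling this and using $\alpha_1=1$ (which kills the residual $\Phi_0$ term) multiplies each round-$k$ contribution by $\prod_{l=k+1}^{K}(1-\alpha_l)=\alpha_k^K/\alpha_k$, which reproduces exactly the weights $\alpha_k^K$ in the statement, after which adding $\frac{\log A_i}{\eta_{K+1}}$ finishes.

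To obtain the recursion, expand $\eta_{k+1}Q^k=\eta_{k+1}(1-\alpha_k)Q^{k-1}+\eta_{k+1}\alpha_k q^k$ inside $\Phi_k$. The hypotheses $\eta_1=\eta_2(1-\alpha_1)$ and $\eta_{k+1}(1-\alpha_k)\le\eta_k$ make the inverse temperature $\eta_{k+1}(1-\alpha_k)$ acting on $Q^{k-1}$ no larger than $\eta_k$; since $\eta\mapsto\frac{1}{\eta}\log\frac{1}{A_i}\sum_a\exp(\eta x(a))$ is non-decreasing, the $Q^{k-1}$-piece of $\Phi_k$ is then bounded by $(1-\alpha_k)\Phi_{k-1}$ and the residual normalized sum becomes an exponential moment $\frac{1}{\eta_{k+1}}\log\mathbb{E}\exp(\eta_{k+1}\alpha_k q^k)$ taken against a softmax of $Q^{k-1}$; reconciling this softmax's temperature with that of $\pi^k$, and tracking how $\widehat\eta_k=\eta_k/(1-\alpha_k)$ enters as the ``effective'' inverse temperature attached to the rescaled increment $\alpha_k q^k$, is the delicate step and is where the hypotheses are really used. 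Centering $q^k$ by $\min q^k$ — harmless since it only shifts $\Phi_k$ by $\alpha_k\min q^k$ and recombines with the linear term — reduces matters to estimating $\frac{1}{\eta_{k+1}}\log\mathbb{E}_{\pi^k}\exp(\eta_{k+1}\alpha_k(q^k-\min q^k))$ for the nonnegative bounded variable $q^k-\min q^k$. Applying the elementary cumulant bound $\log\mathbb{E}[e^{\mu X}]\le\mu\mathbb{E}X+\tfrac12\mu^2(1+o(1))\mathsf{Var}(X)$, valid when $\mu\|X\|_\infty$ is small — the square-of-the-mean cancels against the linearization of $\log(1+\cdot)$, which is why a variance rather than a raw second moment appears — gives the first-order term $\alpha_k\mathbb{E}_{\pi^k}(q^k-\min q^k)$ and, using $\eta_{k+1}\le\widehat\eta_k$ in the quadratic term only, the term $\tfrac53\widehat\eta_k\alpha_k^2\mathsf{Var}_{\pi^k}(q^k)$, provided $\widehat\eta_k\alpha_k\|q^k-\min q^k\|_\infty\le\tfrac13$; in the complementary regime the exponential moment is bounded crudely by a multiple of $\widehat\eta_k^2\alpha_k^3\|q^k-\min q^k\|_\infty^3$, which is the source of the indicator term. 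Undoing the centering and collecting terms yields the displayed recursion.

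The heart of the proof is not any single inequality but the bookkeeping: one must keep the constant shifts by $\min q^k$ and the transitions between the temperatures $\eta_k,\eta_{k+1},\widehat\eta_k$ aligned so that the linear contributions collapse exactly into $\mathbb{E}_{\pi^k}q^k$, carry the correct numerical constants through the two-regime cumulant expansion, and verify that unrolling the recursion produces the weights $\alpha_k^K$ together with $\frac{\log A_i}{\eta_{K+1}}$ (rather than the larger $\frac{\log A_i}{\eta_1}$). Since all of this is carried out in the proof of Theorem~3 of \citet{li2022minimax}, under hypotheses that the step sizes \eqref{eq:step-size} and the update \eqref{eq:alg-update-pi} of Algorithm~\ref{alg} satisfy, it suffices to cite it.
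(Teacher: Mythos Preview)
Your proposal is correct and follows essentially the same route as the paper: both reduce the lemma to Theorem~3 of \citet{li2022minimax} together with the observation that one may center $q_{i,h,s}^k$ by its minimum. The paper's proof is even terser---it simply notes that the only change from the cited theorem is replacing $\|q_{i,h,s}^k\|_\infty$ by $\|q_{i,h,s}^k-\min q_{i,h,s}^k\|_\infty$, and that this is achieved by substituting $\ell_k-\min\ell_k$ for $\ell_k$ in the stability term $\langle\pi_k-\pi_{k+1}^-,\ell_k\rangle$ of the original proof---so your word ``verbatim'' is slightly off (the centering is a modification, not part of the cited statement), but your sketch already contains this adjustment.
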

\begin{proof}
    Careful readers may find that this lemma is slightly different from Theorem 3 in \citet{li2022minimax} by replacing $\left\|q_{i,h,s}^k\right\|_{\infty}$ with $\left\|q_{i,h,s}^k-\min q_{i,h,s}^k\right\|_{\infty}$.
    This can be easily verified by replacing all $\ell_k$ with $\ell_k-\min\ell_k$ when bounding $\left\langle\pi_k-\pi_{k+1}^-,\ell_k\right\rangle$ in the proof of \citet{li2022minimax}. 
\end{proof}

To verify that the condition in Lemma \ref{lem:FTRL} holds true, it suffices to prove that
\begin{align*}
\frac{\eta_{k}}{\eta_{k+1}} = \sqrt{\frac{\alpha_{k+1}}{\alpha_k}} = \sqrt{\frac{k-1+c_\alpha\log K}{k+c_\alpha\log K}} = \sqrt{1-\frac{1}{k+c_\alpha\log K}} \overset{(i)}{\ge} \sqrt{1-\alpha_k}\ge 1-\alpha_k,
\end{align*}
where (i) holds as long as $c_{\alpha} \ge 1$ and $K\ge 3$.
Then applying Lemma \ref{lem:FTRL}, we have
\begin{align*}
Q_{i,h,s}^K
&\le \sum_{k=1}^K\alpha_k^K\mathbb{E}_{\pi_{i,h,s}^k}q_{i,h,s}^k +\xi_1 + \xi_2 + \xi_3,
\end{align*}
where
\begin{align*}
\xi_1 &=  \frac{5}{3}\sum_{k=1}^K\alpha_k^K\widehat{\eta}_k\alpha_k\mathsf{Var}_{\pi_{i,h,s}^k}(q_{i,h,s}^k),\quad
\xi_2 =  \frac{\log A_i}{\eta_{K+1}},\nonumber\\
\xi_3 &= 3\sum_{k=1}^K\alpha_{k}^K\widehat{\eta}_k^2\alpha_k^2\left\|q_{i,h,s}^k-\min q_{i,h,s}^k\right\|_{\infty}^3\mathds{1}\left(\widehat{\eta}_k\alpha_k\left\|q_{i,h,s}^k-\min q_{i,h,s}^k\right\|_{\infty}>\frac13\right).
\end{align*}

Below we shall bound $\xi_1$, $\xi_2$, and $\xi_3$ separately.
\begin{itemize}
    \item For $\xi_1$, we have
    \begin{align}
    \xi_1 &= \frac{5}{3}\alpha_1^K\alpha_1\eta_2\mathsf{Var}_{\pi_{i,h,s}^1}(q_{i,h,s}^1) + \frac{5}{3}\sum_{k=2}^K\alpha_k^K\frac{\alpha_k\eta_k}{1-\alpha_k}\mathsf{Var}_{\pi_{i,h,s}^k}(q_{i,h,s}^k)\nonumber\\
    &\overset{(i)}{\lesssim} \frac{\min\left\{H,\frac{1}{R}\right\}^2}{K^6} 
    + \sum_{k=2}^{K/2}\frac{(\min\{H,\frac{1}{R}\}\log K)^{\frac32}}{K^6} 
    + \sum_{k=K/2+1}^K\sqrt{\frac{\log^2 K}{K\min\{H,\frac{1}{R}\}}}\alpha_k^K\mathsf{Var}_{\pi_{i,h,s}^k}(q_{i,h,s}^k)\nonumber\\
    &\overset{(ii)}{\lesssim}\sqrt{\frac{\log^2 K}{K\min\{H,\frac{1}{R}\}}}\sum_{k=1}^K\alpha_k^K\mathsf{Var}_{\pi_{i,h,s}^k}(q_{i,h,s}^k) + \frac{1}{K^3},\label{eq:proof-xi-1}
    \end{align}
    where (i) holds since $\alpha_k^K\le \frac{1}{K^6}$ according to Lemma \ref{lem:alpha}, and $\mathsf{Var}_{\pi_{i,h,s}^k}(q_{i,h,s}^k)\lesssim \min\{H,\frac{1}{R}\}$, and 
    \begin{align}
    \frac{1}{1-\alpha_k} 
    &\le \frac{1}{1-\alpha_2} = 1+c_{\alpha}\log K\lesssim \log K, \quad k\ge 2,\nonumber\\
    \frac{1}{1-\alpha_k} 
    &\le \frac{1}{1-\alpha_{K/2}} \le 2, \quad k\ge \frac{K}{2},\nonumber\\
    \frac{\eta_k\alpha_k}{1-\alpha_k} 
    & = \frac{\sqrt{\alpha_k}}{1-\alpha_k}\sqrt{\frac{\log K}{\min\{H,\frac{1}{R}\}}} 
    \le 2\sqrt{\frac{\log^2 K}{k\min\{H,\frac{1}{R}\}}}.\label{eq:proof-temp-1}
    \end{align}
    as long as $K\ge 2c_{\alpha}\log K+2$,
    and (ii) uses the fact that $K\gtrsim \log K\min\{H,\frac{1}{R}\}$ for any $\varepsilon \in(0,H]$.
    \item For $\xi_2$, we have
    \begin{align}
    \xi_2 = \frac{\log A_i}{\eta_{K+1}} = \log A_i\sqrt{\frac{\min\{H,\frac{1}{R}\}}{\log K}}\sqrt{\frac{c_{\alpha}\log K}{K+c_{\alpha}\log K}} \le \log A_i\sqrt{\frac{c_{\alpha}\min\{H,\frac{1}{R}\}}{K}}.\label{eq:proof-xi-2}
    \end{align}
    \item For term $\xi_3$, we have
    \begin{align}
    \xi_3 &\le 3\sum_{k=2}^K\alpha_{k}^K\widehat{\eta}_k^2\alpha_k^2\left\|q_{i,h,s}^k-\min q_{i,h,s}^k\right\|_{\infty}^3\mathds{1}\left(\widehat{\eta}_k\alpha_k\left\|q_{i,h,s}^k-\min q_{i,h,s}^k\right\|_{\infty}>\frac13\right) \nonumber\\
    &\quad + 3\alpha_{1}^K{\eta}_2^2\alpha_1^2\left\|q_{i,h,s}^1-\min q_{i,h,s}^1\right\|_{\infty}^3.\label{eq:proof-xi-3}
    \end{align}
    Here by using Lemma \ref{lem:qihs}, the second term in the right-hand-side of \eqref{eq:proof-xi-3} satisfy
    \begin{align}
    3\alpha_{1}^K{\eta}_2^2\alpha_1^2\left\|q_{i,h,s}^1-\min q_{i,h,s}^1\right\|_{\infty}^3\lesssim \frac{\min\left\{H,\frac{1}{R}\right\}^3}{K^6} \le \frac{1}{K^3}.\label{eq:proof-xi-temp-1}
    \end{align}
    To bound the first term, notice \eqref{eq:proof-temp-1}, we have
    \begin{align*}
    \widehat{\eta}_k\alpha_k\left\|q_{i,h,s}^k-\min q_{i,h,s}^k\right\|_{\infty} &= \frac{\eta_k\alpha_k}{1-\alpha_k}\left\|q_{i,h,s}^k-\min q_{i,h,s}^k\right\|_{\infty} \nonumber\\
    &\le
    c\log K\sqrt{\frac{\min\{H,\frac{1}{R}\}}{k}} \le \frac{1}{3},\quad k\ge 9c^2\log^2 K\min\{H,\frac{1}{R}\},
    \end{align*}
    where $c$ is a constant. Then the first term is bound by
    \begin{align}
    &3\sum_{k=2}^K\alpha_{k}^K\widehat{\eta}_k^2\alpha_k^2\left\|q_{i,h,s}^k-\min q_{i,h,s}^k\right\|_{\infty}^3\mathds{1}\left(\widehat{\eta}_k\alpha_k\left\|q_{i,h,s}^k-\min q_{i,h,s}^k\right\|_{\infty}>\frac13\right) \nonumber\\
    &\quad \le 3\sum_{k=2}^{9c^2\log^2 K\min\{H,\frac{1}{R}\}}\alpha_{k}^K\widehat{\eta}_k^2\alpha_k^2\left\|q_{i,h,s}^k-\min q_{i,h,s}^k\right\|_{\infty}^3\nonumber\\
    &\overset{(i)}{\lesssim} \frac{\log^2 K\min\{H,\frac{1}{R}\}^4}{K^6} \overset{(i)}{\lesssim} \frac{1}{K^2},\label{eq:proof-xi-temp-2}
    \end{align}
    where (i) holds because $\alpha_k^K\le \frac{1}{K^6}$ for $K/2\ge 9c^2\log^2 K\min\{H,\frac{1}{R}\}$, and (ii) holds because $K\gtrsim \log K\min\{H,\frac{1}{R}\}$ for any $\varepsilon\in(0,H]$.
    Inserting \eqref{eq:proof-xi-temp-1} and \eqref{eq:proof-xi-temp-2} into \eqref{eq:proof-xi-3}, we have
    \begin{align}\label{eq:proof-xi-33}
    \xi_3 &\lesssim \frac{1}{K^2}.
    \end{align}
\end{itemize}
    Combining \eqref{eq:proof-xi-1}, \eqref{eq:proof-xi-2}, and \eqref{eq:proof-xi-33}, and notice that $K\gtrsim \min\{H,\frac{1}{R}\}$, we have
    \begin{align*}
    Q_{i,h,s}^K&\le \sum_{k=1}^K\alpha_k^K\mathbb{E}_{\pi_{i,h,s}^k}q_{i,h,s}^k + c\sqrt{\frac{\log^2 K}{K\min\{H,\frac{1}{R}\}}}\sum_{k=1}^K\alpha_k^K\mathsf{Var}_{\pi_{i,h,s}^k}(q_{i,h,s}^k) + \log A_i\sqrt{\frac{(c_{\alpha}+1)\min\{H,\frac{1}{R}\}}{K}}.
    \end{align*}
According to the definition of $\widehat{V}_{i,h}$ in \eqref{eq:cal-hatV}, if $\widehat{V}_{i,h} = H-h+1$, then \eqref{eq:lem-hatV-ge-barV} holds obviously.
Otherwise, we have
\begin{align*}
\widehat{V}_{i,h}(s) &=\sum_{k=1}^K\alpha_k^K\mathbb{E}_{a_i\sim\pi_{i,h,s}^k}q_{i,h,s} + \beta_{i,h}(s)\nonumber\\
&\ge \max Q_{i,h,s}^K - c\sqrt{\frac{\log^2 K}{K\min\{H,\frac{1}{R}\}}}\sum_{k=1}^K\alpha_k^K\mathsf{Var}_{\pi_{i,h,s}^k}(q_{i,h,s}^k) \nonumber\\
&\quad- \log A_i\sqrt{\frac{(c_{\alpha}+1)\min\{H,\frac{1}{R}\}}{K}} + \beta_{i,h}(s)\nonumber\\
&\overset{(i)}{\ge} \max Q_{i,h,s}^K \overset{(ii)}{\ge} \overline{V}_{i,h}^{\star,\widehat{\pi}_{-i},R},
\end{align*}
where (i) holds as long as $c_{\mathsf{b}}\ge c + \sqrt{c_{\alpha} + 1}$, and (ii) arises from $\overline{V}_{i,h}^{\star,\widehat{\pi}_{-i},R}\le \max_{a_i}Q_{i,h,s}^k$ as shown in \eqref{eq:proof-upper-barV-star}.
Thus we complete the proof.

\section{Proof of Corollary \ref{cor:main}}
\label{subsec:proof-cor}

%Throughout this section, we use superscript to denote the index of agent $i$, and the subscript to denote the step $h$ for clear.
% In addition, for ease of using Theorem \ref{thm:upper}, we denote $\widehat{\pi} = \{\widehat{\pi}_h\}_{1\le h\le H}$ as 
% $$
% \widehat{\pi}_h= \sum_{k=1}^K\alpha_k^K(\pi_{1,h}^k\times\pi_{2,h}^k).
% $$
In a two-player zero-sum RMG, it is evident that $\widehat{\pi}_{-1} = \widehat{\pi}_2$ and $\widehat{\pi}_{-2} = \widehat{\pi}_1$, where the subscript $1$ and $2$ indicates the index of agent.
Then the conclusion in Theorem \ref{thm:upper} can be restated as
\begin{align}\label{eq:proof-cor-temp-1}
\mathsf{gap}(\widehat{\pi};s) 
&= \max\left\{V_{1,1}^{\star,\widehat{\pi}_{-1},R}(s)-V_{1,1}^{\widehat{\pi},R}(s),V_{2,1}^{\star,\widehat{\pi}_{-2},R}(s)-V_{2,1}^{\widehat{\pi},R}(s)\right\} \nonumber\\
&= \max\left\{V_{1,1}^{\star,\widehat{\pi}_{2},R}(s)-V_{1,1}^{\widehat{\pi},R}(s),V_{2,1}^{\star,\widehat{\pi}_{1},R}(s)-V_{2,1}^{\widehat{\pi},R}(s)\right\} \le \varepsilon,\quad \forall s\in\mathcal{S}.
\end{align}
Noticing that $r_{1,h} = -r_{2,h}$, we have $V_{1,1}^{\pi} = -V_{2,1}^{\pi}$ for any policy $\pi$.
Thus for any state $s\in\mathcal{S}$, we have 
\begin{align*}
V^{\star,\widehat{\pi}_2,R}_{1,1}(s) - V^{\widehat{\pi}_1\times\widehat{\pi}_2,R}_{1,1}(s) 
&= V^{\star,\widehat{\pi}_2,R}_{1,1}(s) + V^{\widehat{\pi}_1\times\widehat{\pi}_2,R}_{2,1}(s)
\le  V^{\star,\widehat{\pi}_2,R}_{1,1}(s) +  V^{\star,\widehat{\pi}_1,R}_{2,1}(s)\nonumber\\
&\overset{(i)}{=} V^{\star,\widehat{\pi}_2,R}_{1,1}(s) - V^{\widehat{\pi},R}_{1,1}(s) +  V^{\star,\widehat{\pi}_1,R}_{2,1}(s) - V^{\widehat{\pi},R}_{2,1}(s)
\overset{(ii)}{\le} \varepsilon + \varepsilon = 2\varepsilon,
\end{align*}
where (i) arises from $V^{\widehat{\pi},R}_{1,1}(s) = -V^{\widehat{\pi},R}_{2,1}(s)$, and (ii) uses \eqref{eq:proof-cor-temp-1}.
Similarly, we can prove that $V^{\star,\widehat{\pi}_1,R}_{2,1}(s) - V^{\widehat{\pi}_1\times\widehat{\pi}_2,R}_{2,1}(s) \le 2\varepsilon$ for all $s\in\mathcal{S}$.
By replacing $\varepsilon$ with $\varepsilon/2$, we could prove Corollary \ref{cor:main}.

\section{Proof of Theorem \ref{thm:lower}}
\label{sec:proof-thm-lower}

According to the proof of Theorem 2 in \citet{shi2024sample}, we could construct a collection of robust Markov games, such that finding a robust NE/CE/CCE of these RMGs degrades to finding the optimal policy of the first agent over a corresponding robust Markov Decision Processes.
Thus the proof of Theorem \ref{thm:lower} degrades to prove that: one can construct a collection of robust Markov Decision Processes $\{\mathcal{M}_\theta | \theta \in\Theta\}$ with $S$ states, $A$ actions, horizon $H\ge 16$, and the uncertanty level $R\in[0,1-c_0)$, such that
\begin{align*}
\inf_{\pi}\max
_{\theta\in\Theta}\mathbb{P}_{\theta}\left(\max_s\left[V_1^{\star,R}(s) - V_1^{\widehat{\pi},R}(s)\right] \ge \varepsilon\right)\ge \frac14,
\end{align*}
provided that the sample size for each state-action pair over the nominal
transition kernel obeys
$$
N\le \frac{cH^3}{\varepsilon^2}\min\left\{H,\frac{1}{R}\right\}
$$
for some sufficiently small constant $c > 0$. Here, the infimum is over all policy estimators $\widehat{\pi}$, and $\mathbb{P}_{\theta}$ denotes the probability when the RMDP is $\mathcal{M}_\theta$.
Below we shall construct a hard instance.

\subsection{Construction of the hard problem instances}

\paragraph{Construction of the hard MDPs.} 
To begin with, define the state space and the action space as $
\mathcal{S} = \{0,1,\cdots,S-1\}
$ and $
\mathcal{A} = \{0,1,\cdots,A-1\},
$ respectively.
To construct the nominal transition probability matrix, for any integer $H \ge 16$, let us consider a set $\Theta \subset
\{0, 1\}^H$ of $H$-dimensional vectors, which we shall construct shortly. We then generate a collection of MDPs
$$
\mathsf{MDP}(\Theta) = \left\{\mathcal{M}_\theta=\left(\mathcal{S},\mathcal{A},P^\theta=\{P_h^\theta\}_{h=1}^H,\{r_h\}_{h=1}^H,H\right)|\theta = [\theta_h]_{1\le h\le H}\in\Theta\right\}.
$$
Given any state $s\in\{2,\cdots,S-1\}$, the corresponding action space is $\mathcal{A}=\{0,\cdots,A-1\}$.
While for state $s = 0$,
the action space is $\mathcal{A}' = \{0,1\}$.
For any $\theta = \{\theta_h\}_{h=1}^H\in \{0, 1\}^H$, the nominal transition kernel $P^{\theta}$ of the constructed robust MDP $\mathcal{M}_\phi$ is defined as
\begin{align*}
P^{\theta}_h(s_{h+1}|s_h,a_h)=\left\{
\begin{array}{ll}
p\mathds{1}(s_{h+1}=1) + (1-p)\mathds{1}(s_{h+1}=0),&{\rm if}~(s_h,a_h) = (0,\theta_h),\\
q\mathds{1}(s_{h+1}=1) + (1-q)\mathds{1}(s_{h+1}=0),&{\rm if}~(s_h,a_h) = (0,1-\theta_h),\\
\mathds{1}(s_{h+1}=1),&{\rm if}~s_h\ge 1,\\
\end{array}
\right.
\end{align*}
where $0<q<p<\frac{3}{4}$ are defined as
\begin{align*}
p = c\max\left\{\frac{1}{H},R\right\},\quad q = p-\Delta,
\end{align*}
where 
\begin{align}\label{eq:def-Delta}
\Delta := \frac{c_1\varepsilon} {H\min\left\{H,\frac{1}{R}\right\}},
\end{align}
and $c\le 3/4$.
Moreover, we assume that $\varepsilon\le c_0H$.
Here $c_1$ is a sufficiently large constant, $c_0$ is a sufficiently small constant, and meanwhile $c_0c_1\le c/2$.
With this definition, one can easily check that $q\ge p-\frac{c_1c_0}{c}\ge\frac{p}{2}$ for any $\varepsilon\in(0,H]$.

Then, we define the reward function as
\begin{align*}
r_h(s,a) = \left\{
\begin{array}{ll}
1,&{\rm if}~s=1,\\
0,&{\rm otherwise}.
\end{array}
\right.
\end{align*}

Finally, let us choose the set $\Theta \subset \{0, 1\}^H$. By virtue of the Gilbert-Varshamov lemma \citep{Gilbert1952Acomparison}, one can construct $\Theta \subset \{0, 1\}^H$ in a way that
\begin{align}\label{eq:proof-thm-lower-construct-Theta}
|\Theta|\ge {\rm e}^{H/8}\quad {\rm and}\quad \left\|\theta - \tilde{\theta}\right\|_1\ge\frac{H}{8}\quad{\rm for~any~\theta,\tilde\theta\in\Theta~obeying~\theta\neq\tilde\theta}.
\end{align}

\paragraph{Uncertainty set of the transition kernels.}
Recalling the uncertainty set assumed is 
\begin{align*}
\mathcal{U}^R(P_h^\theta) = \otimes\mathcal{U}^R(P_{h}^\theta(\cdot|s,a)),\quad \mathcal{U}^R(P_{h}^\theta(\cdot|s,a)):=\left\{(1-R)P_h^\theta(\cdot|s,a)+RP'|P'\in\Delta(\mathcal{S})\right\}.
\end{align*}
For any policy $\pi = \{\pi_{h}\}_{h=1}^H$, its corresponding transition probability matrix is
\begin{align*}
P^{\theta,\pi}_h(s_{h+1}|s_h)=\left\{
\begin{array}{ll}
z_h^\pi\mathds{1}(s_{h+1}=1) + \left(1-z_h^\pi\right) \mathds{1}(s_{h+1}=0),&{\rm if}~s_h = 0,\\
\mathds{1}(s_{h+1}=1),&{\rm if}~s_h\ge 1,\\
\end{array}
\right.
\end{align*}
where
$$
z_h^\pi = \pi_h(\theta_h|0)p+\pi_h(1-\theta_h|0)q.
$$

We pause to show by mathematical induction that for any policy $\pi$, we always have 
\begin{align}\label{eq:proof-thm-lower-temp-1}
V_h^{\pi,R}(0) = \min_s V_h^{\pi,R}(s),\quad \forall 1\le h\le H.
\end{align}
It is obvious that $V_{H+1}^{\pi,R}(0)=\min_s V_{H+1}^{\pi,R}(s) = 0$.
Suppose that $V_{h+1}^{\pi,R}(0)= \min_s V_{h+1}^{\pi,R}(s)$.
Then we have
\begin{align*}
V_{h}^{\pi,R}(0) 
&= (1-R)z_h^\pi V_{h+1}^{\pi,R}(1) + (1-R)(1-z_h^\pi) V_{h+1}^{\pi,R}(0) + RV_{h+1}^{\pi,R}(0)\nonumber\\
&\le (1-R)V_{h+1}^{\pi,R}(1) +RV_{h+1}^{\pi,R}(0) \nonumber\\
&\le (1-R)V_{h+1}^{\pi,R}(1)+RV_{h+1}^{\pi,R}(0) = \min_{s\ge 1}V_{h}^{\pi,R}(s)\nonumber\\
&\le 1 + (1-R)V_{h+1}^{\pi,R}(1)+RV_{h+1}^{\pi,R}(0) = V_{h}^{\pi,R}(1).
\end{align*}
Thus we complete the proof.

Therefore, for any policy $\pi$, the perturbed transition kernel that minimizes the expected reward is
\begin{align*}
    P_h^{\theta,\pi,V_{h+1}^{\pi,R}}(\cdot|s_h) 
    &= \arg\min_{P\in\mathcal{U}_h^R(P_h^{\theta,\pi}(\cdot|s_h))} PV_{h+1}^{\pi,R}\nonumber\\
    &=\left\{
    \begin{array}{ll}
    (1-R)z_h^\pi\mathds{1}(s_{h+1}=1) + \left(1-(1-R)z_h^\pi\right) \mathds{1}(s_{h+1}=0),&{\rm if}~s_h = 0,\\
(1-R)\mathds{1}(s_{h+1}=1) + R\mathds{1}(s_{h+1}=0),&{\rm if}~s_h\ge 1.\\
    \end{array}
    \right.
\end{align*}

\paragraph{Optimal robust policy and value function.}
To proceed, we are ready to identify the the optimal robust policies, and derive the corresponding robust value functions.
According to \eqref{eq:proof-thm-lower-temp-1}, the optimal robust policy $\pi^{\star,\theta}=\{\pi_h^{\star,\theta}\}_{h=1}^H$ for $\mathcal{M}_\theta$ is 
\begin{align}\label{eq:proof-thm-lower-pistar}
\pi_h^{\star,\theta}(\theta_h|0) = 1,\quad \pi_h^{\star,\theta}(1-\theta_h|0) = 0.
\end{align}
According to robust Bellman equation, the corresponding value function is
\begin{align}\label{eq:proof-thm-lower-Vstar}
V^{\star,R}_h(0) &= \frac{\widetilde{p}}{R+\widetilde{p}}\left(H-h+1 - \frac{1-(1-R-\widetilde{p})^{H-h+1}}{R+\widetilde{p}}\right)\nonumber\\
V^{\star,R}_h(1) &= \frac{1}{R+\widetilde{p}}\left(\widetilde{p}(H-h+1) + R\frac{1-(1-R-\widetilde{p})^{H-h+1}}{R+\widetilde{p}}\right),
\end{align}
where
\begin{align}\label{eq:def-tilde-p}
\widetilde{p} = (1-R)p.
\end{align}
The derivation is deferred to the end of this section.
Here we remark that for all $\theta$, RMPDs $\mathcal{M}_\theta$ share the same optimal value function.
Thus we omit the superscript $\theta$ in $V^{\star,R}_h$. 

\subsection{Establishing the minimax lower bound}

We are now positioned to establish our sample complexity lower bounds.
It suffices to prove that 
\begin{align}\label{eq:proof-thm-lower-temp-5}
\inf_{\widehat{\pi}}\max
_{\theta\in\Theta}\mathbb{P}_{\theta}\left(V_1^{\star,R}(0) - V_1^{\widehat{\pi},R}(0) > \varepsilon\right)\ge \frac14,
\end{align}
where $\widehat{\pi}$ is any policy estimator computed based on the independent samples.

\paragraph{Step 1: converting the goal to estimate $\theta$.}

For any policy estimator $\widehat{\pi}=\{\widehat{\pi}_h\}_{h=1}^H$, We can construct a corresponding estimator $\widehat{\theta}=\{\widehat{\theta}_h\}_{h=1}^H$ for $\theta$ as
\begin{align}\label{eq:def-hattheta}
\widehat\theta := \arg\min_{\bar{\theta}\in\Theta}\sum_{h=1}^H\left\|\widehat{\pi}_{h}(\cdot|0)-\pi_h^{\star,\bar\theta}(\cdot|0)\right\|_1,
\end{align}
where $\pi_h^{\star,\bar\theta}$ is defined in \eqref{eq:proof-thm-lower-pistar}.
According to the construction of $\Theta$ in \eqref{eq:proof-thm-lower-construct-Theta}, a good policy estimator $\widehat{\pi}$ implies a correct estimator $\widehat{\theta}$.
Mathematically, with samples from $\mathcal{M}_\theta$ for $\theta\in\Theta$, we have
\begin{align}\label{eq:proof-thm-lower-estimator-prob}
\mathbb{P}_\theta\left(V_1^{\star,R}(0) - V_1^{\widehat{\pi},R}(0) <\varepsilon\right)\le \mathbb{P}_\theta\left(\widehat{\theta} = \theta\right),
\end{align}
where $\mathbb{P}_\theta$ denotes the probability distribution when the MDP is $\mathcal{M}_\theta$.
The proof of this claim is postponed to the end of this section.
% implies a correct estimator for $\theta$ satisfying 
% $$
% \widehat{\theta} = \theta^{\star}.
% $$

\paragraph{Step 2: probability of error in testing multiple hypotheses.}

Next, we turn attention to the hypothesis testing problem on $\Theta$.
Specifically, we focus on the minimax probability of error defined as follows:
\begin{align}\label{eq:def-pe}
p_e:=\inf_\psi\max_{\theta\in\Theta}\mathbb{P}_\theta(\psi\neq\theta),
\end{align}
where the infimum is taken over all possible tests $\psi$ constructed based on the samples from $\mathcal{M}_\theta$.

% We pause to remark that the dataset should have $\frac{N}{H(S-1)(A-1)}$ samples for each state-action pair.
% This is because we can replace the special state $0$ with any state in space $\mathcal{S}$, and replace action $0$ with any subset $\mathcal{A}_1\subset\mathcal{A}$ with $|\mathcal{A}_1|=A/2$, in a similar way as in \citet{shi2024sample}.
% The specific construction is postponed to the end of this section.

%$\mu^\theta = \frac{1}{H}\sum_{h=1}^H\mu_h^\theta$
Let $\mu_h^\theta$ represent the distribution of a sample tuple $(s_h, a_h, s_{h+1})$ for step $h$ under the nominal transition kernel ${P}_h^\theta$ associated with $\mathcal{M}_\theta$ for $1\le h\le H$.
Assume that we collect $N$ independent samples for each state-action pair and each time step $H$.
Combined with Fano's inequality from \citet{Tsybakov2009Introduction} (c.f.Theorem 2.2) and the additivity of the KL divergence (cf. \citet{Tsybakov2009Introduction}, Page 85), we have
\begin{align}\label{eq:proof-thm-lower-temp-3}
p_e &\ge 1 - \frac{N\max_{\theta,\tilde\theta\in\Theta}\sum_{h=1}^H\mathsf{KL}(\mu_h^\theta||\mu_h^{\tilde{\theta}}) + \log 2}{\log|\Theta|}\nonumber\\
&\ge \frac12 - \frac{8N\max_{\theta,\tilde\theta\in\Theta}\sum_{h=1}^H\mathsf{KL}(\mu_h^\theta||\mu_h^{\tilde{\theta}}) }{H},
\end{align}
where we use the assumption that $H\ge 16\log 2 $ and $|\Theta|\ge {\rm e}^{H/8}$.
Notice that 
\begin{align*}
\mathsf{KL}(\mu_h^\theta||\mu_h^{\tilde{\theta}}) &= \sum_{a=\{0,1\}}\mathsf{KL}\left(P_h^\theta(\cdot|0,a)||P_h^{\tilde{\theta}}(\cdot|0,a)\right) \nonumber\\
&=\left(p\log\frac{p}{q} + (1-p)\log\frac{1-p}{1-q}+q\log\frac{q}{p} + (1-q)\log\frac{1-q}{1-p}\right) \mathds{1}(\theta_h\neq\tilde{\theta}_h)\nonumber\\
&\le \left(\frac{p^2}{q}-p + \frac{(1-p)^2}{1-q}-1+p + \frac{q^2}{p}-1 +\frac{(1-q)^2}{1-p}-1+q\right) \mathds{1}(\theta_h\neq\tilde{\theta}_h)\nonumber\\
&= \left(\frac{(p-q)^2}{p(1-p)} + \frac{(p-q)^2}{q(1-q)}\right) \mathds{1}(\theta_h\neq\tilde{\theta}_h)\nonumber\\
&\overset{(i)}{\le} \frac{8(p-q)^2}{q} \mathds{1}(\theta_h\neq\tilde{\theta}_h),
\end{align*}
where (i) uses the fact that $p(1-p)\ge\frac{p}{4}$ and $ q(1-q)\ge \frac{q}{4}$ for $q<p<\frac34$.
Inserting into \eqref{eq:proof-thm-lower-temp-3}, we have
\begin{align*}
p_e &\ge \frac12 - \frac{8N\max_{\theta,\tilde\theta\in\Theta}\left\|\theta - \tilde{\theta}\right\|_1 }{H}\frac{8(p-q)^2}{q}\nonumber\\
&\overset{(i)}{\ge} \frac12 - \frac{64N(p-q)^2}{q}\overset{(ii)}{\ge} \frac12 - \frac{128Nc_1^2\varepsilon^2}{cH^2\min\left\{H,\frac{1}{R}\right\}},
\end{align*}
where (i) uses the value of $p-q$ in \eqref{eq:def-Delta}, and the fact that $\max_{\theta,\tilde\theta\in\Theta}\left\|\theta - \tilde{\theta}\right\|_1 \le H$, (ii) uses the fact that $q\ge \frac{p}{2}$.
Provided that the total number of samples
\begin{align*}
NHSA \le \frac{cH^3SA}{512c_1^2\varepsilon^2}\min\left\{H,\frac{1}{R}\right\},
\end{align*}
we have
\begin{align}\label{eq:proof-thm-lower-pe}
p_e \ge \frac12 - \frac14 = \frac14.
\end{align}

\paragraph{Step 3: combining the above results.}
Combining \eqref{eq:def-pe} and \eqref{eq:proof-thm-lower-pe}, we have that for any estimator $\psi$ for $\theta$, the error probability
\begin{align*}
\max_{\theta\in\Theta}\mathbb{P}_\theta(\psi\neq \theta) \ge \frac14.
\end{align*}

Then for the estimator $\widehat\theta$ constructed in \eqref{eq:def-hattheta}, we have
\begin{align*}
\max_{\theta\in\Theta}\mathbb{P}_\theta(\widehat{\theta}\neq \theta) \ge \frac14.
\end{align*}
According to \eqref{eq:proof-thm-lower-estimator-prob}, we have for any policy estimator $\widehat{\pi}$,
\begin{align*}
\max_{\theta\in\Theta}\mathbb{P}_\theta\left(V_1^{\star,R}(0) - V_1^{\widehat{\pi},R}(0) \ge\varepsilon\right)\ge \max_{\theta\in\Theta}\mathbb{P}_\theta\left(\widehat{\theta} \neq \theta\right) \ge\frac14.
\end{align*}

Considering that this holds for arbitrary policy estimator $\widehat{\pi}$, we could prove that \eqref{eq:proof-thm-lower-temp-5} holds true,
and complete the proof.

\paragraph{Calculation of \eqref{eq:proof-thm-lower-Vstar}.}
Considering that we only consider about the value of $V^{\star,R}_h(0)$ and $V^{\star,R}_h(1)$, we rewrite vector $V^{\star,R}_h = [V^{\star,R}_h(0),V^{\star,R}_h(1)]^{\top}$ for convenience.
According to robust Bellman's equation, we have
\begin{align}\label{eq:proof-thm-lower-temp-4}
V^{\star,R}_h = 
\left[
\begin{array}{c}
0\\
1
\end{array}
\right]
+ \left[
\begin{array}{cc}
1-\widetilde{p}&\widetilde{p}\\
R & 1-R
\end{array}
\right]
V^{\star,R}_{h+1}=:r + P^{\star,R}V^{\star,R}_{h+1},
\end{align}
with $V^{\star,R}_{H+1} = 0$,
where we redefine
$$
P^{\star,R} = P_h^{\star,V^{\star,R}_{h+1}},\quad r = r_h.
$$

By eigenvalue decomposition, $P^{\star,R}$ can be decomposed as
\begin{align}\label{eq:proof-thm-lower-eigendecom-P}
\left[
\begin{array}{cc}
1-\widetilde{p}&\widetilde{p}\\
R & 1-R
\end{array}
\right] 
&= \left[
\begin{array}{cc}
1&\widetilde{p}\\
1 & -R
\end{array}
\right]\left[
\begin{array}{cc}
1&0\\
0 & 1-R-\widetilde{p}
\end{array}
\right]\left[
\begin{array}{cc}
1&\widetilde{p}\\
1 & -R
\end{array}
\right]^{-1}\nonumber\\
&=\frac{1}{R+\widetilde{p}}\left[
\begin{array}{cc}
1&\widetilde{p}\\
1 & -R
\end{array}
\right]\left[
\begin{array}{cc}
1&0\\
0 & 1-R-\widetilde{p}
\end{array}
\right]\left[
\begin{array}{cc}
R&\widetilde{p}\\
1 & -1
\end{array}
\right].
\end{align}
Thus we have
\begin{align*}
V^{\star,R}_{h} 
&= \sum_{h'=h}^H (P^{\star,R})^{h'-h}r=\frac{1}{R+\widetilde{p}}\left[
\begin{array}{cc}
1&\widetilde{p}\\
1 & -R
\end{array}
\right]\left[
\begin{array}{cc}
\sum_{h'=h}^H 1&0\\
0 & \sum_{h'=h}^H (1-R-\widetilde{p})^{h'-h}
\end{array}
\right]\left[
\begin{array}{cc}
R&\widetilde{p}\\
1 & -1
\end{array}
\right]r\nonumber\\
&=\frac{1}{R+\widetilde{p}}\left[
\begin{array}{c}
\widetilde{p}\left(H-h+1 - \frac{1-(1-R-\widetilde{p})^{H-h+1}}{R+\widetilde{p}}\right)\\
\widetilde{p}(H-h+1) + R\frac{1-(1-R-\widetilde{p})^{H-h+1}}{R+\widetilde{p}}
\end{array}
\right].
\end{align*}

\paragraph{Proof of \eqref{eq:proof-thm-lower-estimator-prob}.}

It suffices to prove that for policy estimation $\widehat{\pi}$  obtained from samples of $\mathcal{M}_\theta$,
if $V_1^{\star,R}(0) - V_1^{\widehat{\pi},R}(0) \le \varepsilon$, then we have $\widehat{\theta} = \theta$,
where $\widehat{\theta}$ is defined by \eqref{eq:def-hattheta}.

Towards this, we assume that $\widehat{\theta} \neq \theta$.
Then according to the definition of $\Theta$ in \eqref{eq:proof-thm-lower-construct-Theta}, and recalling $\widehat{\theta}\in\Theta$, we have $\left\|\theta - \widehat{\theta}\right\|_1\ge H/8$, and
\begin{align}\label{eq:proof-thm-lower-temp-2}
\sum_{h=1}^H\left\|\widehat{\pi}_h(\cdot|0)-\pi_h^{\star,\theta}(\cdot|0)\right\|_1
&\ge\sum_{h=1}^H\left\|{\pi}_h^{\star,\theta}(\cdot|0)-\pi_h^{\star,\widehat{\theta}}(\cdot|0)\right\|_1 - \sum_{h=1}^H\left\|\widehat{\pi}_h(\cdot|0)-\pi_h^{\star,\widehat{\theta}}(\cdot|0)\right\|_1\nonumber\\
&=2\left\|\theta - \widehat{\theta}\right\|_1 - \sum_{h=1}^H\left\|\widehat{\pi}_h(\cdot|0)-\pi_h^{\star,\widehat{\theta}}(\cdot|0)\right\|_1\nonumber\\
&\ge \frac{H}{4}-\sum_{h=1}^H\left\|\widehat{\pi}_h(\cdot|0)-\pi_h^{\star,\widehat{\theta}}(\cdot|0)\right\|_1.
\end{align}
To proceed, we claim that $V_1^{\star,R}(0) - V_1^{\widehat{\pi},R}(0)<\varepsilon$ implies
\begin{align}\label{eq:proof-thm-lower-diff-pi}
\sum_{h=1}^H\left\|\widehat{\pi}_h(\cdot|0)-\pi_h^{\star,\theta}(\cdot|0)\right\|_1 <\frac{H}{8}.
\end{align}
Then according to \eqref{eq:proof-thm-lower-temp-2}, we have
\begin{align*}
\sum_{h=1}^H\left\|\widehat{\pi}_h(\cdot|0)-\pi_h^{\star,\theta}(\cdot|0)\right\|_1
&> \frac{H}{4}-\frac{H}{8} = \frac{H}{8},
\end{align*}
which contradicts \eqref{eq:proof-thm-lower-diff-pi}.
Thus we prove that $\widehat\theta = \theta$.

Now it suffices to prove that the claim \eqref{eq:proof-thm-lower-diff-pi} holds true.
Specifically, we intend to prove that if
\begin{align*}
\sum_{h=1}^H\left\|\widehat{\pi}_h(\cdot|0)-\pi_h^{\star,\theta}(\cdot|0)\right\|_1\ge\frac{H}{8},
\end{align*}
then one has 
\begin{align*}
V_1^{\star,R}(0) - V_1^{\widehat{\pi},R}(0) \ge\varepsilon.
\end{align*}

%We shall postpone the proof of this claim to Appendix \ref{subsec:proof-lem-diffV}
Towards this, for any policy $\widehat{\pi}$, notice that $V_1^{\widehat{\pi},R} = [V_1^{\widehat{\pi},R}(0),V_1^{\widehat{\pi},R}(1)]^{\top}$ is computed as
\begin{align*}
V_1^{\widehat{\pi},R} = \sum_{h=1}^H\prod_{j=1}^{h-1} A_j^{\widehat{\pi}} r,
\end{align*}
where $A_j^{\widehat{\pi}}$ is defined as 
\begin{align*}
A_j^{\widehat{\pi}} = \left[
\begin{array}{cc}
1-\tilde{z}_j^{\widehat{\pi}}&\tilde{z}_j^{\widehat{\pi}}\\
R & 1-R
\end{array}
\right],\quad
r = \left[
\begin{array}{c}
0\\
1
\end{array}
\right]
\end{align*}
with
\begin{align}\label{eq:def-tilde-z}
\tilde{z}_j^{\widehat{\pi}} = (1-R)\left(\widehat{\pi}_j(\theta_j|0)p + \widehat{\pi}_j(1-\theta_j|0)q\right).
\end{align}
It is obvious that $(1-R)q\le \tilde{z}_j^{\widehat{\pi}}\le (1-R)p$.
Recall that the optimal value function $V_1^{\star,R} = [V_1^{\star,R}(0),V_1^{\star,R}(1)]^{\top}$ is calculated as (cf. \eqref{eq:proof-thm-lower-temp-4})
\begin{align*}
V_1^{\star,R} = \sum_{h=1}^H\prod_{j=1}^{h-1} A_j^{\star} r,
\end{align*}
where 
\begin{align*}
A_j^{\star} = \left[
\begin{array}{cc}
1-\widetilde{p}&\widetilde{p}\\
R & 1-R
\end{array}
\right],
\end{align*}
and $\widetilde{p}$ is defined in \eqref{eq:def-tilde-p}.
To decompose the difference between $V_1^{\star,R}$ and $V_1^{\widehat{\pi},R}$, we introduce a sequence of vectors $\widetilde{V}_0,\widetilde{V}_1,\cdots,\widetilde{V}_{H}$ defined as
\begin{align*}
\widetilde{V}_h = \sum_{h'=1}^{h+1}\prod_{j=1}^{h'-1} A_j^{\star} r + \sum_{h'=h+2}^{H}\prod_{j=1}^{h} A_j^{\star}\prod_{j=h+1}^{h'-1}A_j^{\widehat{\pi}} r,
\end{align*}
We remark that $\widetilde{V}_h$ is actually the value function at step $1$ for the policy $\widetilde{\pi}=\{\widetilde{\pi}_h\}_{h=1}^H$ satisfying $\widetilde{\pi}_j = \pi_j^{\star,\theta}$ for $j\le h$ and $\widetilde{\pi}_j = \widehat{\pi}_j$ for $j> h$.
Moreover, we have $\widetilde{V}_H = V_1^{\star,R}$ and $\widetilde{V}_{0} = V_1^{\widehat{\pi},R}$.
Now we can decompose the difference between $V_1^{\star,R}$ and $V_1^{\widehat{\pi},R}$ as
\begin{align}\label{eq:proof-thm-lower-temp-10}
V_1^{\star,R} - V_1^{\widehat{\pi},R} 
= \sum_{h=1}^H\left(\widetilde{V}_{h} - \widetilde{V}_{h-1}\right).
\end{align}

For the $h$-th term, we have
\begin{align*}
\widetilde{V}_{h} - \widetilde{V}_{h-1} 
&= \sum_{h'=1}^{h+1}\prod_{j=1}^{h'-1} A_j^{\star} r + \sum_{h'=h+2}^{H}\prod_{j=1}^{h} A_j^{\star}\prod_{j=h+1}^{h'-1}A_j^{\widehat{\pi}} r -\sum_{h'=1}^{h}\prod_{j=1}^{h'-1} A_j^{\star} r - \sum_{h'=h+1}^{H}\prod_{j=1}^{h-1} A_j^{\star}\prod_{j=h}^{h'-1}A_j^{\widehat{\pi}} r\nonumber\\
&= \prod_{j=1}^{h-1} A_j^{\star}\left(A_h^{\star}-A_h^{\widehat{\pi}}\right)\left(\sum_{h'=h+1}^{H}\prod_{j=h+1}^{h'-1}A_j^{\widehat{\pi}}r\right),
\end{align*}
where $\prod_{j=h+1}^{h}A_j^{\widehat{\pi}}r$ is defined as $r$.
We claim that 
\begin{align}\label{eq:proof-thm-lower-temp-6}
\widetilde{V}_{h} - \widetilde{V}_{h-1} 
&\ge \prod_{j=1}^{h-1} A_j^{\star}\left(A_h^{\star}-A_h^{\widehat{\pi}}\right)\left(\sum_{h'=h+1}^{H}\prod_{j=h+1}^{h'-1}A_j^{\star}r\right) \\
&\overset{(i)}{=} \left(A^{\star}\right)^{h-1}\left(A_h^{\star}-A_h^{\widehat{\pi}}\right)\sum_{h'=0}^{H-h-1}\left(A^{\star}\right)^{h'}r,\nonumber
\end{align}
where in (i) we rewrite $A_j^{\star}$ as $A^{\star}$ since it does not vary with $j$.
The proof of \eqref{eq:proof-thm-lower-temp-6} is postponed to the end of this section.
According to the decomposition in \eqref{eq:proof-thm-lower-eigendecom-P}, we have
\begin{align*}
\sum_{h'=0}^{H-h-1}\left(A^{\star}\right)^{h'}r &= \frac{1}{R+\widetilde{p}}\left[
\begin{array}{c}
\widetilde{p}\left(H-h - \frac{1-(1-R-\widetilde{p})^{H-h}}{R+\widetilde{p}}\right)\\
\widetilde{p}(H-h) + R\frac{1-(1-R-\widetilde{p})^{H-h}}{R+\widetilde{p}}
\end{array}
\right],\nonumber\\
%\end{align*}
%\begin{align*}
\left(A^{\star}\right)^{h-1} &=\frac{1}{R+\widetilde{p}}\left[
\begin{array}{cc}
1&\widetilde{p}\\
1 & -R
\end{array}
\right]\left[
\begin{array}{cc}
1&0\\
0 & \left(1-R-\widetilde{p}\right)^{h-1}
\end{array}
\right]\left[
\begin{array}{cc}
R&\widetilde{p}\\
1 & -1
\end{array}
\right].
\end{align*}
by inserting them, we have
\begin{align}\label{eq:proof-thm-lower-temp-11}
\widetilde{V}_{h}(0) - \widetilde{V}_{h-1}(0) 
&= \frac{\widetilde{p}-\widetilde{z}_h^{\widehat{\pi}}}{(R+\widetilde{p})^2}\left(R+\widetilde{p}\left(1-R-\widetilde{p}\right)^{h-1}\right)\left(1-\left(1-R-\widetilde{p}\right)^{H-h}\right)\nonumber\\
&\overset{(i)}{=} \frac{(1-R)(p-q)}{2(R+\widetilde{p})^2}\left(R+\widetilde{p}\left(1-R-\widetilde{p}\right)^{h-1}\right)\left(1-\left(1-R-\widetilde{p}\right)^{H-h}\right)\left\|\widehat{\pi}_h(\cdot|0)-{\pi}_h^{\star,\theta}(\cdot|0)\right\|_1\nonumber\\
&\ge 0,
\end{align}
where (i) uses the definition of $\widetilde{p}$ and $\widetilde{z}_h^{\widehat{\pi}}$ in \eqref{eq:def-tilde-p} and \eqref{eq:def-tilde-z}.
Below we consider the case of $h\le 31H/32$, which gives
\begin{align*}
(1-R-\widetilde{p})^{H-h}\le \left(1-R-\widetilde{p}\right)^{\frac{H}{32}} \le (1-p)^{\frac{H}{32}} \overset{(i)}{\le} 1 - \frac{c}{32},
\end{align*}
where (i) holds because $p\ge \frac{c}{H}\ge 1-(1-\frac{c}{32})^{\frac{32}{H}}$ for $c\le 3/4$.
Thus we have
\begin{align}\label{eq:proof-thm-lower-temp-9}
\widetilde{V}_{h}(0) - \widetilde{V}_{h-1}(0) 
\ge \frac{c(1-R)(p-q)}{64(R+\widetilde{p})^2}\left(R+\widetilde{p}\left(1-R-\widetilde{p}\right)^{h-1}\right)\left\|\widehat{\pi}_h(\cdot|0)-{\pi}_h^{\star,\theta}(\cdot|0)\right\|_1.
\end{align}
We shall proceed with a case-by-case analysis as below.
\begin{itemize}
\item Consider the case $R+\widetilde{p}\ge \frac{7}{8(h-1)}$.
% Notice that if $R>\frac{1}{H}$, then we have
% \begin{align*}
% R+\widetilde{p} = R + c(1-R)\max\left\{\frac{1}{H},R\right\} = R + c(1-R)R \overset{(i)}{\le} \frac{7}{4}R,
% \end{align*}
% where (i) uses the fact that $c\le 3/4$.
% Thus we have
% \begin{align}\label{eq:proof-thm-lower-temp-7}
% R \ge \frac{4}{7}\left(R+\widetilde{p}\right) \ge \frac{49}{32(h-1)}\ge \frac{49}{32H}.
% \end{align}
Notice that if $R\le \frac{1}{H}$, then we have
\begin{align*}
R+\widetilde{p} = R + c(1-R)\max\left\{\frac{1}{H},R\right\} = R + \frac{c(1-R)}{H} \overset{(i)}{\le} R + \frac{3}{4H},
\end{align*}
where (i) uses the fact that $c\le 3/4$.
This can be rewritten as
\begin{align}\label{eq:proof-thm-lower-temp-8}
R \ge R+\widetilde{p} - \frac{3}{4H}\ge \frac{7}{8H} - \frac{3}{4H} = \frac{1}{8H}.
\end{align}
While if $R>\frac{1}{H}$, then $R\ge\frac{1}{8H}$ obviously.
% Combining \eqref{eq:proof-thm-lower-temp-7} and \eqref{eq:proof-thm-lower-temp-8}, we have
% $$
% R\ge \frac{1}{8H}.
% $$
Inserting into the definition of $p$, we have
$$
p=c\max\left\{\frac{1}{H},R\right\}\le c\max\left\{8R,R\right\} = 8cR.
$$
This upper bound on $p$ implies an upper bound on the following term.
\begin{align*}
&\quad \frac{(1-R)(p-q)}{(R+\widetilde{p})^2}\left(R+\widetilde{p}\left(1-R-\widetilde{p}\right)^{h-1}\right)\nonumber\\
&\ge \frac{(1-R)(p-q)R}{(R+\widetilde{p})^2}\ge\frac{(1-R)(p-q)R}{(R+8c(1-R)R)^2}\nonumber\\
&=\frac{(1-R)(p-q)}{(1+8c(1-R))^2R}\ge \frac{(1-R)(p-q)}{49R}\ge \frac{(1-R)(p-q)}{49\max\{\frac{1}{H},R\}}.
\end{align*}
\item Consider the case $R+\widetilde{p}< \frac{7}{8(h-1)}$.
In this case, we have
\begin{align*}
\left(1-R-\widetilde{p}\right)^{h-1} \ge 1-(R+\widetilde{p})(h-1) \ge 1-\frac{7}{8} = \frac{1}{8}.
\end{align*}
We further have
\begin{align*}
&\quad\frac{(1-R)(p-q)}{(R+\widetilde{p})^2}\left(R+\widetilde{p}\left(1-R-\widetilde{p}\right)^{h-1}\right)\nonumber\\
&\ge \frac{(1-R)(p-q)}{(R+\widetilde{p})^2}\left(R+\frac{\widetilde{p}}{8}\right)\ge \frac{(1-R)(p-q)(R+\widetilde{p})}{8(R+\widetilde{p})^2}\nonumber\\
&=\frac{(1-R)(p-q)}{8(R+\widetilde{p})}\ge\frac{(1-R)(p-q)}{8(1+c)\max\{\frac{1}{H},R\}}\ge\frac{(1-R)(p-q)}{14\max\{\frac{1}{H},R\}}.
\end{align*}
\end{itemize}
Summarizing the two cases above, we obtain the following inequality:
\begin{align*}
\frac{(1-R)(p-q)}{(R+\widetilde{p})^2}\left(R+\widetilde{p}\left(1-R-\widetilde{p}\right)^{h-1}\right)
\ge \frac{(1-R)(p-q)}{50}\min\left\{H,\frac{1}{R}\right\}.
\end{align*}
Inserting into \eqref{eq:proof-thm-lower-temp-9}, we have
\begin{align*}
\widetilde{V}_{h}(0) - \widetilde{V}_{h-1}(0) 
\ge\frac{c(1-R)(p-q)}{3200}\min\left\{H,\frac{1}{R}\right\} \left\|\widehat{\pi}_h(\cdot|0)-{\pi}_h^{\star,\theta}(\cdot|0)\right\|_1,\quad h\le \frac{31H}{32}.
\end{align*}
Combining with \eqref{eq:proof-thm-lower-temp-10}, we have
\begin{align}
V_1^{\star,R}(0) - V_1^{\widehat{\pi},R}(0) 
&\overset{(i)}{\ge} \sum_{h=1}^{\frac{31H}{32}}\left(\widetilde{V}_{h} - \widetilde{V}_{h-1}\right) 
\ge \frac{c(1-R)(p-q)}{3200}\min\left\{H,\frac{1}{R}\right\} \sum_{h=1}^{\frac{31H}{32}}\left\|\widehat{\pi}_h(\cdot|0)-{\pi}_h^{\star,\theta}(\cdot|0)\right\|_1\nonumber\\
&\overset{(ii)}{\ge}\frac{c(1-R)(p-q)H}{51200}\min\left\{H,\frac{1}{R}\right\}\overset{(ii)}{\ge} \varepsilon.
\end{align}
where (i) holds because $\widetilde{V}_{h} - \widetilde{V}_{h-1}\ge 0$ for any $h$ (c.f. \eqref{eq:proof-thm-lower-temp-11}), (ii) holds because $\sum_{h=1}^{\frac{31H}{32}}\left\|\widehat{\pi}_h(\cdot|0)-{\pi}_h^{\star,\theta}(\cdot|0)\right\|_1 \ge \left\|\widehat{\pi}_h(\cdot|0)-{\pi}_h^{\star,\theta}(\cdot|0)\right\|_1 - \frac{H}{16} \ge \frac{H}{16}$,
and (iii) holds as long as $c_1 \ge \frac{51200}{c(1-c_R)}$.

\paragraph{Proof of \eqref{eq:proof-thm-lower-temp-6}.}

It suffices to prove that
\begin{align*}
\prod_{j=1}^{h-1} A_j^{\star}\left(A_h^{\star}-A_h^{\widehat{\pi}}\right)\left(\sum_{h'=h+1}^{H}\prod_{j=h+1}^{h'-1}A_j^{\widehat{\pi}}r\right)
\ge \prod_{j=1}^{h-1} A_j^{\star}\left(A_h^{\star}-A_h^{\widehat{\pi}}\right)\left(\sum_{h'=h+1}^{H}\prod_{j=h+1}^{h'-1}A_j^{\star}r\right).
\end{align*}
We remind that all entries in matrix $A_j^{\star}$ is positive.
Thus it suffices to prove that
\begin{align}\label{eq:proof-thm-lower-temp-14}
\left(A_h^{\star}-A_h^{\widehat{\pi}}\right)\left(\sum_{h'=h+1}^{H}\prod_{j=h+1}^{h'-1}A_j^{\widehat{\pi}}r\right)
\ge \left(A_h^{\star}-A_h^{\widehat{\pi}}\right)\left(\sum_{h'=h+1}^{H}\prod_{j=h+1}^{h'-1}A_j^{\star}r\right).
\end{align}
To proceed, we remind that
\begin{align}\label{eq:proof-thm-lower-temp-15}
A_h^{\star}-A_h^{\widehat{\pi}}=
(\widetilde{p} - \widetilde{z}_h^{\widehat{\pi}})\left[
\begin{array}{cc}
-1 & 1\\
0 & 0
\end{array}
\right].
\end{align}
For convenience, we introduce two new notations:
\begin{align*}
x_{k} = \sum_{h'=k}^{H}\prod_{j=k}^{h'-1}A_j^{\star}r,\quad \widehat{x}_{k} = \sum_{h'=k}^{H}\prod_{j=k}^{h'-1}A_j^{\widehat{\pi}}r,\quad h+1\le k\le H.
\end{align*}
One can easily check that $x_k$ and $\widehat{x}_k$ satisfy the following recursive formula:
\begin{align}\label{eq:proof-thm-lower-temp-12}
x_k = r + A_k^{\star}x_{k+1},\quad \widehat{x}_k = r + A_k^{\widehat{\pi}}\widehat{x}_{k+1},\quad x_{H} = r, \quad \widehat{x}_H = r.
\end{align}
Combining with \eqref{eq:proof-thm-lower-temp-15}, proving \eqref{eq:proof-thm-lower-temp-14} is equivalent to showing that
\begin{align}\label{eq:proof-thm-lower-temp-13}
\widehat{x}_k(1) - \widehat{x}_k(0) \ge x_k(1) - x_k(0)\ge 0,\quad \forall h+1\le k\le H.
\end{align}
We shall establish this by mathematical induction.
Notice that \eqref{eq:proof-thm-lower-temp-13} holds for $k=H$ due to \eqref{eq:proof-thm-lower-temp-12}.
Assume that $\widehat{x}_{k+1}(1) - \widehat{x}_{k+1}(0) \ge x_{k+1}(1) - x_{k+1}(0)$. 
According to \eqref{eq:proof-thm-lower-temp-12}, we have
\begin{align*}
\widehat{x}_{k}(1) - \widehat{x}_{k}(0) &= 1 + R\widehat{x}_{k+1}(0) + (1-R)\widehat{x}_{k+1}(1) - (1-\widetilde{z}_k^{\widehat{\pi}})\widehat{x}_{k+1}(0) - \widetilde{z}_k^{\widehat{\pi}}\widehat{x}_{k+1}(1)\nonumber\\
&=1 + (1-R-\widetilde{z}_k^{\widehat{\pi}})(\widehat{x}_{k+1}(1) - \widehat{x}_{k+1}(0))\nonumber\\
&\ge 1 + (1-R-\widetilde{p})({x}_{k+1}(1) - {x}_{k+1}(0)).
\end{align*}
Moreover, by using \eqref{eq:proof-thm-lower-temp-13}, we have
\begin{align*}
{x}_{k}(1) - {x}_{k}(0) &= 1 + R{x}_{k+1}(0) + (1-R){x}_{k+1}(1) - (1-\widetilde{p}){x}_{k+1}(0) - \widetilde{p}{x}_{k+1}(1)\nonumber\\
&=1 + (1-R-\widetilde{p})({x}_{k+1}(1) - {x}_{k+1}(0))\ge 0.
\end{align*}
In comparison, it is obvious that
$
\widehat{x}_{k}(1) - \widehat{x}_{k}(0)\ge{x}_{k}(1) - {x}_{k}(0).
$
Thus we complete the proof.
% this connection established between the policy $\widehat{\pi}$ and its sub-optimality gap, we can now proceed to build an estimate for $\theta$. Here, we denote $\mathbb{P}_\theta$ as the probability distribution when the MDP is $\mathcal{M}_\theta$, where $\theta$ can take on values in the set $\Theta$.

\bibliographystyle{iclr2025_conference}
\bibliography{refs}

\end{document}